\pgfplotsset{compat=newest}
\crefname{hypothesis}{Hypothesis}{Hypotheses}
\newtheorem{assumptions}{\textbf{Assumptions}}
\Crefname{ALC@unique}{Line}{Lines}
\newcommand{\zp}{Z_{\rm{p}}}
\newcommand{\zl}{Z_{\rm{ls}}}
\newcommand{\sZ}{\mathsf{Z}}
\newcommand{\xii}{\zeta}
\newcommand{\noise}{\xi}
\newcommand{\nil}{\nu_{\rm {ls}}}
\newcommand{\nilo}{\nu_{\rm {ls,1}}}
\newcommand{\nilt}{\nu_{\rm {ls,2}}}
\newcommand{\nip}{\nu_{\rm {p}}}
\newcommand{\nipo}{\nu_{\rm {p,1}}}
\newcommand{\nipt}{\nu_{\rm {p,2}}}
\newcommand{\bbR}{\mathbb R}
\newcommand{\la}{\langle}
\newcommand{\ra}{\rangle}
\newcommand{\cL}{\mathcal{L}}
\newcommand{\cC}{\mathcal{C}}
\newcommand{\cB}{\mathcal{B}}
\newcommand{\cP}{\mathcal{P}}
\newcommand{\cA}{\mathcal{A}}
\newcommand{\cH}{\mathcal{H}}
\newcommand{\one}{\mathds{1}}
\newcommand{\Id}{\mathrm{Id}}
\newcommand{\Jk}{\mathsf{J}}
\newcommand{\Jp}{\mathsf{J}_{\rm p}}
\newcommand{\Jl}{\mathsf{J}_{\rm ls}}
\newcommand{\Jkr}{\mathsf{J}_{\rm k}}
\newcommand{\Phip}{\Phi_{\rm p}}
\newcommand{\Phipo}{\Phi_{\rm p,1}}
\newcommand{\Phipt}{\Phi_{\rm p,2}}
\newcommand{\Phil}{\Phi_{\rm ls}}
\newcommand{\Philo}{\Phi_{\rm ls,1}}
\newcommand{\Philt}{\Phi_{\rm ls,2}}
\newcommand{\rom}{\rho^-}
\newcommand{\rop}{\rho^+}
\newcommand{\iid}{\stackrel{\mathrm{iid}}{\sim}}
\newcommand{\rn}{r_n}
\definecolor{darkred}{rgb}{0.6,0.1,0.1}
\definecolor{darkgreen}{rgb}{0.1,0.6,0.1}
\definecolor{darkblue}{rgb}{0.1,0.1,0.6}
\definecolor{darkyellow}{rgb}{0.8,0.8,0}
\def\mt#1{#1} 
\definecolor{mygreen}{rgb}{0.1,0.75,0.2}
\newcommand{\te}{\textrm}
\newcommand{\tacka}{\,\cdot\,}
\newcommand{\R}{\mathbb{R}}
\newcommand{\bbE}{\mathbb{E}}
\newcommand{\bbN}{\mathbb{N}}
\newcommand{\N}{\mathbb{N}}
\newcommand{\X}{\mathcal{X}}
\newcommand{\VE}{\varepsilon}
\newcommand{\eps}{\varepsilon}
\newcommand{\dd}{ \mathrm{d}}
\newcommand{\be}{\begin{equation}}
\newcommand{\en}{\end{equation}}
\DeclareMathOperator*{\muesssup}{\mu\text{-}ess\,sup}
\DeclareMathOperator*{\Glim}{\Gamma\text{-}\lim}
\DeclareMathOperator*{\toweak}{\Rightarrow}
\newcommand{\dist}{\mathrm{dist}}
\newcommand{\Lip}{\mathrm{Lip}}
\newcommand{\disP}{A}
\newcommand{\ctsP}{\mathcal{A}}
\title{Large Data and Zero Noise Limits of Graph-Based\\Semi-Supervised Learning Algorithms
}
\author{Matthew M. Dunlop
\thanks{Computing and Mathematical Sciences, Caltech, Pasadena, CA 91125 (\email{mdunlop@caltech.edu })}
\and
Dejan Slep\v{c}ev
\thanks{Department of Mathematical Sciences, Carnegie Mellon University, 
Pittsburgh, PA 15213 (\email{slepcev@math.cmu.edu})} 
\and
Andrew M. Stuart
\thanks{Computing and Mathematical Sciences, Caltech, Pasadena, CA 91125 (\email{astuart@caltech.edu }).}
\and
Matthew Thorpe
\thanks{Department of Applied Mathematics and Theoretical Physics, University of Cambridge, Cambridge, CB3 0WA (\email{m.thorpe@maths.cam.ac.uk})} 
}
\begin{document}
\date{today}
\maketitle

\begin{abstract}
Scalings in which the graph Laplacian approaches a differential operator
in the large graph limit are used to develop understanding of a 
number of algorithms for semi-supervised learning; in particular the
extension, to this graph setting, of the probit algorithm, 
level set and kriging methods, are studied. Both optimization and
Bayesian approaches are considered, based around a regularizing quadratic
form found from an affine transformation of the Laplacian, raised
to a, possibly fractional, exponent. Conditions on the parameters
defining this quadratic form are identified under which well-defined
limiting continuum analogues of the optimization and Bayesian 
semi-supervised learning problems may be found, thereby shedding
light on the design of algorithms in the large graph setting.
The large graph limits of the optimization formulations
are tackled through $\Gamma-$convergence,
using the recently introduced $TL^p$ metric. The small labelling noise   
\mt{limits} of the Bayesian formulations are also identified, and contrasted with
pre-existing harmonic function approaches to the problem.
\end{abstract}

\begin{keywords}
Semi-supervised learning, Bayesian inference, higher-order fractional Laplacian, asymptotic consistency, kriging. 
\end{keywords}

\begin{AMS}
  62G20, 62C10, 62F15, 49J55
 

 \noindent

\noindent

\end{AMS}


\section{Introduction}
\label{sec:I}

\subsection{Context}
\label{ssec:B}
This paper is concerned with the semi-supervised learning problem of
determining labels on an entire set of (feature) vectors $\{x_j\}_{j \in Z}$,
given (possibly noisy) labels $\{y_j\}_{j \in Z'}$ on a subset of feature vectors
with indices  $j \in Z' \subset Z$. 
To be concrete we will assume that the $x_j$ are elements of $\R^d$, 
$d \ge 2$, and
consider the binary classification problem in which the $y_j$ are elements
of $\{\pm 1\}$. Our goal is to characterize algorithms
for this problem in the large data limit where $n=|Z| \to \infty$; 
additionally we will study the limit
where the  noise in the label data disappears. Studying these
limits yields insight into the
classification problem and algorithms for it.

Semi-supervised learning as a subject has been developed primarily over the
last two decades and the references \cite{zhu2005semi,zhuthesis} provide 
an excellent source for the historical context. Graph based methods proceed by
forming a graph with $n$ nodes $Z$, 
and use the \mt{unlabeled} data $\{x_j\}_{j \in Z}$ to provide an
$n \times n$ weight matrix $W$ quantifying the affinity of the nodes of the graph 
with one another. The labelling information on $Z'$ is then spread to
the whole of $Z$, exploiting these affinities. In the absence of labelling
information we obtain the problem of unsupervised learning; 
for example the spectrum of the graph Laplacian $L$ forms the basis of
widely used spectral clustering methods \cite{belkin2002laplacian,ng2002spectral,von2007tutorial}. 
Other approaches are combinatorial, and largely focussed on graph cut methods 
\cite{blum2001learning,boykov2001fast,shi2000normalized}. 
However relaxation and approximation
are required to beat the combinatorial hardness of these 
problems \cite{madry2010fast}
leading to a range of methods based on Markov random fields \cite{li2012markov} and total variation relaxation \cite{SzlamBresson}.
In \cite{zhuthesis} a number of new approaches were introduced, including label
propagation and the generalization of kriging, or Gaussian process regression
\cite{wahba1990spline}, to the graph setting \cite{zhu2003semi}. These regression methods
opened up new approaches to the problem, but were limited in scope because the
underlying real-valued Gaussian process was linked directly to the 
categorical label data which is (arguably) not natural from a modelling
perspective; see \cite{neal} for a discussion of the
distinctions between regression and classification. The logit and probit methods of classification \cite{williams1996gaussian}
side-step this problem by postulating a link function which relates
the underlying Gaussian process to the categorical data, amounting to a model
linking the \mt{unlabeled} and \mt{labeled} data. The support vector machine
\cite{bishop} makes a similar link, but it lacks a natural probabilistic interpretation.

The probabilistic formulation is important when it is desirable to equip the
classification with measures of uncertainty.  Hence, we will concentrate on 
the probit algorithm in this paper, and variants on it, as it has 
a probabilistic formulation. The statement of the probit algorithm in the context of graph based 
semi-supervised learning may be found in \cite{UQ17}.
An approach bridging the combinatorial and Gaussian process approaches is the
use of Ginzburg-Landau models which work with real numbers but use a penalty to
constrain to values close to the range of the label data $\{\pm 1\}$; these
methods were introduced in \cite{bertozzi2012diffuse}, large data limits studied in~\cite{cristoferi18,thorpe17,vangennip12a}, and given a probabilistic
interpretation in \cite{UQ17}. Finally we mention the Bayesian level set
method. This approach takes the idea of using level sets for inversion in the
class of interface problems \cite{BO05} and gives it a probabilistic formulation
which has both theoretical foundations and leads to efficient 
algorithms \cite{iglesias2015bayesian}; classification may be viewed as an interface
problem on a graph (a graph cut is an interface for example) and thus the
Bayesian level set method is naturally extended to this setting as shown in
\cite{UQ17}. As part of this paper we will show that the probit and
Bayesian level set methods are closely related.

A significant challenge for the field, both in terms of algorithmic development,
and in terms of fundamental theoretical understanding, is the setting in which
the volume of \mt{unlabeled} data is high, relative to the volume of \mt{labeled} data.
One way to understand this setting is through the study of large data limits in
which $n=|Z| \to \infty.$ This limit is studied in \cite{von2008consistency}, and
was addressed more recently under different assumptions in \cite{trillos}. Both
papers assume that the \mt{unlabeled} data is drawn i.i.d. from a measure with
Lebesgue density on a subset of $\R^d$, but the assumptions on graph construction
differ: in \cite{von2008consistency} the graph bandwidth is fixed as $n \to  \infty$ resulting in the limit of the graph Laplacian being a non-local operator, whilst in \cite{trillos} the bandwidth vanishes in the limit resulting in the limit being a weighted Laplacian (divergence form elliptic operator).

In \cite{nadler09} it is demonstrated that algorithms
based on use of the discrete Dirichlet energy computed from the graph 
Laplacian can behave poorly for $d \ge 2$, in the large data limit,
if they attempt pointwise labelling. 
In \cite{belkin09} it is argued that use of quadratic forms
based on powers $\alpha>\frac{d}{2}$ of the graph Laplacian can ameliorate this problem.
Our work, which studies a range of algorithms all based on optimization
or Bayesian formulations exploiting quadratic forms, will take this body of work
considerably further, proving large data limit theorems for a variety of 
algorithms, and showing the role of the
parameter $\alpha$ in this 
infinite data limit. In doing so we shed light
on the difficult question of how to scale and tune algorithms for graph based
semi-supervised learning; in particular we state limit theorems
of various kinds which require, respectively, either $\alpha>\frac{d}{2}$
or $\alpha>d$ to hold.  
We also study the small noise limit and show how both the probit and
Bayesian level set algorithms coincide and, furthermore, provide
a natural generalization of the harmonic functions approach of
\cite{zhu2003semi,zhu2003semib}, a generalization which is arguably more natural
from a modeling perspective.

Our large data limit theorems concern the maximum a posteriori (MAP)
estimator rather than a Bayesian posterior distribution.
However two remarkable recent papers \cite{trillos2017continuum,trillos2017consistency} demonstrate a methodology for proving
limit theorems concerning Bayesian posterior distributions themselves,
exploiting the variational characterization of Bayes theorem;
extending the work in those papers to the algorithms considered
in this paper would be of great interest.

\subsection{Our Contribution}
\label{ssec:C}

We derive a canonical continuum  inverse problem which characterizes graph based semi-supervised learning:  find function $u: \Omega \subset \R^d \mapsto \R$ from knowledge of 
${\rm sign}(u)$ on $\Omega' \subset \Omega$.
\footnote{ We note that throughout the paper $\Omega$ is the physical domain, and not the set of events of a probability space.  }
 The latent variable $u$ 
characterizes the \mt{unlabeled} data and its sign is the \mt{labeling} information. 
This highly ill-posed inverse problem is potentially solvable because of the 
very strong prior information provided by the \mt{unlabeled} data; we
characterize this information via a mean zero Gaussian process prior 
on $u$ with covariance operator $\cC \propto (\cL+\tau^2 I)^{-\alpha}.$
The operator $\cL$ is a weighted Laplacian found as a limit of the graph 
Laplacian, and as a consequence depends on the distribution of the
\mt{unlabeled} data.

In order to derive this canonical inverse problem
we study the probit and Bayesian
level set algorithms for semi-supervised learning. 
We build on the large \mt{unlabeled} 
data limit setting of \cite{trillos}.
In this setting there is an intrinsic scaling parameter $\eps_n$ 
that characterizes the length scale on which edge weights 
between nodes are significant; the analysis 
identifies a lower bound on $\eps_n$ which is necessary in 
order for the graph to remain connected in the large data limit
and under which the graph Laplacian $L$ converges to a differential
operator $\cL$ of weighted Laplacian form. The work uses $\Gamma-$convergence
in the $TL^2$ optimal transport metric, introduced in \cite{trillos},
and proves convergence of the quadratic form defined by $L$ 
to one defined by $\cL.$
We make the following contributions which significantly extend this work
to the semi-supervised learning setting.

\begin{itemize}
\item We prove $\Gamma-$convergence in $TL^2$ of the quadratic form 
defined by $(L+\tau^2 I)^{\alpha}$ to that defined 
by $(\cL+\tau^2 I)^{\alpha}$ and identify parameter choices in
which the limiting Gaussian
measure with covariance $(\cL+\tau^2 I)^{-\alpha}$ is well-defined.
See Theorems \ref{thm:LimitThmDir:LimitThmDir}, \ref{t:g} and
Proposition \ref{t:gold}.

\item We introduce large data limits of the probit and Bayesian level
set  problem formulations in which the volume of 
\mt{unlabeled} data $n=|Z| \to \infty$, distinguishing between the cases where 
the volume of \mt{labeled} data $|Z'|$ is fixed and where $|Z'|/n$ is fixed.  
See section \ref{sec:F} for the function space analogues of the
graph based algorithms introduced in section \ref{sec:G}.

\item We use the theory of $\Gamma-$convergence to derive a continuum limit of the probit algorithm when employed in MAP estimation mode; this theory 
demonstrates the need for $\alpha>\frac{d}{2}$ and an upper bound on $\eps_n$ in the large data limit where the volume of
\mt{labeled} data $|Z'|$ is fixed.
See Theorems \ref{thm:LimitThmOpt:Probit:pr} and
\ref{thm:LimitThmOpt:Probit:pr2neg}

\item We use the properties of Gaussian measures on function spaces to write down
well defined limits of the probit and Bayesian level set algorithms,
when employed in Bayesian probabilistic mode, to determine the posterior distribution
on labels given observed data; this theory demonstrates the need for $\alpha>\frac{d}{2}$
in order for the limiting probability distribution to be meaningful for both
large data limits; indeed, depending on the geometry of the domain from
which the feature vectors are drawn, it may require $\alpha>d$
for the case where the volume of \mt{labeled} data is fixed.
See Theorem \ref{t:g} and Proposition \ref{t:gold} for these
conditions on $\alpha$, and
for details of the limiting probability measures
see equations \eqref{eq:probprob}, \eqref{def_nipo},
\eqref{eq:probB} and \eqref{eq:probBc}.

\item We show that the probit and Bayesian level set \mt{methods} have a common
Bayesian inverse problem limit, mentioned above,
by studying their weak limits as noise levels on the
\mt{labeled} data tends to zero.
See Theorems \ref{t:add} and \ref{thm:ltp:probit:ZeroNoise}.

\item We provide numerical experiments which illusrate the
large graph limits introduced and studied in this paper; see
section \ref{sec:N}.

\end{itemize}

\subsection{Paper Structure}
\label{ssec:P}

In section \ref{sec:Q} we study a family of quadratic forms which arise naturally
in all the algorithms that we study. By means of the $\Gamma-$convergence techniques
pioneered in \cite{trillos} we show that these quadratic forms have a limit
defined by families of differential operators in which the finite graph  
parameters appear in an explicit and easily understood fashion.
Section \ref{sec:G} is devoted to the definition of the three graph
based algorithms that we study in this paper: the probit and Bayesian level set 
algorithms, and the graph analogue of kriging. In section \ref{sec:F} we write down the function
space limits of these algorithms, obtained when the volume $n$ of \mt{unlabeled}
data tends to infinity, and in the case of the maximum a posteriori
estimator for probit use $\Gamma-$convergence to study 
large graph limits rigorously; we also show that the 
probit and Bayesian level set algorithms have
a common zero noise limit. Section \ref{sec:N} contains numerical experiments
for the function space limits of the algorithms, in both optimization (MAP)
and sampling (fully Bayesian MCMC) modalities. 
We conclude in section \ref{sec:C} with a summary and
directions for future research. All proofs are given in the Appendix, section
\ref{sec:A}. This choice is made in order to separate the form and 
implications of 
the theory from the proofs; both the statements and proofs comprise the 
contributions of this work, but since they may be of interest to different 
readers they are separated, by use of the Appendix.

\section{Key Quadratic Form and Its Limits}
\label{sec:Q}

\subsection{Graph Setting}
\label{ssec:G}

From the \mt{unlabeled} data $\{x_j\}_{j=1}^n$ we construct a weighted graph 
$G=(Z,W)$ where $Z=\{1, \cdots, n\}$  are the vertices of the graph and $W$
the edge weight matrix; $W$ is assumed to have entries $\{w_{ij}\}$ between nodes
$i$ and $j$ given by
\[ w_{ij}=\eta_{\eps}(|x_i-x_j|). \]
We will discuss \mt{the} choice of the function $\eta_\eps: \bbR \mapsto \bbR^+$ in
detail below; heuristically it should be thought of as proportional to
a mollified Dirac mass, or a characteristic function of a small interval.
From $W$ we construct the graph Laplacian as follows.
We define the diagonal matrix $D={\rm diag}\{d_{ii}\}$ with entries
$d_{ii} = \sum_{j \in Z} w_{ij}.$ 
We can then define the
unnormalized graph Laplacian $L=D-W$. 
Our results may be generalized to the normalized graph 
Laplacian $L=I-D^{-\frac12}WD^{-\frac12}$ and we will comment on this in the
conclusions.

\subsection{Quadratic Form}
\label{ssec:Q}

We view $u: Z \mapsto \R$ as a vector
in $\R^n$ and define the quadratic form
\[  \langle u, Lu \rangle=\frac12\sum_{i,j \in Z} w_{ij}|u(i)-u(j)|^2; \]
here $\langle \cdot, \cdot \rangle$ denotes the standard Euclidean
inner-product on $\R^n$.
This is the discrete Dirichlet energy defined via the graph Laplacian $L$ \mt{which}
appears as a basic quantity in many unsupervised and semi-supervised learning 
algorithms.  In this paper our interest focusses on forms based on powers of $L$:
\[ J^{(\alpha,\tau)}_n(u) = \frac{1}{2n} \langle u, \disP^{(n)} u\rangle \]
where, for $\tau\geq 0$ and $\alpha>0$,
\begin{equation}
\label{eq:A}
\disP^{(n)}=(s_n L+\tau^2 I)^{\alpha}.
\end{equation}
The sequence parameters $s_n$ will be chosen appropriately to ensure that the quadratic form $J^{(\alpha,\tau)}_n(u)$ converges to a well-defined limit as $n \to \infty.$

In addition to working in a set-up which results in a well-defined limit, we 
will also ask that this limit results in a quadratic form defined by a differential
operator. This, of course, requires some form of localization  and we will
encode this as follows: we will assume that 
$\eta_{\eps}(\cdot)=\eps^{-d}\eta(\cdot/\eps)$, inducing
a Dirac mass approximation as $\eps \to 0$; later we will discuss how to
relate $\eps$ to $n$. For now we state the assumptions on $\eta$ that
we employ throughout the paper: 

\begin{assumptions}[on $\eta$]  
\label{a:eta}
The edge weight profile function $\eta$ satisfies:
\begin{itemize}
\addtolength{\itemsep}{4pt}
\addtolength{\itemindent}{24pt}
\item[(K1)] $\eta(0)>0$ and $\eta(\cdot)$ \mt{is} continuous at 0; 
\item[(K2)] $\eta$ \mt{is} non-increasing;
\item[(K3)] $\int_0^{\infty} \eta(r)r^{d+1}dr<\infty$;
\end{itemize}
\end{assumptions}

\mt{
\begin{rem}
The prototypical example for $\eta$ is $\eta(t) = 1$ if $|t|<1$ and $\eta(t) = 0$ otherwise.
In this example the graph has edges between any two nodes closer than $\eps$; this is often referred to as the \textit{random geometric graph}.
Clearly this choice of $\eta$ satisfies Assumptions~\ref{a:eta}.
\end{rem}
}

Notice that assumption  (K3) implies that 
\begin{equation} \label{sig_bet}
 \sigma_{\eta}:= \frac{1}{d} \int_{\bbR^d} \eta(|h|)|h|^2 dh<\infty \quad \te{ and } \quad \beta_{\eta}:=\int_{\bbR^d} \eta(|h|) dh<\infty.
\end{equation}
A notable fact about the limits that we study in the remainder of the paper is that
they depend on $\eta$ only through the constants $\sigma_{\eta}, \beta_{\eta}$, provided
Assumptions \ref{a:eta} \mt{holds} and $\eps=\eps_n$ and $s_n$ are
chosen as appropriate functions of $n$.

\subsection{Limiting Quadratic Form}
\label{ssec:L}

\hspace{0.05in}

The limiting quadratic form is defined on an  open and bounded set  $\Omega \subset \bbR^d$.
\begin{assumptions}[on $\Omega$] \label{a:omega}
We assume that $\Omega$ is a connected, open and bounded subset of $\bbR^d$. We also assume that $\Omega$ has $C^{1,1}$boundary.
\footnote{The assumption that $\Omega$ is connected is not essential but makes stating the results simpler. We remark that a number of the results, and in particular the convergence of Theorem \ref{thm:LimitThmDir:LimitThmDir}, hold if we only assume that the boundary of $\Omega$ is Lipschitz. We need the stronger assumption in order to be able to employ elliptic regularity to characterize functions in fractional Sobolev spaces, see Section \ref{subsec:fsp} and Lemma \ref{lem:HiscH}; this is essential to be able to define Gaussian measures on function spaces,  and therefore needed to define a Bayesian approach in which uncertainty of 
classifiers may be estimated.}
\end{assumptions}

\begin{assumptions}[on density $\rho$] \label{a:rho}
 We assume that $n$ feature vectors $x_j \in \Omega$ are sampled i.i.d. from a probability measure $\mu$ supported on $\Omega$ with smooth Lebesgue density $\rho$ bounded above and below by
finite strictly positive constants $\rho^{\pm}$ uniformly on ${\overline \Omega}$.
\end{assumptions}
\medskip

We index the data by $Z=\{1, \cdots, n\}$ and let $\Omega_n = \{x_i\}_{i\in Z}$ be the data set.
This data set induces the empirical measure 
\[ \mu_n = \frac{1}{n}\sum_{i\in Z} \delta_{x_i}. \]
Given a measure $\nu$ on $\Omega$ we define the weighted Hilbert space
$L^2_{\nu}=L^2_{\nu}(\Omega;\R)$ with inner-product
\begin{equation}
\label{eq:IP}
\langle a,b \rangle_{\nu}=\int_{\Omega} a(x)b(x)\nu(dx)
\end{equation}
and \mt{the} induced norm defined by the identity $\|\cdot\|_{L^2_{\nu}}^2= \langle \cdot, \cdot \rangle_{\nu}.$ Note
that with these definitions we have 
\[ J^{(\alpha,\tau)}_n : L^2_{\mu_n} \mapsto [0,+\infty), \quad \quad J^{(\alpha,\tau)}_n(u) = \frac{1}{2} \langle u, \disP^{(n)} u\rangle_{\mu_n}. \]
In what follows we apply a form of $\Gamma-$convergence to establish that
for large $n$ the quadratic form $J^{(\alpha,\tau)}_n$ is well approximated
by the limiting quadratic form
\[ J^{(\alpha,\tau)}_\infty: L^2_{\mu} \mapsto [0,+\infty)\cup\{+\infty\}, \quad \quad J^{(\alpha,
\tau)}_\infty(u) = \frac{1}{2} \langle u, \ctsP u\rangle_{\mu}. \]
Here $\mu$ is the measure on $\Omega$ with density $\rho$, and we
define the $L^2_{\mu}$ self-adjoint differential operator $\cL$ by
\begin{equation}
\label{eq:Background:Cont:cL}
\cL u = - \frac{1}{\rho} \nabla \cdot (\rho^2 \nabla u), \quad x \in \Omega,
\quad\quad\quad
 \frac{\partial u}{\partial n} =0,\quad x \in \partial \Omega. 
\end{equation}
The operator $\ctsP$ is then defined by $\ctsP = (\cL+\tau^2 I)^\alpha.$ 

We may now relate the quadratic forms defined by $A^{(n)}$ and $\ctsP$. The
$TL^2$ topology is introduced in \cite{trillos} and defined in the Appendix
section \ref{sssec:Background:Passage:TLp} for convenience.
The following theorem is proved in section~\ref{ssec:LQF}.

\begin{theorem}
\label{thm:LimitThmDir:LimitThmDir}
Let Assumptions \ref{a:eta}--\ref{a:rho}  hold. Let $\alpha>0$, $\{\eps_n\}_{n=1,2,\dots}$ be a positive sequence converging to zero, and such that
\begin{equation}  \label{eq:LimitThmDir:epsSca}
\begin{alignedat}{2}
\lim_{n \to \infty} \Bigl(\frac{\log n}{n}\Bigr)^{1/d} \frac{1}{\eps_n}& =0  \quad \quad  \text{if } d && \geq 3, \\ 
\lim_{n \to \infty} \Bigl(\frac{\log n}{n}\Bigr)^{1/2} \frac{(\log n)^{\frac14}}{\eps_n} \; & =0 \quad \quad  \text{if } d &&= 2, 
\end{alignedat}
\end{equation}
and assume that the scale factor $s_n$ is defined by
\begin{equation} \label{eq:sn}
s_n = \frac{2}{\sigma_\eta n \eps_n^2}.
\end{equation}
Then, with probability one, we have
\begin{enumerate}
\item $\Glim_{n\to\infty} J_n^{(\alpha,\tau)} = J_\infty^{(\alpha,\tau)}$ with respect to the $TL^2$ topology;
\item if $\tau=0$, any sequence $\{u_n\}$ with $u_n:\Omega_n\to \bbR$ satisfying $\sup_n \|u_n\|_{L^2_{\mu_n}} < \infty$ and $\sup_{n\in\bbN} J_n^{(\alpha,0)}(u_n)<\infty$ is pre-compact in the $TL^2$ topology;
\item if $\tau>0$, any sequence $\{u_n\}$ with $u_n:\Omega_n\to \bbR$ satisfying $\sup_{n\in\bbN} J_n^{(\alpha,\tau)}(u_n)<\infty$ is pre-compact in the $TL^2$ topology.
\end{enumerate}
\end{theorem}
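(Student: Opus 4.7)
The plan is to reduce both the $\Gamma$-convergence and the compactness statements to two inputs: (a) the $TL^2$ $\Gamma$-convergence of the Dirichlet energy in the $\alpha=1$, $\tau=0$ case, established in \cite{trillos}, together with its companion compactness; and (b) the almost-sure convergence of eigenvalues and $L^2_{\mu_n}$-orthonormal eigenfunctions of $s_n L$ to those of $\cL$, deduced from (a) via the min--max characterization of eigenvalues. Write $(\lambda_k^{(n)},\psi_k^{(n)})$ for the spectrum of $s_n L$ and $(\lambda_k,\psi_k)$ for the spectrum of $\cL$, so that spectral convergence reads $\lambda_k^{(n)}\to\lambda_k$ and $\psi_k^{(n)}\to\psi_k$ in $TL^2$, for each fixed $k$, almost surely. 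Everything else is a spectral manipulation on top of this input.

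Given these pieces, expand in the respective spectral bases to obtain
\[
J_n^{(\alpha,\tau)}(u_n)=\tfrac12\sum_{k}\bigl(\lambda_k^{(n)}+\tau^2\bigr)^{\alpha}\hat u_{n,k}^{2}, \qquad J_\infty^{(\alpha,\tau)}(u)=\tfrac12\sum_{k}(\lambda_k+\tau^2)^{\alpha}\hat u_{k}^{2},
\]
where $\hat u_{n,k}=\langle u_n,\psi_k^{(n)}\rangle_{\mu_n}$ and $\hat u_k=\langle u,\psi_k\rangle_{\mu}$. For the liminf inequality, note that $TL^2$ convergence $u_n\to u$ combined with $TL^2$ convergence of each $\psi_k^{(n)}$ yields $\hat u_{n,k}\to\hat u_k$ for every $k$ by continuity of the inner product along jointly $TL^2$-converging pairs; Fatou's lemma applied to counting measure on $k$ then gives $\liminf_n J_n^{(\alpha,\tau)}(u_n)\ge J_\infty^{(\alpha,\tau)}(u)$, and trivially so when the limit is $+\infty$. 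For the recovery sequence, fix $u$ with $J_\infty^{(\alpha,\tau)}(u)<\infty$ and set $u_n^{(K)}:=\sum_{k\le K}\hat u_k\,\psi_k^{(n)}$; spectral convergence yields $u_n^{(K)}\to u^{(K)}:=\sum_{k\le K}\hat u_k\psi_k$ in $TL^2$ and $J_n^{(\alpha,\tau)}(u_n^{(K)})\to J_\infty^{(\alpha,\tau)}(u^{(K)})$, and a diagonal extraction together with monotone convergence of the spectral sums as $K\to\infty$ produces the desired recovery sequence.

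For the compactness claims, first obtain a uniform bound on $\|u_n\|_{L^2_{\mu_n}}$: when $\tau>0$ this follows from $\tau^{2\alpha}\|u_n\|_{L^2_{\mu_n}}^{2}\le 2J_n^{(\alpha,\tau)}(u_n)$, and when $\tau=0$ it is part of the hypothesis. For each fixed $K$, the spectral projection of $u_n$ onto $\mathrm{span}\{\psi_1^{(n)},\dots,\psi_K^{(n)}\}$ lies in a finite-dimensional subspace converging in $TL^2$ to $\mathrm{span}\{\psi_1,\dots,\psi_K\}$, so after extraction this projection converges in $TL^2$. The tail is controlled by
\[
\sum_{k>K}\hat u_{n,k}^{2}\le\bigl(\lambda_{K+1}^{(n)}+\tau^2\bigr)^{-\alpha}\bigl(2J_n^{(\alpha,\tau)}(u_n)+\tau^{2\alpha}\|u_n\|_{L^2_{\mu_n}}^{2}\bigr),
\]
which tends to zero as $K\to\infty$ since $\lambda_K^{(n)}\to\lambda_K\to\infty$ by Weyl-type growth. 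A standard diagonal argument then yields pre-compactness in $TL^2$. The main obstacle is input (b): upgrading the Dirichlet-energy $\Gamma$-convergence of \cite{trillos} to uniform convergence of individual eigenpairs under the sharp bandwidth scaling \eqref{eq:LimitThmDir:epsSca}, in the almost-sure sense. This requires applying the liminf and recovery constructions of \cite{trillos} successively to orthogonal complements within the min--max formulas and bootstrapping $TL^2$ compactness of normalized Rayleigh quotients, but once this spectral consistency is in place the remainder is only routine spectral bookkeeping.
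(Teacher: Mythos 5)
Your proposal is correct and follows essentially the same route as the paper: spectral expansion of $J_n^{(\alpha,\tau)}$, liminf via convergence of the coefficients $\langle u_n,q_k^{(n)}\rangle_{\mu_n}$ (your Fatou step is equivalent to the paper's truncate-and-let-$\delta\to 0$ argument), recovery sequences by truncated spectral sums with a diagonal extraction, and compactness via a uniform $L^2_{\mu_n}$ bound plus a tail estimate using eigenvalue growth. The one ingredient you flag as the ``main obstacle''---almost-sure convergence of individual eigenvalues and eigenvectors of $s_nL$ to those of $\cL$ under the scaling \eqref{eq:LimitThmDir:epsSca}---is not re-derived in the paper but imported directly as Theorem 1.2 of \cite{trillos} (modulo the caveat, which the paper handles, that for non-simple eigenvalues only eigenprojections, not individual eigenvectors, converge along the full sequence).
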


\begin{rem}
As we discuss in section \ref{sssec:Background:Passage:GammaConv}
of the appendix, $\Gamma$-convergence and pre-compactness allow one to show that minimizers of a sequence of functionals converge to the minimizer of the limiting functional. The results of Theorem \ref{thm:LimitThmDir:LimitThmDir} provide the $\Gamma$-convergence and pre-compactness of
fractional Dirichlet energies, which are the key term of the functionals, 
such as \eqref{probit_nfun} below, that define the learning algorithms
that we study. In particular Theorem \ref{thm:LimitThmDir:LimitThmDir} enables
us to prove the convergence, in the large data limit $n \to \infty$, of 
minimizers of functionals such as \eqref{probit_nfun}  (i.e. of outcomes of 
learning algorithms), as shown in Theorem \ref{thm:LimitThmOpt:Probit:pr}.
\end{rem}

\subsection{Function Spaces} \label{subsec:fsp}

The operator $\cL$ given by (\ref{eq:Background:Cont:cL}) is uniformly elliptic as a consequence of the assumptions on $\rho$, and is self-adjoint with respect to the  inner product \eqref{eq:IP} on $L^2_{\mu}$. By standard theory, it has a discrete spectrum: $0=\lambda_1 < \lambda_2 \leq \cdots$, where the fact that $0 < \lambda_2$ uses the connectedness of the domain
and the uniform positivity of $\rho$ on the domain. Let $\varphi_i$ for $i=1,\dots$ be the associated $L^2_{\mu}$-orthonormal  eigenfunctions. They form a basis of $L^2_\mu$. 

By Weyl's law the eigenvalues of $\{\lambda_j\}_{j\geq 1}$ of $\cL$ satisfy 
$\lambda_j \asymp j^{2/d}.$
For completeness a simple proof is proved in Lemma \ref{lem:weyl}; 
the analogous  and more general results applicable to the Laplace-Beltrami operator 
may be found in, H\"ormander~\cite{Hormander}. 

\medskip
\textit{Spectrally defined Sobolev spaces.}
For $s \geq 0$ we define
\[ \cH^s(\Omega) = \Big\{u \in L^2_\mu  \::\:  \sum_{k=1}^\infty \lambda_k^{s} a_k^2 < \infty \Big\}\mt{,} \]
where $a_k = \langle u , \varphi_k \rangle_\mu$ and thus 
$u = \sum_k a_k \varphi_k$ in $L^2_\mu.$ 
We note that $\cH^s(\Omega)$ is a Hilbert space with respect to the inner product
\[  \llangle u , v  \rrangle_{s,\mu} = a_1 b_1 + \sum_{k=2}^\infty \lambda_k^{s} a_k b_k  \]
where $b_k = \langle v , \varphi_k \rangle_\mu$.
It follows from the definition that for any $s \geq 0$, $\cH^s(\Omega)$ is isomorphic to a weighted $\ell^2(\mathbb N)$ space, where the weights are formed by the sequence 
$1, \lambda_2^s, \lambda_3^s, \dots$.

In Lemma \ref{lem:HiscH} in the Appendix section \ref{app:funsp} we show that for any integer $s>0$, $\cH^s(\Omega) \subset H^s(\Omega)$ where $H^s(\Omega)$ is the standard fractional Sobolev space. More precisely we characterize $\cH^s(\Omega)$ as the set of those functions in $H^s(\Omega)$ which satisfy the appropriate boundary condition and show that the norms of $\cH^s(\Omega)$ and $ H^s(\Omega)$  are equivalent on $\cH^s(\Omega)$. 

We also note that  for any integer $s$ and $\theta \in (0,1)$ the space $\cH^{s+\theta}$ is a interpolation space between $\cH^s$ and $\cH^{s+1}$. In particular $\cH^{s+\theta} = [\cH^s, \cH^{s+1}]_{\theta,2}$, where the real interpolation space used is as in Definition 3.3 of Abels \cite{Abels}. This  identification of $\cH^s$ follows from the characterization of interpolation  spaces of weighted $L^p$ spaces  by Peetre \cite{Peetre}, as referenced by Gilbert \cite{Gilbert}.
Together these facts allow us to characterize the H\"older regularity of functions in $\cH^s(\Omega)$.

\begin{lemma} \label{lem:emb_frac}
Under Assumptions \ref{a:omega}--\ref{a:rho}, 
for all $s \geq 0$ there exists a bounded, linear, extension mapping $E: \cH^s(\Omega) \to H^{s}(\R^d)$. That is for all $f \in \cH^s(\Omega)$, $E(f)|_\Omega = f$ a.e.
Furthermore:
\begin{itemize}
\item[(i)] if $s < \frac{d}{2}$ then $\cH^s(\Omega)$ embeds continuously in $L^q(\Omega)$ for any $q \leq \frac{2d}{d-2s}$;
\item[(ii)] if $s > \frac{d}{2}$ then $\cH^s(\Omega)$ embeds continuously in $C^{0, \gamma}(\Omega)$ for any $\gamma <  \min\{ 1,  s - \frac{d}{2}\}$.
\end{itemize}
\end{lemma}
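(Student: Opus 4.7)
The plan is to build the extension operator $E$ first and then invoke classical Sobolev embeddings on the whole space. Concretely: (1) establish the extension for integer $s$ using Lemma \ref{lem:HiscH} plus a classical whole-space extension theorem; (2) upgrade to fractional $s$ by interpolation; (3) conclude via the Sobolev/Morrey embeddings for $H^s(\R^d)$.

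For integer $s \geq 0$, Lemma \ref{lem:HiscH} gives a continuous embedding $\cH^s(\Omega) \hookrightarrow H^s(\Omega)$. Since $\Omega$ has $C^{1,1}$ (hence Lipschitz) boundary by Assumption~\ref{a:omega}, Stein's total extension theorem supplies a single bounded linear operator $E_0$ that maps $H^s(\Omega)$ into $H^s(\R^d)$ simultaneously for every $s \in \bbN_0$. Composition yields the desired $E: \cH^s(\Omega) \to H^s(\R^d)$ for integer $s$. For $s = k + \theta$ with $k \in \bbN_0$ and $\theta \in (0,1)$, I would use the identification $\cH^{k+\theta}(\Omega) = [\cH^k(\Omega), \cH^{k+1}(\Omega)]_{\theta,2}$ noted in the text. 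Since $E$ is bounded on both endpoints, the exactness (functoriality) of the real interpolation method upgrades $E$ to a bounded operator
\[ E : \cH^{k+\theta}(\Omega) \longrightarrow \bigl[ H^k(\R^d), H^{k+1}(\R^d) \bigr]_{\theta,2} = H^{k+\theta}(\R^d), \]
where the last equality is the classical real-interpolation characterization of the fractional Sobolev scale on $\R^d$. The restriction identity $E(f)|_\Omega = f$ a.e.\ is preserved along the interpolation construction.

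With the extension in hand, parts (i) and (ii) reduce to the classical embedding theorems on $\R^d$: for $s < d/2$, $H^s(\R^d) \hookrightarrow L^{2d/(d-2s)}(\R^d)$, which, since $\Omega$ is bounded, implies $\cH^s(\Omega) \hookrightarrow L^q(\Omega)$ for every $q \leq 2d/(d-2s)$; and for $s > d/2$, $H^s(\R^d) \hookrightarrow C^{0,\gamma}(\R^d)$ for any $\gamma < \min\{1, s - d/2\}$, whence $\cH^s(\Omega) \hookrightarrow C^{0,\gamma}(\Omega)$ by restriction.

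The main technical obstacle is the interpolation step: one must (a) have a \emph{single} extension operator $E_0$ usable at two consecutive integer scales so that real interpolation applies, and (b) verify that the interpolation functor delivers exactly the fractional Sobolev space $H^{k+\theta}(\R^d)$ on the target side and the $\cH^{k+\theta}(\Omega)$ space on the source side. Point (a) is handled by Stein's construction, and point (b) is immediate on the source side from the interpolation identity quoted from Peetre/Gilbert in the text, and on the target side from the classical theory of Besov/Bessel-potential spaces on $\R^d$; once both identifications are in place, the embeddings in (i)--(ii) follow routinely.
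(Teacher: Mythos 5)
Your proposal is correct and follows essentially the same route as the paper's proof: Lemma \ref{lem:HiscH} plus Stein's extension for integer $s$, real interpolation between consecutive integer scales (using that the Stein operators at levels $m$ and $m+1$ are compatible, together with the identifications $\cH^{m+\theta}(\Omega)=[\cH^m(\Omega),\cH^{m+1}(\Omega)]_{\theta,2}$ and $[H^m(\R^d),H^{m+1}(\R^d)]_{\theta,2}=H^{m+\theta}(\R^d)$), and finally the classical Sobolev and Morrey embeddings on $\R^d$. The two technical obstacles you flag are precisely the ones the paper resolves via Leoni's Remark 13.9 / Theorem 16.12 and Abels' Corollary 4.15.
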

The proof is presented in the Appendix \ref{app:funsp}.
\medskip

We note that  this implies that when $\alpha>\frac{d}{2}$ pointwise evaluation is well-defined in the limiting quadratic form $J^{(\alpha,\tau)}_\infty$;
this will be used in what follows to show that the 
the limiting labelling model obtained when $|Z'|$ is fixed is well-posed.

\subsection{Gaussian Measures of Function Spaces} \label{subsec:gau}

Using the ellipticity of $\cL$, Weyl's law, and Lemma \ref{lem:emb_frac} allows us to characterize the regularity of samples of Gaussian measures on $L^2_{\mu}$.
The proof of the following theorem is a straightforward application
of the techniques in \cite[Theorem 2.10]{S13} to obtain the Gaussian
measures on $\cH^{s}(\Omega)$. Concentration of the measure
on $H^s$ and on $C^{0, \gamma}(\Omega)$ then follows from
Lemma \ref{lem:emb_frac}.
When $\tau=0$ we work on the space orthogonal to constants in order that $\cC$ (defined in the theorem below) is well defined.

\begin{theorem}
\label{t:g}
Let Assumptions \ref{a:omega}--\ref{a:rho} hold.
Let $\cL$ be the operator defined in \eqref{eq:Background:Cont:cL}, and define $\cC=(\cL+\tau^2 I)^{-\alpha}.$
For any fixed $\alpha> \frac{d}{2}$ and
$\tau \ge 0$, the Gaussian measure $N\bigl(0,\cC\bigr)$
is well-defined on $L^2_{\mu}.$ Draws from this measure are almost
surely in $H^{s}(\Omega)$ 
for any  $s<\alpha-\frac{d}{2}$, and consequently in 
$C^{0, \gamma}(\Omega)$ for any  $\gamma <  \min\{ 1,  \alpha - d \}$ if 
$\alpha >d$. 
\end{theorem}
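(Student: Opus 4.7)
The plan is to construct $N(0,\cC)$ through its Karhunen--Lo\`eve expansion in the $L^2_\mu$-orthonormal eigenbasis $\{\varphi_k\}$ of $\cL$, to verify almost sure convergence in the scale $\cH^s(\Omega)$ of Section~\ref{subsec:fsp} using Weyl's law ($\lambda_k \asymp k^{2/d}$), and then to invoke Lemma~\ref{lem:emb_frac} to transfer that regularity to $H^s(\Omega)$ and $C^{0,\gamma}(\Omega)$. The argument is a direct application of the construction in \cite[Theorem~2.10]{S13}; the main work lies in checking the two summability conditions and in isolating the null sets, not in introducing new analytical machinery.

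For $\tau > 0$, the operator $\cC$ shares the eigenbasis $\{\varphi_k\}$ with $\cL$, with corresponding eigenvalues $\sigma_k^2 := (\lambda_k + \tau^2)^{-\alpha}$. I would define candidate samples by the random series
\[
u \;=\; \sum_{k=1}^\infty \sigma_k\, \xi_k\, \varphi_k, \qquad \xi_k \iid N(0,1),
\]
and compute the expected squared $\cH^s$-norm of the tail from index $N$ as $\sum_{k>N} \lambda_k^{s}\, \sigma_k^{2}$. By Weyl's law this quantity is comparable to $\sum_{k>N} k^{2(s-\alpha)/d}$, which is summable precisely when $s < \alpha - d/2$. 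Hence the partial sums are Cauchy in $L^{2}(\bbP;\cH^s(\Omega))$, and almost sure convergence in $\cH^s$ for any fixed such $s$ then follows by the standard argument for orthogonal Gaussian series in a Hilbert space (L\'evy's inequality or the Kolmogorov three-series theorem). Specializing to $s=0$ in particular confirms that $N(0,\cC)$ is well defined on $L^2_\mu$ whenever $\alpha > d/2$. A single full-measure event on which the limit lies in $\cH^s$ for every $s$ in a countable dense subset of $[0,\alpha - d/2)$ is then produced by a union bound.

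The case $\tau = 0$ is handled identically on the closed subspace $\{\varphi_1\}^\perp$, which is legitimate since $\lambda_1 = 0$ is the only zero eigenvalue (by connectedness of $\Omega$ and strict positivity of $\rho$), and Weyl's law still controls $\lambda_k$ for $k \geq 2$. Finally, the transfer to classical spaces is immediate from Lemma~\ref{lem:emb_frac}: the continuous embedding $\cH^s(\Omega) \hookrightarrow H^s(\Omega)$ yields $H^s$ regularity for every $s < \alpha - d/2$, and the embedding $\cH^s(\Omega) \hookrightarrow C^{0,\gamma}(\Omega)$ available for $\gamma < \min\{1, s - d/2\}$ produces H\"older regularity with exponent $\gamma < \min\{1, \alpha - d\}$ under the hypothesis $\alpha > d$, by letting $s$ approach $\alpha - d/2$ from below. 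I do not anticipate substantial obstacles beyond these essentially routine bookkeeping steps; if anything, the most delicate point is simply ensuring that a single sample lies in the entire countable family of claimed spaces simultaneously, which the union bound above arranges.
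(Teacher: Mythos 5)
Your proposal is correct and follows exactly the route the paper indicates: the paper's own (sketched) proof is precisely a Karhunen--Lo\`eve construction via \cite[Theorem~2.10]{S13} in the eigenbasis of $\cL$, with Weyl's law giving the summability threshold $s<\alpha-\frac{d}{2}$, the restriction to $\{\varphi_1\}^\perp$ when $\tau=0$, and Lemma~\ref{lem:emb_frac} transferring the $\cH^s$ regularity to $H^s(\Omega)$ and $C^{0,\gamma}(\Omega)$. Your write-up simply fills in the details the paper leaves to the reader, and the bookkeeping (including the countable union over $s$) is sound.
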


We note that if the operator $\cL$ has eigenvectors which are as regular as those of the Laplacian on a flat torus then the conclusions of Theorem \ref{t:g} can be strengthened. Namely if in addition to what we know about $\cL$,  there is $C>0$ such that
\begin{equation} \label{condcL}
\sup_{j\geq 1}\left(\|\varphi_j\|_{L^\infty} + \frac{1}{j^{1/d}}\mathrm{Lip}(\varphi_j)\right) \leq C,
\end{equation}
then the Kolmogorov continuity technique \cite[Section 7.2.5]{S13} can be used to show additional H\"{o}lder continuity.

\begin{proposition}
\label{t:gold}
Let Assumptions \ref{a:omega}--\ref{a:rho} hold.
Assume the operator $\cL$ satisfies condition \eqref{condcL}  and define $\cC=(\cL+\tau^2 I)^{-\alpha}.$
For any fixed $\alpha>d/2$ and
$\tau \ge 0$, the Gaussian measure $N\bigl(0,\cC\bigr)$
is well-defined on $L^2_{\mu}.$  Draws from this measure are almost
surely in $H^{s}(\Omega;\R)$ for any $s < \alpha-d/2$, and in
$C^{0,\gamma}(\Omega;\R)$  for any  $\gamma <  \min\{ 1,  \alpha - \frac{d}{2} \}$ if $\alpha>\frac{d}{2}.$
\end{proposition}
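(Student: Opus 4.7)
The plan is to apply the Kolmogorov continuity criterion to the Karhunen--Lo\`eve expansion of a draw from $N(0,\cC)$, using the uniform $L^\infty$ and Lipschitz bounds on the eigenfunctions that \eqref{condcL} provides together with Weyl's law $\lambda_k \asymp k^{2/d}$.

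First, the fact that $N(0,\cC)$ is well-defined on $L^2_\mu$ and charges $H^s(\Omega)$ almost surely for $s<\alpha-\frac{d}{2}$ can be taken verbatim from the proof of Theorem \ref{t:g}: these conclusions depend only on eigenvalue asymptotics and the spectral characterization of $\cH^s$, neither of which are affected by the new assumption \eqref{condcL}. The novelty is therefore only the improved H\"older statement, which no longer requires $\alpha>d$.

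Write a draw as $u=\sum_{k\ge 1}(\lambda_k+\tau^2)^{-\alpha/2}\xi_k\varphi_k$ with $\xi_k\stackrel{\mathrm{iid}}{\sim}N(0,1)$ (omitting the constant mode when $\tau=0$). For $x,y\in\Omega$ the increment $u(x)-u(y)$ is centred Gaussian with variance
\[
\sigma^2(x,y) \;=\; \sum_{k\ge 1}(\lambda_k+\tau^2)^{-\alpha}\,|\varphi_k(x)-\varphi_k(y)|^2.
\]
I would bound each term in two complementary ways using \eqref{condcL}: $|\varphi_k(x)-\varphi_k(y)|\le 2C$ and $|\varphi_k(x)-\varphi_k(y)|\le Ck^{1/d}|x-y|$. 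Splitting the series at the index $k^\star\asymp |x-y|^{-d}$, using the Lipschitz bound for $k\le k^\star$ and the sup bound for $k>k^\star$, and invoking Weyl's law for the decay $(\lambda_k+\tau^2)^{-\alpha}\lesssim k^{-2\alpha/d}$, a direct computation yields
\[
\sigma^2(x,y) \;\le\; K\,|x-y|^{\min(2,\,2\alpha-d)},
\]
uniformly in $x,y\in\Omega$, provided $\alpha>d/2$. (At the borderline $\alpha=d/2+1$ a logarithmic factor appears, but it is harmless for the final H\"older exponent since we only need a strict inequality.)

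Since $u(x)-u(y)$ is centred Gaussian, every moment is controlled by its standard deviation: for each integer $p\ge 1$,
\[
\bbE|u(x)-u(y)|^{2p} \;\le\; C_p\,\sigma(x,y)^{2p} \;\le\; C_p K^p\,|x-y|^{p\min(2,\,2\alpha-d)}.
\]
Choosing $p$ large enough that $p\min(2,2\alpha-d)>d$, the Kolmogorov continuity theorem (in the form used in \cite[\S 7.2.5]{S13}) yields a continuous modification of $u$ which is $\gamma$-H\"older for any $\gamma<\frac{p\min(2,2\alpha-d)-d}{2p}$. Letting $p\to\infty$ gives $\gamma<\min\{1,\alpha-\frac{d}{2}\}$, which is the claimed regularity.

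The main obstacle is the sharp splitting argument that converts the two eigenfunction estimates into the exponent $\min(2,2\alpha-d)$; everything else (Gaussian moment comparison, Kolmogorov, passage to a modification) is standard machinery. One small bookkeeping matter worth highlighting is the $\tau=0$ case, where $\lambda_1=0$ forces us to work on the $L^2_\mu$-orthogonal complement of the constants; but this restriction is already built into the definition of $\cC$ used in the statement and does not alter any of the estimates above.
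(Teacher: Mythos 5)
Your proposal is correct and follows exactly the route the paper indicates: the paper gives no written-out proof of Proposition \ref{t:gold}, merely remarking that the Kolmogorov continuity technique of \cite[Section 7.2.5]{S13} applies once \eqref{condcL} holds, and your argument is a faithful and complete execution of that technique, with the variance-of-increments bound $\sigma^2(x,y)\lesssim |x-y|^{\min(2,\,2\alpha-d)}$ obtained by the standard split of the Karhunen--Lo\`eve series at $k^\star\asymp|x-y|^{-d}$ combined with Weyl's law. The reduction of the well-posedness and $H^s$ statements to Theorem \ref{t:g}, and the handling of the $\tau=0$ constant mode and the borderline case $\alpha=\frac{d}{2}+1$, are all handled correctly.
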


We note that in general one cannot expect that the operator $\cL$ satisfies the bound \eqref{condcL}. For example, for the ball there is a sequence of  eigenfunctions which satisfy 
$ \|\varphi_k\|_{L^\infty} \sim \lambda_k^{(d-1)/4} \sim k^{(d-2)/(2d)}$, see \cite{Grieser02}.
In fact this is the largest growth of eigenfunctions possible, as on  general domains 
with smooth boundary $ \|\varphi_k\|_{L^\infty} \lesssim \lambda_k^{(d-1)/4}$, as follows from the work of Grieser,  \cite{Grieser02}. Analogous bounds have first been established for operators on manifolds without boundary by H\"ormander, \cite{Hormander}. This bound is rarely saturated as shown by Sogge and Zeldtich \cite{SoggeZelditch}, but determining the scaling for most sets and manifolds remains open. Establishing the conditions on $\Omega$ under which the Theorem \ref{t:g} can be strengthened as in Proposition \ref{t:gold} is of great interest.

\section{Graph Based Formulations}
\label{sec:G}

We now assume that we have access to label data defined as follows.
Let $\Omega' \subset \Omega$ and let $\Omega^{\pm}$ be two subsets of $\Omega'$ such that
\[ \Omega^+ \cup \Omega^- = \Omega', \quad \overline{\Omega^+} \cap \overline{\Omega^-}=\emptyset. \]
We will consider two labelling scenarios:

\begin{itemize}
\item 
\hypertarget{labmod1}{{\bf Labelling Model 1}}.  
$|Z'|/n \to \mathfrak{r} \in (0,\infty)$. 
We assume that $\Omega^{\pm}$
have positive Lebesgue measure. We assume that the $\{x_j\}_{j \in \bbN}$ are drawn i.i.d. from
measure $\mu$. 
Then if $x_j \in \Omega^+$ we set $y_j=1$ and if $x_j \in \Omega^-$ then $y_j=-1$.
The label variables $y_j$ are not defined if $x_j \in \Omega\backslash \Omega'$ where $\Omega'=\Omega^+\cup \Omega^-$.
We assume $\mathrm{dist}(\Omega^+,\Omega^-)>0$ and define $Z' \subset Z$ to be the subset of indices for which we have labels.

\hypertarget{labmod2}{{\bf Labelling Model 2}}.  
$|Z'|$ fixed as $n \to \infty.$ We assume that $\Omega^{\pm}$
comprise a fixed number of points, $n^{\pm}$ respectively. We assume that the $\{x_j\}_{j > n^++n^-}$ are drawn i.i.d. from
measure $\mu$ whilst $\{x_j\}_{1 \le j \le n^+}$ are a fixed set of points in $\Omega^+$
and  $\{x_j\}_{n^++1 \le j \le n^++n^-}$ are a fixed set of points in $\Omega^-.$ 
We label these fixed points by $y: \Omega^{\pm} \mapsto \{\pm 1\}$ 
as in \hyperlink{labmod1}{{\bf Labelling Model 1}}. We define $Z' \subset Z$ to be the subset of indices 
$\{1, \cdots, n^++n^-\}$ for which we have labels and $\Omega'= \Omega^+\cup \Omega^-$.
\end{itemize} 
In both cases $j \in Z'$ if and only if $x_j \in \Omega'$. But in Model 1
the $x_j$ are drawn i.i.d. and assigned labels when they lie in $\Omega'$, assumed
to have positive Lebesgue measure; in Model 2 the $\{(x_j,y_j)\}_{j \in Z'}$ 
are provided,  in a  possibly non-random way, independently of the \mt{unlabeled} data.

We will identify $u \in \R^n$ and $u \in L^2_{\mu_n}(\Omega;\R)$ by $u_j = u(x_j)$ for each $j \in Z$. Similarly, we will identify $y \in \R^{n^+ +n^-}$ and $y \in L^2_{\mu_n}(\Omega';\R)$ by $y_j = y(x_j)$ for each $j \in Z'$. We may therefore write, for example,
\[
\frac{1}{n}\la u,Lu\ra_{\R^n} = \la u,Lu\ra_{\mu_n}
\]
where $u$ is viewed as a vector on the left-hand side and a function on $Z$
on the right-hand side.

The algorithms that we study in this paper have interpretations
through both optimization and probability. The labels are found from a real-valued
function $u: Z \mapsto \R$ by setting $y=S\circ u: Z \mapsto \R$ with $S$ 
the sign function defined by
$$S(0)=0; \quad S(u)=1, \;u>0;\quad{\rm and}\quad S(u)=-1, \; u<0.$$
The objective
function of interest takes the form
$$\quad \Jk^{(n)}(u)=\frac12 \langle u, \disP^{(n)} u \rangle_{\mu_n} + \rn \Phi^{(n)}(u).$$
The quadratic form depends only on the \mt{unlabeled} data,
while the function $\Phi^{(n)}$ is determined by the \mt{labeled} data.
Choosing $\rn=\frac{1}{n}$ in \hyperlink{labmod1}{{\bf Labeling Model 1}} and $\rn=1$ in 
\hyperlink{labmod2}{{\bf Labeling Model 2}} ensures that the total 
labelling information remains of ${\mathcal O}(1)$ in the large $n$ limit.
Probability distributions constructed by exponentiating multiples of
$\Jk^{(n)}(u)$ will be of interest to us;
the  probability is then high where the objective function is small, 
and vice-versa.  Such probabilities represent the Bayesian posterior 
distribution on the conditional random variable $u|y$.

\subsection{Probit}
\label{ssec:PR}

The probit algorithm on a graph is defined in \cite{UQ17} and here
generalized to a quadratic form based on $\disP^{(n)}$ rather than $L$. We define
\begin{equation} \label{eq:Background:Discrete:Probit:Psi}
\Psi(v;\gamma)=\frac{1}{\sqrt{2\pi\gamma^2}}\int_{-\infty}^v \exp\big(-t^2/2\gamma^2\bigr) \, \dd t
\end{equation}
and then
\begin{equation} \label{eq:Background:Discrete:Probit:Phip}
\Phip^{(n)}(u;\gamma)=-\sum_{j \in Z'}\log \Bigl(\Psi(y_j u_j;\gamma)\Bigr).
\end{equation}
The function $\Psi$ and its logarithm are shown in Figure \ref{fig:psi} in the case $\gamma = 1$. The probit objective function is
\begin{equation} \label{probit_nfun}
\Jp^{(n)}(u)= J_n^{(\alpha,\tau)}(u) + \rn \Phip^{(n)}(u;\gamma)\mt{,}
\end{equation}
where $\rn=\frac{1}{n}$ in \hyperlink{labmod1}{{\bf Labeling Model 1}} and $\rn=1$ in \hyperlink{labmod2}{{\bf Labeling Model 2}}. 
The proof of Proposition 1 in \cite{UQ17} is readily modified to
prove the following.

\begin{proposition}
\label{prop:G:Probit:Convex}
Let $\alpha>0$, $\tau\geq 0$, $\gamma>0$ and $r_n>0$.
Then $\Jp^{(n)}$, defined by~\textup{(\ref{eq:Background:Discrete:Probit:Psi}-\ref{probit_nfun})}, is strictly convex.
\end{proposition}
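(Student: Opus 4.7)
The plan is to exploit the additive decomposition $\Jp^{(n)} = J_n^{(\alpha,\tau)} + r_n \Phip^{(n)}$, show that each summand is convex, and argue that strict convexity of at least one summand in every admissible direction suffices. The argument splits naturally along the two regimes $\tau > 0$ and $\tau = 0$, with the latter requiring the data fidelity term to break the degeneracy in the null space of the quadratic form.

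First I would verify convexity of the quadratic form $J_n^{(\alpha,\tau)}(u) = \tfrac{1}{2}\la u, (s_n L + \tau^2 I)^\alpha u\ra_{\mu_n}$. Since $L$ is the unnormalized graph Laplacian it is symmetric and positive semi-definite, hence so is $s_n L + \tau^2 I$, and therefore so is its $\alpha$-power for every $\alpha>0$. In particular, when $\tau>0$ this operator is strictly positive definite, so $J_n^{(\alpha,\tau)}$ is strictly convex and adding the convex term $r_n\Phip^{(n)}$ preserves strict convexity; this disposes of the case $\tau>0$.

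Next I would show that $v\mapsto -\log\Psi(v;\gamma)$ is strictly convex on $\bbR$, which is the classical strict log-concavity of the Gaussian CDF. A direct computation yields
\[
\frac{d^2}{dv^2}\bigl(-\log\Psi(v;\gamma)\bigr)
= \frac{\Psi'(v;\gamma)}{\Psi(v;\gamma)}\left[\frac{\Psi'(v;\gamma)}{\Psi(v;\gamma)} + \frac{v}{\gamma^2}\right],
\]
and positivity of the bracketed expression reduces, after substitution $t=v/\gamma$, to the Mills ratio inequality $\phi(t)/\Phi(t) > -t$ valid for all $t\in\bbR$. Consequently each summand $u_j\mapsto -\log\Psi(y_j u_j;\gamma)$ in $\Phip^{(n)}$ is strictly convex, and $\Phip^{(n)}$ is convex on $\bbR^n$ and strictly convex along any direction that perturbs at least one coordinate $u_j$ with $j\in Z'$.

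The main obstacle is the remaining case $\tau=0$, where $(s_n L)^\alpha$ has a one-dimensional null space consisting of the constant vectors (using connectedness of the graph, which holds almost surely under \eqref{eq:LimitThmDir:epsSca}), so $J_n^{(\alpha,0)}$ is only convex. To close the argument, fix $u\neq v$ and $\lambda\in(0,1)$, and decompose $u-v = c\mathbf{1} + w$ with $w\perp \mathbf{1}$. If $w\neq 0$, strict convexity of $J_n^{(\alpha,0)}$ along directions orthogonal to $\mathbf{1}$ produces the required strict inequality. If $w=0$ and $c\neq 0$, then $u_j - v_j = c\neq 0$ for every $j\in Z'$, and strict convexity of $-\log\Psi(y_j\,\cdot\,;\gamma)$ together with $r_n>0$ yields strict convexity of $r_n\Phip^{(n)}$ along this direction. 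This final step implicitly requires $|Z'|\ge 1$, so the proposition should be read in the supervised regime; without labelled data and without a $\tau$-shift, no strict convexity can be obtained.
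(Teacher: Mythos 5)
Your proof is correct and follows essentially the same route as the paper, which simply defers to Proposition 1 of \cite{UQ17}: the key ingredient there is likewise the strict log-concavity of the Gaussian CDF (your Mills-ratio computation), combined with positive semi-definiteness of $(s_nL+\tau^2 I)^{\alpha}$. The one point worth retaining is your explicit treatment of the case $\tau=0$: the statement as written tacitly assumes the graph is connected and $Z'\neq\emptyset$ (otherwise a chord in the direction of the indicator of an unlabelled connected component defeats strict convexity), and your decomposition $u-v=c\mathbf{1}+w$ makes precise exactly where each term supplies the strictness.
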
 

It is also straightforward to check, by expanding $u$
in the basis given by eigenvectors of $A^{(n)}$, that $\Jp^{(n)}$ is coercive. 
This is proved by establishing that $J_n^{(\alpha,\tau)}$ is coercive on the orthogonal complement of the constant function. The coercivity in the remaining direction is provided by $\Phip^{(n)}(u;\gamma)$ using the fact that $\Omega^+$ and $\Omega^-$ are nonempty. Consequently  $\Jp^{(n)}$ has a unique minimizer; 
Lemma~\ref{lem:Limits:Probit:Unique} has the proof of the continuum analog of this;
the proof on a graph is easily reconstructed from this.

\begin{figure}
\centering
\includegraphics[width=\textwidth, trim=2cm 2cm 2cm 2cm]{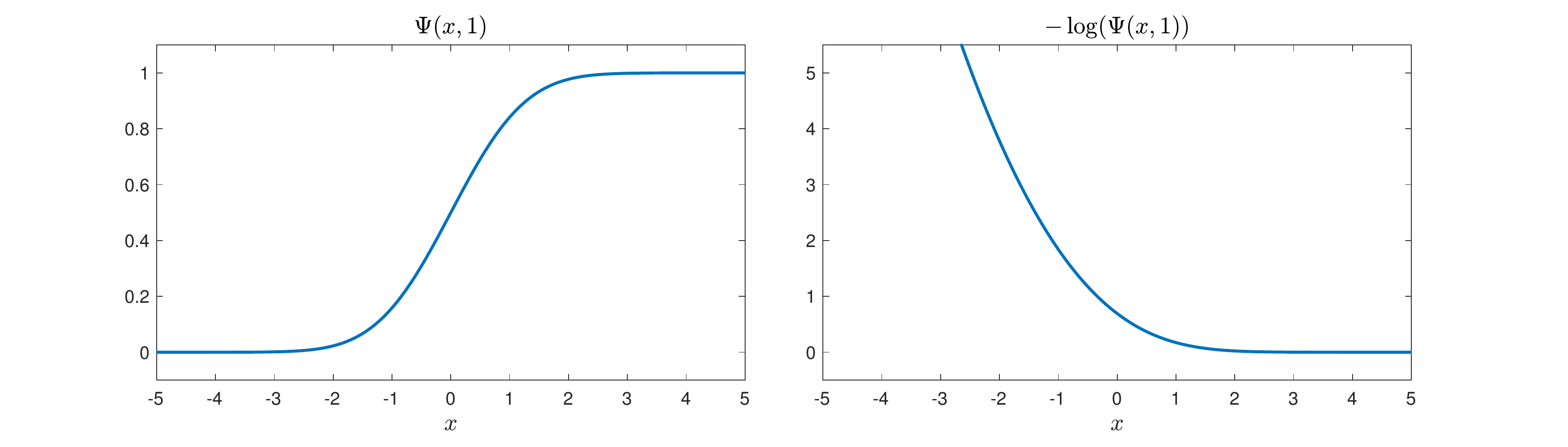}
\caption{The function $\Psi(\cdot;1)$, defined by (\ref{eq:Background:Discrete:Probit:Psi}), and its logarithm, which appears in the probit objective function.}
\label{fig:psi}
\end{figure}

The probabilistic analogue of the optimization problem
for $\Jp^{(n)}$ is as follows.
We let $\nu_0^{(n)}(\dd u;r)$ denote the centred Gaussian with covariance
$C=r_n(\disP^{(n)})^{-1}$ (with respect to the inner product $\langle\cdot,\cdot\rangle_{\mu_n}$). 
We assume that the latent variable $u$ is 
a priori distributed according to measure $\nu_0^{(n)}(\dd u;r_n).$ 
If we then define the likelihood $y|u$ through the generative model
\begin{equation}
\label{eq:L1}
y_j=S\bigl(u_j+\noise_j\bigr)
\end{equation}
with $\noise_j \iid N(0,\gamma^2)$ then
the posterior probability on $u|y$ is given by
\begin{equation}
\label{eq:probPostn}
\nip^{(n)}(\dd u)=\frac{1}{\zp^{(n)}}e^{-\Phip^{(n)}(u;y)}\nu_0^{(n)}(\dd u;r_n)
\end{equation}
with $\zp^{(n)}$ the normalization to a probability measure.
The measure $\nip^{(n)}$ has Lebesgue density proportional to
$e^{-r_n^{-1}\Jp^{(n)}(u)}.$

\subsection{Bayesian Level Set}
\label{ssec:BLS}

We now define
\begin{equation}
\label{eq:blsb}
\Phil^{(n)}(u;\gamma)=\frac{1}{2\gamma^2} \sum_{j \in Z'} \bigl|y_j-S\bigl(u_j\bigr)\bigr|^2.
\end{equation}
The relevant objective function is  
\[ \Jl^{(n)}(u)= J_n^{(\alpha,\tau)}(u)+r_n \Phil^{(n)}(u;\gamma)\mt{,} \]
where again $\rn=\frac{1}{n}$ 
in \hyperlink{labmod1}{{\bf Labeling Model 1}} and $\rn=1$ in \hyperlink{labmod2}{{\bf Labeling Model 2}}. We have the following:

\begin{proposition}
\label{prop:G:BLS:Infimum}
The infimum of of $\Jl^{(n)}$ is not attained.
\end{proposition}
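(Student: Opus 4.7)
The plan is to exploit the observation that $\Phil^{(n)}$ depends on $u$ only through the collection of signs $\{S(u_j):j\in Z'\}$, and that the sign function is invariant under multiplication by any positive scalar. Hence along any ray $\{\lambda u_0 : \lambda>0\}$ with fixed $u_0$, the fidelity term is constant while the quadratic form scales as $\lambda^2$. This will let me drive the objective down to zero without ever attaining it.

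Concretely, I would first exhibit a minimizing sequence. Choose $u_0 \in L^2_{\mu_n}$ by setting $u_0(x_j) = y_j$ for $j \in Z'$ and $u_0(x_j) = 0$ for $j \in Z \setminus Z'$. Since $y_j \in \{\pm 1\}$ we have $S(u_0(x_j)) = y_j$ for every $j \in Z'$, and because $\Omega^+,\Omega^-$ are both nonempty the function $u_0$ is neither the zero function nor a constant. Defining $u_\lambda := \lambda u_0$ for $\lambda>0$, sign invariance gives $\Phil^{(n)}(u_\lambda;\gamma) = 0$, while quadraticity gives $J_n^{(\alpha,\tau)}(u_\lambda) = \lambda^2 J_n^{(\alpha,\tau)}(u_0)$. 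Since $J_n^{(\alpha,\tau)}(u_0)$ is finite (for $\tau>0$ it is strictly positive because $A^{(n)}$ is positive definite on a nonzero vector; for $\tau=0$ it is strictly positive because $u_0$ is nonconstant on a connected graph, using that both labels are present), we obtain
\[
\Jl^{(n)}(u_\lambda) \;=\; \lambda^2 J_n^{(\alpha,\tau)}(u_0) \;\xrightarrow[\lambda\downarrow 0]{}\; 0.
\]
Combined with the trivial bound $\Jl^{(n)}\ge 0$, this shows $\inf \Jl^{(n)} = 0$.

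To show the infimum is not attained, I would argue by contradiction: if some $u^\star$ satisfied $\Jl^{(n)}(u^\star)=0$, then both $J_n^{(\alpha,\tau)}(u^\star)=0$ and $\Phil^{(n)}(u^\star;\gamma)=0$ would hold. The vanishing of $\Phil^{(n)}$ forces $S(u^\star_j) = y_j \in \{\pm 1\}$ for every $j \in Z'$, in particular $u^\star_j \neq 0$ there. But vanishing of $J_n^{(\alpha,\tau)}$ means $u^\star \in \ker A^{(n)}$: when $\tau>0$ this kernel is trivial, forcing $u^\star\equiv 0$, contradicting $u^\star_j\neq 0$; when $\tau=0$, $u^\star$ must lie in $\ker L$, hence be constant on connected components, which (since both $+1$ and $-1$ labels occur) prevents $S(u^\star_j)$ from matching $y_j$ everywhere. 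Either way we reach a contradiction, so the infimum is not achieved. The only subtlety is the mild case analysis on $\tau$ and the use of connectedness when $\tau=0$; no deeper obstacle arises, because the discontinuity of $S$ at the origin is precisely what breaks lower semicontinuity of $\Jl^{(n)}$ and thus prevents attainment.
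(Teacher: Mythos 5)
Your proof is correct and rests on the same mechanism as the paper's one-line argument: $\Phil^{(n)}$ depends on $u$ only through the signs $S(u_j)$ and is therefore invariant under multiplication by a positive scalar, while $J_n^{(\alpha,\tau)}$ scales like $\lambda^2$, so the objective decreases along a ray toward a value it cannot attain because of the discontinuity of $S$ at the origin. The paper packages this as a contradiction (any nonzero minimizer is strictly improved by downscaling, and zero is not a minimizer), whereas you first identify $\inf \Jl^{(n)}=0$ via an explicit minimizing sequence and then observe that the two nonnegative terms cannot vanish simultaneously; your version is the more complete of the two, since it also disposes of the $\tau=0$ kernel case (a nonzero constant), for which the paper's downscaling step alone gives no strict improvement. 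The one caveat is that your $\tau=0$ argument requires the graph to be connected, as your minimizing-sequence step already implicitly assumes and as holds in the paper's asymptotic regime: on a disconnected graph with the two label classes lying in distinct components, a suitable locally constant function would make both terms vanish and the infimum would in fact be attained.
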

This follows using the argument introduced in a related context in
\cite{iglesias2015bayesian}: assuming that a non-zero minimizer does exist 
leads to a contradiction upon multiplication of that minimizer
by any number less than one; and zero does not achieve the infimum.

We modify the generative model \eqref{eq:L1} slightly to read
\[ y_j=S\bigl(u_j\bigr)+\noise_j, \]
where now 
$\noise_j \iid N(0,r_n^{-1}\gamma^2)$.
In this case, because the noise is additive, multiplying the objective
function by $r_n$ simply results in a rescaling of the observational
noise; multiplication by $r_n$ does not have such a simple
interpretation in the case of probit.
As a consequence the resulting
Bayesian posterior distribution has significant
differences with the probit case:
the latent variable $u$ is now assumed a priori to be 
distributed according to measure $\nu_0^{(n)}(\dd u;1)$
Then
\begin{equation}
\label{eq:lsPostn}
\nil^{(n)}(\dd u)=\frac{1}{\zl^{(n)}}e^{-r_n\Phil^{(n)}(u;\gamma)}\nu_0^{(n)}(\dd u;1)
\end{equation}
where $\nu_0^{(n)}$ is the same centred Gaussian as in the probit case. 
Note that $\nil^{(n)}$ is also the measure with Lebesgue density proportional 
to $e^{-\Jl^{(n)}(u)}.$

\subsection{Small Noise Limit}
\label{ssec:SNL}

When the size of the noise on the labels is small, the probit and Bayesian level set approaches behave similarly. More precisely, the measures $\nip^{(n)}$ and $\nil^{(n)}$ share a common weak limit as $\gamma\to 0$. The following result is given without proof -- this is because its proof is almost identical to 
that arising in the continuum limit setting of Theorem \ref{thm:ltp:probit:ZeroNoise}(ii) given in the appendix; indeed it is technically easier 
due to the fully discrete setting.
Here $\Rightarrow$ denotes the weak convergence of probability measures.

\begin{theorem}
\label{t:add}
Let $\nu_0^{(n)}(\dd u)$ denote a Gaussian measure of the form $\nu_0^{(n)}(\dd u;r)$ for any $r$, possibly depending on $n$. Define the set
$$B_{n}=\{u \in \R^n \,|\,y_ju_j >0 \;{\rm for\,each}\; j \in Z'\}$$
and the probability measure
$$\nu^{(n)}(\dd u)=\sZ^{-1}\one_{B_n}(u)\nu_0^{(n)}(\dd u)$$
 where $\sZ = \nu_0^{(n)}(B_n)$.
Consider the posterior measures $\nip^{(n)}$ defined in~\eqref{eq:probPostn} and $\nil^{(n)}$ defined in~\eqref{eq:lsPostn}.
Then $\nip^{(n)} \toweak \nu^{(n)}$ and $\nil^{(n)} \toweak \nu^{(n)}$ as $\gamma \to 0$.
\end{theorem}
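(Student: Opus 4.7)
The plan is to prove both convergences by the same route: first show that, pointwise $\nu_0^{(n)}$-almost everywhere, the two unnormalized likelihoods $e^{-\Phip^{(n)}(u;\gamma)}$ and $e^{-r_n \Phil^{(n)}(u;\gamma)}$ converge to the same indicator $\mathbf{1}_{B_n}(u)$ as $\gamma \to 0$, then apply the dominated convergence theorem both to the normalizing constants and to integrals against bounded test functions. Because we are on the finite-dimensional space $\R^n$ and both likelihoods are uniformly bounded by $1$, this actually yields convergence in total variation, which is stronger than the claimed weak convergence.

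For the probit case, since $\Psi(v;\gamma) = \Phi_{\mathrm{std}}(v/\gamma)$ and $\Phi_{\mathrm{std}}(t) \to \mathbf{1}\{t>0\}$ as $|t|\to\infty$ (with value $1/2$ at $t=0$), we have $\Psi(y_j u_j;\gamma) \to \mathbf{1}\{y_j u_j > 0\}$ for every $u$ outside the finite union of coordinate hyperplanes $N := \bigcup_{j\in Z'}\{u_j = 0\}$, and $N$ is a $\nu_0^{(n)}$-null set. Taking the product over $j\in Z'$ gives $e^{-\Phip^{(n)}(u;\gamma)} \to \mathbf{1}_{B_n}(u)$ pointwise a.e. Dominated convergence (with dominating function $1$) then yields $\zp^{(n)} \to \sZ$ and $\int f\, e^{-\Phip^{(n)}}\, d\nu_0^{(n)} \to \int f\, \mathbf{1}_{B_n}\, d\nu_0^{(n)}$ for every bounded measurable $f$, so the ratios give $\nip^{(n)} \Rightarrow \nu^{(n)}$.

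The level set case is analogous. Using $y_j \in \{\pm 1\}$ and $S(u_j) \in \{-1,0,+1\}$, one has $|y_j - S(u_j)|^2 = 0$ iff $y_j u_j > 0$ and $|y_j - S(u_j)|^2 \geq 1$ otherwise. Hence for $u \in B_n$, $\Phil^{(n)}(u;\gamma) = 0$ and the likelihood equals $1$; while for $u \in \R^n \setminus (B_n \cup N)$ there is some $j$ with $|y_j - S(u_j)|^2 \geq 1$, so $e^{-r_n \Phil^{(n)}(u;\gamma)} \leq e^{-r_n/(2\gamma^2)} \to 0$. The same bounded-convergence argument as above then gives $\zl^{(n)} \to \sZ$ and $\nil^{(n)} \Rightarrow \nu^{(n)}$.

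The only delicate point is ensuring the limiting measure is well-defined, i.e. $\sZ = \nu_0^{(n)}(B_n) > 0$. This is essentially automatic: $B_n$ is the intersection of finitely many open half-spaces through the origin and contains the non-empty open cone $\{u : y_j u_j > 0 \text{ for all } j \in Z'\}$, and any non-degenerate centred Gaussian on $\R^n$ assigns positive mass to every non-empty open set; in the degenerate case $\tau = 0$ one restricts to the range of $A^{(n)}$ (orthogonal to constants), on which the Gaussian is non-degenerate and the projection of $B_n$ still has non-empty interior. No genuine analytic obstacle arises because the discrete setting avoids the infinite-dimensional absolute-continuity issues that make the continuum analogue Theorem \ref{thm:ltp:probit:ZeroNoise}(ii) more subtle.
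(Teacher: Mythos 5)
Your proposal is correct and follows essentially the same route as the paper, which omits the proof of Theorem \ref{t:add} precisely because it is the finite-dimensional version of the argument for Theorem \ref{thm:ltp:probit:ZeroNoise}(ii): pointwise convergence of the likelihoods $e^{-\Phip^{(n)}(u;\gamma)}$ and $e^{-r_n\Phil^{(n)}(u;\gamma)}$ to $\one_{B_n}(u)$ off the $\nu_0^{(n)}$-null union of coordinate hyperplanes, followed by dominated convergence (with dominating function $1$) applied to both the normalizing constant and integrals against bounded test functions. Your additional observations---that this in fact yields total variation convergence, and that $\sZ>0$ because $B_n$ is a non-empty open cone (intersected with the support of the Gaussian when $\tau=0$)---are correct refinements rather than a different method.
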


\subsection{Kriging}
\label{ssec:KR}
Instead of classification, where the sign of the latent variable $u$ is made to agree with the labels, one can alternatively consider regression where $u$ itself is made to agree with the labels \cite{zhu2003semi,zhu2003semib}. We consider this situation numerically in section \ref{sec:N}. Here the objective is to
\[
\text{minimize}\,\, \Jkr^{(n)}(u): = J_n^{(\alpha,\tau)}(u)\text{ subject to } u(x_j) = y_j\text{ for all }j \in Z'.
\]
In the continuum setting this minimization is referred to as kriging, and
we extend the terminology to our graph based setting.
Kriging may also be defined in the case where the constraint is enforced
as a soft least squares penalty; however we do not discuss this here.

The probabilistic analogue of this problem can be linked with the original work of Zhu et al
\cite{zhu2003semi,zhu2003semib} which based classification on a centred
Gaussian measure  with inverse covariance given by the graph
Laplacian, conditioned to take the value exactly $1$ on
\mt{labeled} nodes where $y_j=1$, and to take the value exactly $-1$
on \mt{labeled} nodes where $y_j=-1.$

\section{Function Space Limits of Graph Based Formulations}
\label{sec:F}

In this section we state $\Gamma-$limit theorems for the objective functions
appearing in the probit algorithm. The proofs are
given in the appendix. They rely on arguments which use the fact that we study
perturbations of the $\Gamma-$limit theorem for the quadratic forms
stated in section \ref{sec:Q}. We also write down formal infinite dimensional
formulations of the probit and Bayesian level set posterior distributions, 
although we do not
prove that these limits are attained. We do, however, show that the probit
and level set posteriors have a common limit as $\gamma \to 0$, as they do on a
finite graph. 

\subsection{Probit}
\label{ssec:PR2}

Under \hyperlink{labmod1}{{\bf Labelling Model 1}}, the natural continuum limit of the probit 
objective functional is 
\begin{equation} \label{eq:Background:Cont:Probit:Jp}
\Jp(v) = J_\infty^{(\alpha,\tau)}(v) + \Phipo(v;\gamma)
\end{equation}
where
\begin{equation} \label{eq:Background:Cont:Probit:Phip}
\Phipo(v;\gamma) = -\int_{\Omega'} \log(\Psi(y(x)v(x);\gamma) )\, \dd \mu(x)
\end{equation}
for a given measurable function $y:\Omega'\to \{\pm 1\}$. For any $v\in L^2_{\mu}$, $ \log(\Psi(y(x)v(x);\gamma))$  is integrable by Corollary~\ref{lem:Background:Cont:Probit:PhipL1}.
The proof of the following theorem is given in the appendix, in section~\ref{ssec:PRLabelMod1}.

\begin{lemma}
\label{lem:Limits:Probit:Unique}
Let Assumptions \ref{a:eta}--\ref{a:rho} hold.
For $\alpha \geq 1$ and $\tau\geq 0$, consider the functional $\Jp$ with \hyperlink{labmod1}{{\bf Labelling Model 1}} defined by \eqref{eq:Background:Cont:Probit:Jp}. Then, the functional $\Jp$ has a unique minimizer in $\cH^\alpha(\Omega) $. 
\end{lemma}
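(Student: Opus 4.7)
The plan is to apply the direct method of the calculus of variations on the Hilbert space $\cH^\alpha(\Omega)$. I will show that $\Jp$ is (i) proper (finite at $v\equiv 0$, since $J_\infty^{(\alpha,\tau)}(0)=0$ and $\Phipo(0;\gamma) = -\mu(\Omega')\log\Psi(0;\gamma) < \infty$), (ii) weakly sequentially lower semi-continuous, (iii) coercive, and (iv) strictly convex on $\cH^\alpha(\Omega)$. Properties (i)--(iii) yield existence of a minimizer and (iv) its uniqueness. Any minimizer automatically lies in $\cH^\alpha$ because $J_\infty^{(\alpha,\tau)}(v) = +\infty$ for $v \in L^2_\mu \setminus \cH^\alpha$.

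For weak lower semi-continuity, $J_\infty^{(\alpha,\tau)}(v) = \tfrac12 \sum_k (\lambda_k+\tau^2)^\alpha \langle v,\varphi_k\rangle_\mu^2$ is a convex, norm-continuous quadratic form on $\cH^\alpha$, hence weakly lsc. For $\Phipo$, I rely on the standard fact that $-\log\Psi(\cdot;\gamma)$ is nonnegative, continuous, and strictly convex (a consequence of the log-concavity of the Gaussian CDF, together with the Gaussian tail giving quadratic growth as its argument goes to $-\infty$ and decay to $0$ as it goes to $+\infty$); in particular $\Phipo$ is convex on $L^2_\mu$. By Lemma~\ref{lem:emb_frac} the embedding $\cH^\alpha(\Omega) \hookrightarrow L^2_\mu(\Omega)$ is compact for $\alpha \geq 1$, so weak convergence in $\cH^\alpha$ gives strong $L^2_\mu$ convergence and, on a subsequence, pointwise $\mu$-a.e.\ convergence on $\Omega'$. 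Fatou's lemma applied to the nonnegative integrand $-\log\Psi(y(x)v_n(x);\gamma)$ then delivers weak lsc of $\Phipo$ on $\cH^\alpha$.

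For strict convexity and coercivity I split on $\tau$. When $\tau > 0$, the estimate $(\lambda_k+\tau^2)^\alpha \geq c_\tau(1+\lambda_k^\alpha)$ for a constant $c_\tau > 0$ shows that $J_\infty^{(\alpha,\tau)}$ is strictly convex and coercive with respect to the $\cH^\alpha$-norm, and the nonnegative term $\Phipo$ does no harm. When $\tau = 0$, decompose $v = \bar v + v_\perp$ with $\bar v = \mu(\Omega)^{-1}\int_\Omega v\,\dd\mu$; then $J_\infty^{(\alpha,0)}(v) = \tfrac12 \sum_{k\geq 2}\lambda_k^\alpha a_k^2$ controls $\|v_\perp\|_{\cH^\alpha}$ but is blind to $\bar v$. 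Here $\Phipo$ must supply the missing control. On constant functions $v\equiv c$, the map
\[
c \mapsto -\mu(\Omega^+)\log\Psi(c;\gamma) - \mu(\Omega^-)\log\Psi(-c;\gamma)
\]
is strictly convex and coercive, using $\mu(\Omega^\pm) > 0$ (from Labelling Model 1) and the quadratic growth of $-\log\Psi(t;\gamma)$ as $t\to -\infty$. A convex-perturbation argument (using that for fixed $\bar v$ the map $v_\perp \mapsto \Phipo(\bar v+v_\perp)$ is convex and locally bounded below on $\cH^\alpha$-bounded sets) then shows $\Phipo(\bar v + v_\perp) \to \infty$ as $|\bar v|\to \infty$ uniformly over $v_\perp$ in an $\cH^\alpha$-bounded set, which combined with the control on $v_\perp$ yields coercivity of $\Jp$ on $\cH^\alpha$. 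Strict convexity follows the same splitting: $J_\infty^{(\alpha,0)}$ is strictly convex on the orthogonal complement of constants and $\Phipo$ is strictly convex on the one-dimensional space of constants, so the sum is strictly convex. The principal obstacle is the $\tau=0$ case: one must make essential use of $\mu(\Omega^\pm)>0$ (characteristic of Labelling Model 1) to simultaneously pin down the constant mode for coercivity and to rule out constant-valued competitors for strict convexity.
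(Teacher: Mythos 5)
Your proposal is correct in outline and reaches the same endpoint by the direct method, but the coercivity argument is organized differently from the paper's. The paper works at the $H^1$ level: it uses $\alpha\geq 1$ to bound $\Jp(v)\gtrsim \|\nabla v\|_{L^2}^2$, invokes a Poincar\'e inequality relative to the \emph{local} averages $\bar v_\pm$ of $v$ over $\Omega^\pm$, and then applies Jensen's inequality to $\Phipo$ to get a lower bound on $\bar v_+$ and an upper bound on $\bar v_-$, from which an explicit $L^2$ bound on $v$ follows. You instead split off the \emph{global} mean using the spectral gap $\lambda_2>0$, so that $J_\infty^{(\alpha,0)}$ controls $\|v_\perp\|_{\cH^\alpha}$ exactly and only the constant mode remains to be pinned down by $\Phipo$; this is cleaner (it makes the $\tau>0$ case trivial and does not actually use $\alpha\geq 1$), and your treatment of weak lower semicontinuity via compact embedding plus Fatou is a serviceable alternative to the paper's convexity-based one. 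The one step you should tighten is the claim that $\Phipo(\bar v+v_\perp)\to\infty$ as $|\bar v|\to\infty$ uniformly over $\cH^\alpha$-bounded $v_\perp$: the justification you offer (convexity of $v_\perp\mapsto\Phipo(\bar v+v_\perp)$ plus boundedness below) does not deliver this uniformity, since $\Phipo\geq 0$ gives boundedness below for free without any information about growth in $\bar v$. The claim is true, and the fix is exactly the paper's Jensen step restricted to $\Omega^\pm$: $\Phipo(v;\gamma)\geq-\mu(\Omega^+)\log\Psi(\overline{v}|_{\Omega^+};\gamma)-\mu(\Omega^-)\log\Psi(-\overline{v}|_{\Omega^-};\gamma)$, where the averages of $v$ over $\Omega^\pm$ differ from $\bar v$ by at most $\mu(\Omega^\pm)^{-1/2}\|v_\perp\|_{L^2_\mu}$, which is bounded on your sublevel sets; alternatively a Chebyshev argument on the set where $|v_\perp|\geq|\bar v|/2$ works. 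With that repair, and noting for strict convexity that you need $t\mapsto-\log\Psi(y(x)(w(x)+t);\gamma)$ strictly convex along every line in the constant direction (not merely on the subspace of constants), which follows from the pointwise strict convexity of $-\log\Psi$, your argument is complete.
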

\begin{proof}
Convexity of $\Jp$ follows from the proof of Proposition 1 in \cite{UQ17}. Let $\bar v_+$ and $\bar v_-$ be the averages of $v$ on $\Omega_+$  and $\Omega_-$ respectively. Namely let 
$\bar v_\pm = \frac{1}{|\Omega_\pm|} \int_{\Omega_\pm} v(x)\,\dd x$. 
Note that
\[ \Jp(v) \geq J_\infty^{(\alpha,\tau)}(v) \geq \lambda_2^{\alpha-1} J_\infty^{(1,0)}(v) = -\frac12 \lambda_2^{\alpha-1} \int_\Omega v \nabla \cdot (\rho^2 \nabla v) \, \dd x \geq \frac{(\rho^-)^2 \lambda_2^{\alpha-1}}{2} \|\nabla v\|_{L^2(\Omega)}^2. \]
Using the form of Poincar\'e inequality given in Theorem 13.27 of \cite{Leoni} 
 implies that 
 \begin{equation} \label{temp1}
\Jp(v) \gtrsim \|\nabla v\|_{L^2(\Omega)}^2 \gtrsim \int_{\Omega} |v - \bar v_+|^2 + |v - \bar v_-|^2 \, \dd x. 
\end{equation}
The convexity of $\Phipo(v;\gamma)$ implies that 
\[ \Phipo(v;\gamma) \geq - \log(\Psi(\bar v_+);\gamma) \mu(\Omega_+) - \log(\Psi(-\bar v_-);\gamma) \mu(\Omega_-) \]
Using that $\lim_{s \to -\infty} -\log(\Psi(s;\gamma)) = \infty$ we see that a bound on $\Phipo(v;\gamma) $ provides a lower bound on $\bar v_+$ and an upper bound on $\bar v_-$. 
To see this let $\Theta$ be the inverse of $s \mapsto -\log(\Psi(s;\gamma))$. 
The preceding shows that 
\[ \bar v_+ \geq \Theta \left( \frac{\Phipo(v;\gamma) }{ \mu(\Omega_+)} \right) \geq \Theta \left( \frac{\Jp(v)}{ \mu(\Omega_+)} \right) 
\quad \te{ and } \quad 
 \bar v_- \leq - \Theta \left( \frac{\Phipo(v;\gamma) }{ \mu(\Omega_-)} \right) \leq - \Theta \left( \frac{\Jp(v)}{ \mu(\Omega_-)} \right) . \]
Let $c = \max \left\{- \Theta \left( \frac{\Jp(v)}{ \mu(\Omega_+)}\right),  - \Theta \left( \frac{\Jp(v)}{ \mu(\Omega_-)} \right), 0 \right\} $.  Then $\bar v_+ \geq -c$ and $\bar v_- \leq c$. Using that, for any $a \in \R$,
$v^2 \leq 2|v-a|^2 + 2a^2$, we obtain
\begin{align*}
\int_\Omega v^2(x) \,\dd x & \leq \int_{\{v(x) \leq -c\}} v^2(x) \, \dd x +  \int_{\{v(x) \geq c\}} v^2(x) \, \dd x + c^2 |\Omega| \\
& \leq 2 \int_{\{v(x) \leq -c\}} |v+ c|^2 +c^2 \, \dd x + 2 \int_{\{v(x) \geq c\}} |v- c|^2 +c^2 \, \dd x   + c^2 |\Omega| \\
& \leq 5c^2 |\Omega| + 2 \int_{\{v(x) \leq -c\}} |v - \bar v_+|^2 \, \dd x +  2 \int_{\{v(x) \geq c\}} |v- \bar v_-|^2 \, \dd x \\
& \lesssim c^2|\Omega| +  \Jp(v).
\end{align*}
Then $\|v \|_{L^2}$ is bounded by a function of $\Jp(v)$ and $\Omega$.
 
Combining with \eqref{temp1} implies that a function of $\Jp(v)$ bounds $\| v \|^2_{\cH^\alpha(\Omega)}$ which establishes the  coercivity of $\Jp$. 
The functional $\Jp$ is weakly lower-\mt{semicontinuous} in $\cH^\alpha$, due to \mt{the} convexity of both
$J_\infty^{(\alpha,\tau)}$ and $\Phipo$. 
Thus the  direct method of the calculus of variations proves  that $\Jp$ has a unique minimizer in $\cH^\alpha(\Omega)$.
\end{proof}

The following theorem is proved in section \ref{ssec:PRLabelMod1}.
\begin{theorem}
\label{thm:LimitThmOpt:Probit:pr}
Let the assumptions of \hyperlink{labmod1}{{\bf Labelling Model 1}}
and Theorem~\ref{thm:LimitThmDir:LimitThmDir} hold with $\tau\geq 0$.
Then, with probability one, any sequence of minimizers $v_n$ of $\Jp^{(n)}$ converge in $TL^2$ to $v_\infty$, the unique minimizer of $\Jp$ in $L^2_{\mu}$, and furthermore $\lim_{n\to \infty} \Jp^{(n)}(v_n) = \Jp(v_\infty) = \min_{v\in  L^2_{\mu}} \Jp(v)$.
\end{theorem}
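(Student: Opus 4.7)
The strategy is to prove $\Gamma$-convergence of $\Jp^{(n)}$ to $\Jp$ in the $TL^2$ topology together with equi-coercivity, and then invoke the standard $\Gamma$-convergence/$TL^2$ compactness machinery (see section~\ref{sssec:Background:Passage:GammaConv}) to conclude that minimizers $v_n$ converge to the minimizer of $\Jp$, whose uniqueness has been established in Lemma~\ref{lem:Limits:Probit:Unique}. Since we already have $\Glim_{n\to\infty} J_n^{(\alpha,\tau)} = J_\infty^{(\alpha,\tau)}$ together with the required $TL^2$ pre-compactness from Theorem~\ref{thm:LimitThmDir:LimitThmDir}, the work reduces to handling the data term $r_n\Phip^{(n)}$ under Labelling Model~1, where $r_n = 1/n$ gives
\[
r_n\Phip^{(n)}(u) = -\int_{\Omega'} \log\bigl(\Psi(y(x)u(x);\gamma)\bigr)\, \dd \mu_n(x).
\]
This is a Monte-Carlo approximation of $\Phipo(v;\gamma)$, and one needs to show that it is continuous and $\Gamma$-converges to $\Phipo$ in the $TL^2$ topology, both almost surely.

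For the $\liminf$ inequality, suppose $u_n \to v$ in $TL^2$ along transport maps $T_n:\Omega\to\Omega_n$ pushing $\mu$ to $\mu_n$, so that $u_n\circ T_n \to v$ in $L^2_\mu$. Theorem~\ref{thm:LimitThmDir:LimitThmDir} gives $\liminf_n J_n^{(\alpha,\tau)}(u_n) \ge J_\infty^{(\alpha,\tau)}(v)$. For the data term, pass to an a.e. convergent subsequence of $u_n\circ T_n$; then, since $-\log\Psi(\cdot;\gamma)$ is non-negative, continuous, and lower-semicontinuous, combining the a.e. convergence of $u_n\circ T_n$ to $v$ with the fact that the indicator of $\Omega'$ is continuous $\mu$-a.e.\ (under \hyperlink{labmod1}{{\bf Labelling Model 1}} the labels $y$ extend to a measurable function on $\Omega'$), Fatou's lemma yields
\[
\liminf_{n\to\infty} \int_{\Omega'} -\log\bigl(\Psi(y\,u_n\circ T_n;\gamma)\bigr)\,\dd \mu \ge \int_{\Omega'} -\log\bigl(\Psi(y\,v;\gamma)\bigr)\,\dd \mu = \Phipo(v;\gamma).
\]
For the $\limsup$ inequality, use the standard recovery-sequence construction: first approximate any $v\in L^2_\mu$ by $v_\eps\in C^\infty(\overline\Omega)$ in $\cH^\alpha(\Omega)$, then take $u_n(j) := v_\eps(x_j)$. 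For this sequence, Theorem~\ref{thm:LimitThmDir:LimitThmDir} (through its recovery sequence for $J_n^{(\alpha,\tau)}$, which may be taken to be sampling of smooth functions) together with the strong law of large numbers applied to $-\log\Psi(y(x_j)v_\eps(x_j);\gamma)$, whose integrability is guaranteed by Corollary~\ref{lem:Background:Cont:Probit:PhipL1}, yields $\Jp^{(n)}(u_n) \to \Jp(v_\eps)$ almost surely, and a standard diagonal extraction gives a sequence converging to $v$.

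Equi-coercivity is obtained by adapting the coercivity argument used in the proof of Lemma~\ref{lem:Limits:Probit:Unique} to the graph setting. A uniform bound on $\Jp^{(n)}(v_n)$ controls $J_n^{(\alpha,\tau)}(v_n)$ and, through the quadratic lower growth of $-\log\Psi$ at $-\infty$, bounds the empirical averages of $v_n$ on $\Omega^+$ from below and on $\Omega^-$ from above; a discrete Poincar\'e-type inequality (as in the proof of Theorem~\ref{thm:LimitThmDir:LimitThmDir}) combined with these one-sided bounds produces $\sup_n\|v_n\|_{L^2_{\mu_n}}<\infty$. When $\tau>0$ part~(3) of Theorem~\ref{thm:LimitThmDir:LimitThmDir} applies directly; when $\tau=0$ the $L^2_{\mu_n}$ bound just obtained lets us invoke part~(2). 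In either case $\{v_n\}$ is $TL^2$ pre-compact, so any cluster point is a minimizer of $\Jp$ by the $\Gamma$-convergence just established, and Lemma~\ref{lem:Limits:Probit:Unique} forces the whole sequence to converge to $v_\infty$. The $\Gamma$-convergence statement also gives convergence of values.

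The main technical obstacle is the $\Gamma$-convergence of the data term, because $-\log\Psi(\cdot;\gamma)$ has quadratic growth at $-\infty$; one has to ensure that the $TL^2$ convergence of $u_n$ is strong enough to pass to the limit in an integrand of quadratic growth, which requires controlling the empirical integral by Fatou (for $\liminf$) and by smoothness plus LLN (for the recovery sequence). Everything else follows the standard $\Gamma$-convergence template and uses only results already at our disposal from Section~\ref{sec:Q} and Lemma~\ref{lem:Limits:Probit:Unique}.
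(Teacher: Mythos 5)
Your overall architecture ($\Gamma$-convergence of $\Jp^{(n)}$ plus equi-coercivity, then Proposition~\ref{prop:Background:Passage:GammaConv:MinConv} and the uniqueness from Lemma~\ref{lem:Limits:Probit:Unique}) is the same as the paper's, and your Fatou-based $\liminf$ for the data term is sound and arguably more elementary than what the paper does. The genuine gap is in your $\limsup$ inequality. You build the recovery sequence by sampling a smooth approximant, $u_n(j)=v_\eps(x_j)$, and assert that the recovery sequence of Theorem~\ref{thm:LimitThmDir:LimitThmDir} ``may be taken to be sampling of smooth functions.'' Nothing in the paper supports this: the recovery sequences constructed for $J_n^{(\alpha,\tau)}$ are spectral truncations $\sum_{k\le K_n}\langle v,q_k\rangle_\mu\, q_k^{(n)}$, and pointwise consistency $J_n^{(\alpha,\tau)}(v_\eps|_{\Omega_n})\to J_\infty^{(\alpha,\tau)}(v_\eps)$ for restrictions of smooth functions is precisely the kind of statement the spectral route is designed to avoid having to prove --- for fractional or large integer $\alpha$ the operator $(s_nL+\tau^2 I)^\alpha$ acting on samples of a smooth function has no established consistency in this paper, so your recovery sequence is not known to recover the quadratic form.

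The way the paper closes this is worth contrasting with your plan. Instead of proving a one-sided $\liminf$ bound for the data term and then fighting to build a recovery sequence that controls both terms, the paper proves (Lemma~\ref{lem:LimitThmOpt:Probit:CC}) that $\frac1n\Phip^{(n)}(u_n;\gamma)\to\Phipo(u;\gamma)$ along \emph{every} $TL^2$-convergent sequence $(\mu_n,u_n)\to(\mu,u)$, using the quantitative bounds on $|\log\Psi(w)-\log\Psi(v)|$ of Lemma~\ref{lem:LimitThmOpt:Probit:PsiBound} to tame the quadratic growth at $-\infty$, together with a Lipschitz extension of $y$ to get $TL^\infty$ convergence of the labels. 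Once the data term converges continuously, the spectral recovery sequence of Theorem~\ref{thm:LimitThmDir:LimitThmDir} serves unchanged as a recovery sequence for $\Jp^{(n)}$, and both the $\liminf$ and $\limsup$ inequalities follow at once. You correctly identified the quadratic growth of $-\log\Psi$ as the key technical obstacle; the fix is to spend that effort on continuous convergence of the fidelity term rather than on a new recovery sequence for the quadratic form. Your equi-coercivity sketch (empirical averages on $\Omega^\pm$ plus a spectral-gap/Poincar\'e bound) differs from the paper's --- which for $\tau=0$ subtracts the mean, controls the fluctuation by $(s_n\lambda_2^{(n)})^{-\alpha}J_n^{(\alpha,0)}(v_n)$, and bounds the mean of a minimizer directly --- but it is a plausible alternative and not where the difficulty lies.
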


The analogous result under \hyperlink{labmod2}{{\bf Labelling Model 2}}, i.e. convergence of minimizers, is an open question.
In this case the natural continuum limit of the probit 
objective functional is
\begin{equation} \label{eq:Background:Cont:Probit:Jp2}
\Jp(v) = J_\infty^{(\alpha,\tau)}(v) + \Phipt(v;\gamma)
\end{equation}
where
\begin{equation} \label{eq:Background:Cont:Probit:Phip2}
\Phipt(v;\gamma) = -\sum_{j \in Z'} \log(\Psi(y(x_j)u(x_j);\gamma) \, 
\end{equation}
for a given measurable function $y:\Omega'\to \{\pm 1\}$.
When $\alpha \leq \frac{d}{2}$ this limiting model is not well-posed. In particular the regularity of the functional is not sufficient to impose pointwise data. More precisely, when $\alpha \leq \frac{d}{2}$ then  there exists a sequence of smooth functions $v_k \in C^\infty(\Omega)$ such that $\lim_{k \to \infty} \Jp(v_k) = 0$. In particular when $\alpha < \frac{d}{2}$, consider a smooth, compactly supported,  mollifier $\xii$, with $\xii(0)>0$ and define $v_k(x) = c_k \sum_{i=1}^N y(x_i) \xii_{1/k} (x - x_i)$ where $c_k \to \infty$ sufficiently slowly. Then $\Phipt(v_k;\gamma) \to 0$ as $k \to \infty$ and, by a simple scaling argument (for appropriate $c_k$),
$J_\infty^{(\alpha,\tau)} (v_k) \to 0$  as $k \to \infty$. 
Another way to see that the problem is not well defined is that the functions in 
$\cH^{\alpha}(\Omega)$ (which is the natural space to consider $\Jp$ on) are not continuous in general and evaluating $\Phipt(v;\gamma) $ is not well defined. 

When $\alpha > \frac{d}{2}$  the existence of minimizers of \eqref{eq:Background:Cont:Probit:Jp2} 
in $\cH^\alpha(\Omega)$ is established by the direct method of the calculus of variations using the convexity of $\Jp$ and the fact that, by Lemma \ref{lem:emb_frac}, $\cH^\alpha$ 
 continuously embeds into a set of H\"older continuous functions.

For $\alpha > \frac{d}{2}$ we believe that  the minimizers of $\Jp^{n}$ of  \hyperlink{labmod2}{{\bf Labelling Model 2}} converge to minimizers of \eqref{eq:Background:Cont:Probit:Jp2} in an appropriate regime, but the situation is more complicated than for \hyperlink{labmod1}{{\bf Labelling Model 1}}: under \hyperlink{labmod2}{{\bf Labelling Model 2}} 
\eqref{eq:LimitThmDir:epsSca} is no longer a sufficient condition on the scaling of $\eps$ with $n$ for the convergence to hold. Thus if $\eps \to 0$ too slowly the problem degenerates. In particular 
in the following theorem we identify  the asymptotic behavior of minimizers of $\Jp$ both when 
$\alpha < \frac{d}{2}$, and if $\alpha > \frac{d}{2}$ but $\eps \to 0$ too slowly. 

The proof of the following may be found in section \ref{ssec:PRLabelMod2}.
The theorem is similar in spirit to Proposition 2.2(ii) in \cite{SlepcevThorpe} where a similar phenomenon was discussed for the $p$-Laplacian regularized semi-supervised learning. 
We also mention that the PDE approach to a closely related $p$-Laplacian problem was recently introduced by Calder~\cite{calder17game}.

\begin{theorem}
\label{thm:LimitThmOpt:Probit:pr2neg}
Let the assumptions of \hyperlink{labmod2}{{\bf Labelling Model 2}}, and Theorem~\ref{thm:LimitThmDir:LimitThmDir}  hold.  If  $\alpha>\frac{d}{2}$, $\tau > 0$, and 
\begin{equation} \label{epsn_spike}
 \eps_n n^{\frac{1}{2\alpha}} \to \infty \qquad \te{ as } n \to \infty 
\end{equation}
or if $\alpha<\frac{d}{2}$ then, with probability one, the sequence of minimizers $v_n$ of $\Jp^{(n)}$ converge to $0$  in $TL^2$ as $n \to \infty$. That is, the minimizers of $\Jp^{(n)}$ converge to the minimizer of $J_\infty^{(\alpha,\tau)}$ with the information about the labels being lost in the limit. 
\end{theorem}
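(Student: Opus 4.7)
The plan is to exhibit an explicit test sequence $\tilde v_n$ that (i) nearly satisfies the point labels and (ii) has vanishing $J_n^{(\alpha,\tau)}$-energy. Minimality of $v_n$ will then squeeze $\Jp^{(n)}(v_n)\to 0$, and coercivity of $J_n^{(\alpha,\tau)}$ combined with the almost-sure weak convergence $\mu_n\rightharpoonup\mu$ (which holds under Assumption~\ref{a:rho}) will deliver $v_n\to 0$ in $TL^2$. The underlying heuristic is that in the specified regime, the graph cannot ``see'' spikes that are concentrated at the $O(1)$ many labelled nodes, so the labels can be matched at arbitrarily small regularization cost.

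\textbf{Spike construction and energy bound.} For each $n$ set
\[
\tilde v_n \;=\; h_n \sum_{j\in Z'} y_j\,\one_{\{x_j\}},
\]
where $\one_{\{x_j\}}\in\R^n$ is the indicator of the node $x_j$ and $h_n\to\infty$ will be chosen slowly. Since $|Z'|$ is fixed, $\|\tilde v_n\|_{L^2_{\mu_n}}^2 = |Z'|h_n^2/n$. The choice $s_n = 2/(\sigma_{\eta} n\eps_n^2)$ together with the almost-sure degree estimate $\max_i d_{ii}\lesssim n\rho^+\beta_{\eta}$ (which holds for $n$ large by standard concentration, using $n\eps_n^d\to\infty$) yields $\|s_nL\|_{\mathrm{op}}\lesssim \eps_n^{-2}$, and hence the crude operator-norm bound
\[
\|\disP^{(n)}\|_{\mathrm{op}} \;\leq\; \bigl(s_n\|L\|_{\mathrm{op}} + \tau^2\bigr)^\alpha \;\lesssim\; \eps_n^{-2\alpha}.
\]
Combining,
\[
J_n^{(\alpha,\tau)}(\tilde v_n) \;\leq\; \tfrac12\|\disP^{(n)}\|_{\mathrm{op}}\,\|\tilde v_n\|_{L^2_{\mu_n}}^2 \;\lesssim\; \frac{|Z'|\,h_n^2}{n\eps_n^{2\alpha}}.
\]

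\textbf{Choice of $h_n$ and squeeze.} In the first regime the hypothesis $\eps_n n^{1/(2\alpha)}\to\infty$ is exactly $n\eps_n^{2\alpha}\to\infty$; in the second regime ($\alpha<d/2$) the assumption $\eps_n n^{1/d}\to\infty$ from~\eqref{eq:LimitThmDir:epsSca} together with the elementary inequality $1/(2\alpha)>1/d$ again forces $n\eps_n^{2\alpha}\to\infty$. Set $h_n = (n\eps_n^{2\alpha})^{1/4}$, so that $h_n\to\infty$ and $h_n^2/(n\eps_n^{2\alpha})\to 0$. On the label side, $y_j\tilde v_n(x_j)=h_n$ for every $j\in Z'$, and the Mills-ratio asymptotic $-\log\Psi(h;\gamma)=O(e^{-h^2/2\gamma^2})$ gives
\[
\Phip^{(n)}(\tilde v_n;\gamma) \;=\; -|Z'|\log\Psi(h_n;\gamma) \;\longrightarrow\; 0.
\]
Thus $\Jp^{(n)}(\tilde v_n)\to 0$, and minimality gives $\Jp^{(n)}(v_n)\leq \Jp^{(n)}(\tilde v_n)\to 0$; since both $J_n^{(\alpha,\tau)}$ and $\Phip^{(n)}$ are non-negative, each vanishes in the limit.

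\textbf{Concluding convergence and main obstacle.} In the first regime, the spectral bound $J_n^{(\alpha,\tau)}(u)\geq \tfrac12\tau^{2\alpha}\|u\|_{L^2_{\mu_n}}^2$ (requiring $\tau>0$) yields $\|v_n\|_{L^2_{\mu_n}}\to 0$, and $v_n\to 0$ in $TL^2$ then follows from $\mu_n\rightharpoonup\mu$ via the standard characterization of $TL^2$-convergence in terms of approximate transport maps. The main obstacle lies in the second regime when $\tau=0$: the quadratic form is coercive only modulo constants, so from $J_n^{(\alpha,0)}(v_n)\to 0$ one only gets $\|v_n-\bar v_n\|_{L^2_{\mu_n}}\to 0$ via the asymptotic spectral gap $\lambda_2^{(n)}\to\lambda_2(\cL)>0$, and one must argue separately that the mean $\bar v_n$ itself vanishes. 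The delicate point is that the spike construction above is insensitive to constant shifts, so the control of $\bar v_n$ must come from first-order optimality of $v_n$ together with the two-sided growth of $-\log\Psi(\cdot;\gamma)$, ruling out a non-vanishing constant asymptote; this requires a dedicated convexity argument and is the principal technical difficulty of the proof.
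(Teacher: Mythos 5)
Your argument is correct and rests on exactly the same key idea as the paper's proof: place spikes of slowly diverging height at the finitely many labelled nodes, and use the singleton energy estimate $J_n^{(\alpha,\tau)}(\delta_{x_i})\sim (n\eps_n^{2\alpha})^{-1}$ (your operator-norm bound $\|s_nL\|_{\mathrm{op}}\lesssim\eps_n^{-2}$ is precisely what the paper proves in its appendix, there via the $\mathrm{OT}^\infty$ transport map rather than degree concentration). Where you diverge is in the endgame. The paper packages the result as a full $\Gamma$-convergence statement, $\Jp^{(n)}\Gammato J_\infty^{(\alpha,\tau)}$: it grafts spikes of height $c_n\to\infty$ onto a recovery sequence for an \emph{arbitrary} $v\in\cH^\alpha(\Omega)$, checks that both the added energy and the fidelity term vanish, and then concludes via precompactness (statement 3 of Theorem~\ref{thm:LimitThmDir:LimitThmDir}) together with uniqueness of the minimizer of $J_\infty^{(\alpha,\tau)}$. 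Your squeeze is more economical -- a single test function plus the coercivity $J_n^{(\alpha,\tau)}(u)\ge\tfrac12\tau^{2\alpha}\|u\|_{L^2_{\mu_n}}^2$ gives $\|v_n\|_{L^2_{\mu_n}}\to0$ directly, bypassing the liminf inequality and the compactness machinery -- at the price of not identifying the $\Gamma$-limit itself, which is what substantiates the paper's interpretive claim that the label information is lost in the limit. Your handling of the $\alpha<\tfrac{d}{2}$ case (deducing \eqref{epsn_spike} from the lower bound \eqref{eq:LimitThmDir:epsSca} since $\tfrac{1}{2\alpha}>\tfrac1d$) is exactly the paper's observation. Finally, the ``main obstacle'' you flag for $\tau=0$ is moot: the conclusion refers to ``the minimizer of $J_\infty^{(\alpha,\tau)}$'', which is $0$ and is unique only when $\tau>0$, and the paper's proof uses $\tau>0$ in both branches (for precompactness and for uniqueness of the limit); with $\tau=0$ every constant minimizes $J_\infty^{(\alpha,0)}$ and the statement would have to be reformulated, so no dedicated convexity argument controlling $\bar v_n$ is required here.
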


\begin{rem}
\label{rem:LimitThmOpt:Probit:pr2neg}
We believe, but do not have a proof, that 
for $\alpha>\frac{d}{2}$ and $\tau>0$, if 
\[  \eps_n n^{\frac{1}{2\alpha}} \to 0 \qquad \te{ as } n \to \infty \]
then, with probability one, any sequence of minimizers $v_n$ of $\Jp^{(n)}$ is sequentially compact in $TL^2$ with $\lim_{n\to \infty} \Jp^{(n)}(v_n) = \min_{v\in L^2_{\mu}} \Jp(v)$ given by  
\eqref{eq:Background:Cont:Probit:Jp2}, \eqref{eq:Background:Cont:Probit:Phip2}. 
If this holds then, under \hyperlink{labmod2}{{\bf Labelling Model 2}}, 
$\Jp^{(n)}(u)$ converges
in an appropriate sense to a limiting objective function $\Jp(u)$. 
Our numerical results support this conjecture.

It is also of interest to consider the
limiting probability distributions which arise under the
two labelling models. Under \hyperlink{labmod2}{{\bf Labelling Model 2}} this density has,
in physicist's notation, ``Lebesgue density'' $\exp\bigl(-\Jp(u)\bigr).$ 
Under \hyperlink{labmod1}{{\bf Labelling Model 1}}, however,
we have shown that $\Jp^{(n)}(u)$ converges
in an appropriate sense to a limiting objective function $\Jp(u)$
implying  that (again in physicist's notation) 
$\exp\bigl(-r_n^{-1}\Jp^{(n)}(u)\bigr) \approx \exp\bigl(- n\Jp(u)\bigr)$.
Thus under \hyperlink{labmod1}{{\bf Labelling Model 1}} the posterior probability concentrates on a Dirac measure at the minimizer of $\Jp(u)$.
\end{rem}

Based on this remark, the natural continuum probability limit concerns 
\hyperlink{labmod2}{{\bf Labelling Model 2}}.
The posterior probability is then given by
\begin{equation}
\label{eq:probprob}
\nipt(\dd u)=\frac{1}{Z_{\rm p,2}}e^{-\Phipt(u;\gamma)}\nu_0(\dd u)
\end{equation}
where $\nu_0$ is the centred Gaussian with covariance $\cC$ given in
Theorem \ref{t:g} and $\Phipt$ is given by
\eqref{eq:Background:Cont:Probit:Phip2}.
Since we require pointwise evaluation to make
sense of $\Phipt(u;\gamma)$ we, in general, require
$\alpha>d$; however Proposition \ref{t:gold} gives conditions
under which $\alpha>\frac{d}{2}$ will suffice. 
We will also consider
the probability measure $\nipo$ defined by
\begin{equation} \label{def_nipo}
\nipo(\dd u) = \frac{1}{Z_{\rm p,1}}e^{-\Phipo(u;\gamma)}\nu_0(\dd u)
\end{equation}
where $\Phipo$ is given by \eqref{eq:Background:Cont:Probit:Phip}.
The function $\Phipo(u;\gamma)$ is defined in an $L^2_{\mu}$ sense
and thus we require only $\alpha>\frac{d}{2}$ -- see Theorem \ref{t:g}.
Note, however, that this is not the limiting probability distribution
that we expect for \hyperlink{labmod1}{{\bf Labelling Model 1}} with the parameter choices
leading to Theorem \ref{thm:LimitThmOpt:Probit:pr} since the argument
above suggests that this will concentrate on a Dirac. However we include
the measure $\nipo$ in our discussions because, as we will show,
it coincides with the analogous Bayesian level set measure $\nilo$ 
(defined below) in the small observational noise limit.
Since $\nilo$ can be obtained by a natural scaling of the graph algorithm,
which does not concentrate on Dirac, the relationship between $\nipo$
and $\nilo$ is of interest as they are both, for small noise, relaxations
of the same limiting object.

\subsection{Bayesian Level Set}
\label{ssec:BLS2}

We now study probabilistic analogues of the Bayesian level set method,
again using the measure $\nu_0$ which is the centred Gaussian with covariance $\cC$ given in
Theorem \ref{t:g} for some $\alpha>\frac{d}{2}$.
Note that, from equation \eqref{eq:blsb}, for \hyperlink{labmod1}{{\bf Labelling Model 1}},
\begin{align*}
r_n \Phil^{(n)}(u;\gamma)&=\frac{1}{2\gamma^2} \frac{1}{n} \sum_{j \in Z'} \bigl|y(x_j)-S\bigl(u(x_j)\bigr)\bigr|^2\\
&\approx \int_{\Omega'} \frac{1}{2\gamma^2}\bigl|y(x)-S\bigl(u(x)\bigr)\bigr|^2 \, \dd \mu(x)\\
&:= \Philo(u;\gamma)
\end{align*}
by a law of large numbers type argument of the type underlying the proof of
Theorem \ref{thm:LimitThmOpt:Probit:pr}. 

Recall that, from the discussion following Proposition \ref{prop:G:BLS:Infimum},
this scaling corresponds to employing the finite dimensional Bayesian
level set model with observational variance 
$\gamma^2 n$ so that the variance per observation is constant.
Then the natural limiting probability measure is, in physicists notation,
$\exp\bigl(-\Jl(u)\bigr)$ where
\[ \Jl(u) = J_\infty^{(\alpha,\tau)}(u) + \Philo(u;\gamma). \]
Expressed in terms of densities with respect to the Gaussian prior this gives
\begin{equation}
\label{eq:probB}
\nilo(\dd u)=\frac{1}{Z_{\rm ls,1}} e^{-\Philo(u;\gamma)}\nu_0(\dd u).
\end{equation}
Since $\Philo(u;\gamma)$ makes sense in $L^2_\mu$ we require 
only $\alpha>\frac{d}{2}$. 
The measure $\nilo$ is the natural analogue of the finite dimensional
measure $\nil^{(n)}$ under this label model.
Under \hyperlink{labmod2}{{\bf Labelling Model 2}} we take $r_n=1$. 
We obtain a measure $\nilt$ in the form \eqref{eq:probB} found by
replacing $\nilo$ by $\nilt$ and $\Philo$ by 
\begin{equation}
\label{eq:probBc}
\Philt(u;\gamma):=
\sum_{j \in Z'} \frac{1}{2\gamma^2}\bigl|y(x_j)-S\bigl(u(x_j)\bigr)\bigr|^2. 
\end{equation}
In this case the observational variance is not-rescaled by $n$ since
the total number of labels is fixed.
Since we require pointwise evaluation to make
sense of $\Philt(u;\gamma)$ we, in general, require 
$\alpha>d$; however Proposition \ref{t:gold} gives conditions 
under which $\alpha>\frac{d}{2}$ will suffice.

\begin{rem}
Note that $\Jl^{(n)}$ and $\Jl$
cannot be connected via $\Gamma$-convergence.
Indeed, if $\Jl = \Glim_{n\to \infty} \Jl^{(n)}$ then $\Jl$ would be lower semi-continuous~\cite{braides02}.
When $\tau>0$ compactness of minimizers follows directly from the compactness property of the quadratic forms $J_n^{(\alpha,\tau)}$, see Theorem~\ref{thm:LimitThmDir:LimitThmDir}.
Now since compactness of minimizers plus lower semi-continuity implies existence of minimizers then the above reasoning implies there exists minimizers of $\Jl$.
But as in the discrete case, Proposition \ref{prop:G:BLS:Infimum}, multiplying any $u$ by a constant less than one leads to a smaller value of $\Jl$.
Hence the infimum cannot be achieved.
It follows that $\Jl \neq \Glim_{n\to \infty} \Jl^{(n)}$.
\end{rem}

\subsection{Small Noise Limit}
\label{ssec:SNL2}

As for the finite graph problems, the labeled data can be viewed as arising from different generative models. 
In the probit formulation, the generative models for the labels are given by
\begin{align*}
y(x) &= S(u(x) + \noise(x)),\quad \noise \sim N(0,\gamma^2I),\\
y(x_j) &= S(u(x_j) + \noise_j),\quad \noise_j \iid N(0,\gamma^2)\mt{,}
\end{align*}
for \hyperlink{labmod1}{{\bf Labelling Model 1}}, \hyperlink{labmod2}{{\bf Labelling Model 2}} respectively; $S$ is the sign function. The functionals $\Phipo$, $\Phipt$ then arise as the negative log-likelihoods from these models. Similarly, in the Bayesian level set formulation the generative models are given by
\begin{align*}
y(x) &= S(u(x)) + \noise(x),\quad \noise\sim N(0,\gamma^2I),\\
y(x_j) &= S(u(x_j)) + \noise_j,\quad \noise_j \iid N(0,\gamma^2).
\end{align*}
leading to the functionals $\Philo$, $\Philt$.

We show that in the zero noise limit the Bayesian level set and probit posterior distributions coincide. However for $\gamma > 0$ they differ: note, for example, that the probit model enforces binary data, whereas the Bayesian level set model does not. It has been observed that the Bayesian level set posterior can be used to produce similar quality classification to the Ginzburg-Landau posterior, at significantly lower computational cost \cite{barcode}.
The small noise limit is important for two reasons: firstly in many 
applications labelling is very accurate and considering the
zero noise limit is therefore instructive; secondly recent work
\cite{Luo18} shows that the zero noise limit provides useful information
about the efficiency of algorithms applied to sample the posterior
distribution and, in particular, constants derived from the
zero noise limit appear in lower bounds on average acceptance
probability and mean square jump in such algorithms.

Proof of the following is given in section~\ref{ssec:SNL3}.

\begin{theorem}
\label{thm:ltp:probit:ZeroNoise}
\leavevmode
\begin{enumerate}
\item[(i)] Let Assumptions \ref{a:omega}--\ref{a:rho} hold, and assume that $\alpha > d$. Let the assumptions of \hyperlink{labmod1}{{\bf Labelling Model 1}} hold.
Define the set
\[
B_{\infty,1} = \{u \in C(\Omega;\bbR)\,|\,y(x)u(x)>0 \;{\rm for\,a.e.}\; x \in \Omega'\}
\]
and the probability measure
$$\nu_1(\dd u)=\sZ^{-1}\one_{B_{\infty,1}}(u)\nu_0(\dd u)$$
where $\sZ = \nu_0(B_{\infty,1})$.
Consider the posterior measures $\nipo$ defined in \eqref{def_nipo} and 
$\nilo$ defined in \eqref{eq:probB}.
Then $\nipo \toweak \nu_1$ and $\nilo \toweak \nu_1$ as $\gamma \to 0$.

\item[(ii)] Let Assumptions \ref{a:omega}--\ref{a:rho} hold, and assume that $\alpha > d$. 
Let the assumptions of \hyperlink{labmod2}{{\bf Labelling Model 2}} hold. Define the set
\[
B_{\infty,2} = \{u \in C(\Omega;\bbR)\,|\,y(x_j)u(x_j) >0 \;{\rm for\,each}\; j \in Z'\}
\]
and the probability measure
$$\nu_2(\dd u)=\sZ^{-1}\one_{B_{\infty,2}}(u)\nu_0(\dd u)$$
 where $\sZ = \nu_0(B_{\infty,2})$.
Then $\nipt \toweak \nu_2$ and $\nilt \toweak \nu_2$ as $\gamma \to 0$.
\end{enumerate}
\end{theorem}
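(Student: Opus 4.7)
The plan is to prove all four weak convergences via a single dominated-convergence argument applied to the Radon--Nikodym density with respect to the prior $\nu_0$. Since $\alpha>d$, Theorem~\ref{t:g} guarantees that $\nu_0$ is concentrated on $C(\Omega;\R)$, so the conditioning sets $B_{\infty,1}$ and $B_{\infty,2}$ are $\nu_0$-measurable and pointwise evaluation is well defined $\nu_0$-a.s. For any bounded continuous $f:C(\Omega;\R)\to\R$ and any $\Phi\in\{\Phipo,\Phipt,\Philo,\Philt\}$ I will write
\[
\int f\,d\pi^\gamma \;=\; \frac{\int f(u)\,e^{-\Phi(u;\gamma)}\,d\nu_0(u)}{\int e^{-\Phi(u;\gamma)}\,d\nu_0(u)}.
\]
Each $\Phi(\cdot;\gamma)\ge 0$ (for probit because $\Psi\le 1$; for BLS trivially), hence $e^{-\Phi(u;\gamma)}\le 1$ is a $\gamma$-uniform dominating function. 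It therefore suffices to (a) identify the $\nu_0$-a.s.\ pointwise limit of $e^{-\Phi(u;\gamma)}$ as $\gamma\to 0$, and (b) verify $\nu_0(B_{\infty,\ast})>0$; the latter holds because $B_{\infty,\ast}$ contains an open set in $C(\Omega;\R)$ (e.g., functions with $yu\ge c>0$ on $\overline{\Omega'}$) and $\nu_0$, being a nondegenerate Gaussian with strictly positive covariance eigenvalues, has full topological support.

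\textbf{Pointwise limits.} For BLS a direct computation gives, $\nu_0$-a.s., $\Philt(u;\gamma)=\frac{2}{\gamma^2}\#\{j\in Z': y(x_j)\ne S(u(x_j))\}$ and $\Philo(u;\gamma)=\frac{2}{\gamma^2}\mu(\{x\in\Omega': y(x)\ne S(u(x))\})$, both of which vanish on the appropriate $B_{\infty,\ast}$ and diverge to $+\infty$ off it. For probit, the elementary observation $\Psi(s;\gamma)\ge\tfrac12$ for $s\ge 0$ gives $0\le -\log\Psi(s;\gamma)\le\log 2$ for $s\ge 0$, while $-\log\Psi(s;\gamma)\to+\infty$ for any fixed $s<0$ as $\gamma\to 0$. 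For the finite sum $\Phipt$ this immediately yields $\Phipt(u;\gamma)\to 0$ on $B_{\infty,2}$ and $\to+\infty$ off $B_{\infty,2}$, up to the $\nu_0$-null set $\{u:u(x_j)=0\text{ for some } j\in Z'\}$ (each marginal $u(x_j)$ is a nondegenerate Gaussian scalar). For $\Phipo$, the bound $-\log\Psi\le\log 2$ on the positivity set supplies a $\gamma$-uniform bounded $\mu$-integrable majorant on $\{y(x)u(x)>0\}$; on $B_{\infty,1}$ the complementary set $\{yu\le 0\}$ has $\mu$-measure zero, so dominated convergence in the $x$-integral gives $\Phipo(u;\gamma)\to 0$, while off $B_{\infty,1}$ the integrand diverges on a set of positive $\mu$-measure and Fatou's lemma gives $\Phipo(u;\gamma)\to+\infty$. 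The exceptional set $\{u:\mu(\{x\in\Omega': u(x)=0\})>0\}$ is $\nu_0$-null by Fubini, since $\bbE_{\nu_0}\mu(\{u(x)=0\}\cap\Omega')=\int_{\Omega'}\nu_0(u(x)=0)\,d\mu(x)=0$.

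\textbf{Conclusion and main obstacle.} Combining the $\nu_0$-a.s.\ limit $e^{-\Phi(u;\gamma)}\to\one_{B_{\infty,\ast}}(u)$ with the uniform bound $e^{-\Phi(u;\gamma)}\le 1$, dominated convergence applied separately to numerator and denominator yields $\int f\,d\pi^\gamma\to \sZ^{-1}\int_{B_{\infty,\ast}}f\,d\nu_0=\int f\,d\nu_\ast$ for every bounded continuous $f$, i.e.\ $\pi^\gamma\toweak\nu_\ast$, establishing both parts (i) and (ii) simultaneously for probit and BLS. The main technical subtlety is the double interchange of limits for $\Phipo$ in part (i): one must first pass $\gamma\to 0$ inside the $\mu$-integral over $\Omega'$, and then inside the $\nu_0$-integral; the first interchange is made rigorous precisely by the uniform bound $-\log\Psi(\cdot;\gamma)\le\log 2$ on $\{yu>0\}$ together with the fact that, for $u\in B_{\infty,1}$, the complementary set is $\mu$-null, while the second uses the prior envelope $e^{-\Phi}\le 1$ together with the Fubini argument ruling out degenerate vanishing of $u$ on $\Omega'$. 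These are the only places where the regularity hypothesis $\alpha>d$ (via continuous samples) and the nondegeneracy of $\nu_0$ enter in an essential way.
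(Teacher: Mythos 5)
Your proposal is correct and follows the same architecture as the paper's proof: test against bounded continuous $g$, establish $\nu_0$-a.s.\ pointwise convergence of $e^{-\Phi(u;\gamma)}$ to $\one_{B_{\infty,\ast}}(u)$, and conclude by dominated convergence with the envelope $e^{-\Phi}\le 1$, treating the degenerate event $\{u=0 \text{ on a non-null set}\}$ as a $\nu_0$-null exception. Two of your sub-arguments differ from the paper's and are, if anything, cleaner: (a) for $\Phipo(u;\gamma)\to 0$ on $B_{\infty,1}$ you apply dominated convergence in $x$ with the constant majorant $\log 2$ coming from $\Psi(s;\gamma)\ge\frac12$ for $s\ge 0$, whereas the paper uses a Mills-ratio bound together with an exhaustion $\Omega_\VE'\subseteq\Omega'$ (which yields a rate but is unnecessary here); (b) you dispose of the null set $\{u:\mu(\{x\in\Omega':u(x)=0\})>0\}$ by a self-contained Fubini computation, whereas the paper invokes Proposition 7.2 of \cite{iglesias2015bayesian} (your argument is essentially the proof of that proposition). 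You are also more explicit than the paper about why $\sZ=\nu_0(B_{\infty,\ast})>0$, which the paper leaves implicit. One trivial slip: your closed forms for $\Philo$ and $\Philt$ assign weight $\tfrac{2}{\gamma^2}$ to points where $u$ vanishes, where the correct contribution from $|y-S(u)|^2=1$ is $\tfrac{1}{2\gamma^2}$; this does not affect the vanishing/divergence dichotomy on which the argument rests.
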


\begin{rem}
\label{rem:zeronoise_alpha}
The assumption that $\alpha > d$ in both parts of the above theorem can be relaxed to $\alpha > d/2$ if the conclusions of Proposition \ref{t:gold} are satisfied.
\end{rem}

\subsection{Kriging}
\label{ssec:KR2}
One can define kriging in the continuum setting \cite{wahba1990spline} analogously to the discrete setting; we consider this numerically in section \ref{sec:N}. In the case of \hyperlink{labmod2}{{\bf Labelling Model 2}}, the limiting
problem is to 
\[
\text{minimize}\,\,\Jkr(u) := J_\infty^{(\alpha,\tau)}(u)\text{ subject to } u(x_j) = y_j\text{ for all }j \in Z'.
\]
Kriging may also be defined for \hyperlink{labmod1}{{\bf Labelling Model 1}} and without the hard constraint in the continuum setting, but we do not discuss either of these
scenarios here.

\section{Numerical Illustrations}
\label{sec:N}

In this section we describe the results of numerical experiments
which illustrate or extend the developments in the preceding
sections.  In section \ref{ssec:51} we study the effect of the geometry
of the data on the classification problem, by studying an illustrative
example in dimension $d=2.$ Section \ref{ssec:Epsilon} studies how
the relationship between the length-scale $\epsilon$ and the graph size
$n$ affects limiting behaviour. In section \ref{ssec:Extrap} we
study graph based kriging. Finally, in section \ref{ssec:MCMC}, we study
continuum problems from the Bayesian perspective,   
studying the quantification of
uncertainty in the resulting classification.

\subsection{Effect of Data Geometry on Classification}
\label{ssec:51}
We study how the geometry of the data affects the classification under \hyperlink{labmod1}{{\bf Labelling Model 1}}, using the continuum probit model. Let $\Omega = (0,1)^2$. We first consider a uniform distribution $\rho$ on the domain, and choose $\Omega_+,\Omega_-$ to be balls of radius 0.05  centred at (0.25,0.25), (0.75,0.75) respectively. The decision boundary is then naturally the perpendicular bisector of the line segment joining the centers of these balls. We then modify $\rho$ by introducing a channel of increasing depth in $\rho$ dividing the domain in two vertically, and look at how this affects the decision boundary. Specifically, given $h \in [0,1]$ we define $\rho_h$ to be constant in the $y$-direction, and assume the cross-sections in the $x$-direction are as shown in Figure \ref{fig:cross}, so that the channel has depth $1-h$. In order to numerically estimate the continuum probit minimizers, we construct a finite-difference approximation to each $\cL$ on a uniform grid of 65536 points, which then provides an approximation to $\cA$. The objective function $\Jp^{(\infty)}$ is then minimized numerically using the linearly-implicit gradient flow method described in \cite{UQ17}, Algorithm 4.

We consider both the effect of the channel depth parameter $h$ and the parameter $\alpha$ on the classification; we fix $\tau = 10$ and $\gamma = 0.01$. In Figure \ref{fig:channel} we show the minimizers arising from $5$ different choices of $h$ and $\alpha = 1,2,3$. As the depth of the channel is increased, the minimizers begin to develop a jump along the channel.  As $\alpha$ is increased, the minimizers become less localized around the \mt{labeled} regions, and the jump along the channel becomes sharper as a result. Note that the scale of the minimizers decreases as $\alpha$ increases. This could formally be understood from a probabilistic point of view: under the prior we have $\mathbb{E}\|u\|^2_{L^2} = \mathrm{Tr}(\cA^{-1}) \asymp \tau^{-2\alpha}$, and so a similar scaling may be expected to hold for the MAP estimators. In Figure \ref{fig:channel_sign} we show the sign of each minimizer in Figure \ref{fig:channel} to illustrate the resulting classifications. As the depth of the channel is increased, the decision boundary moves continuously from the diagonal to the vertical bisector of the domain, with the transitional boundaries appearing almost as a piecewise linear combination of both boundaries. We also see that, despite the minimizers themselves differing significantly for different $\alpha$, the classifications are almost invariant with respect to $\alpha$.

\label{ssec:Channel}
\begin{figure}
\centering
\includegraphics[width=\textwidth,trim=3cm 2cm 3cm 2cm,clip]{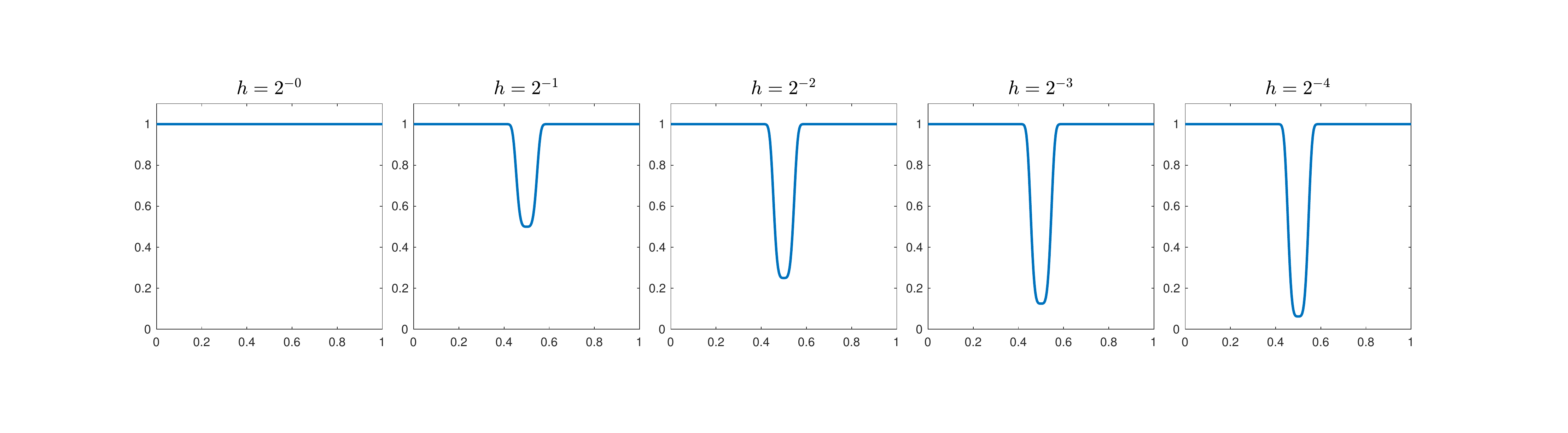}
\caption{The cross sections of the data densities $\rho_h$ we consider in subsection \ref{ssec:Channel}.}
\label{fig:cross}
\end{figure}

\begin{figure}
\centering
\includegraphics[width=\textwidth,trim=4cm 1.5cm 1cm 0cm,clip]{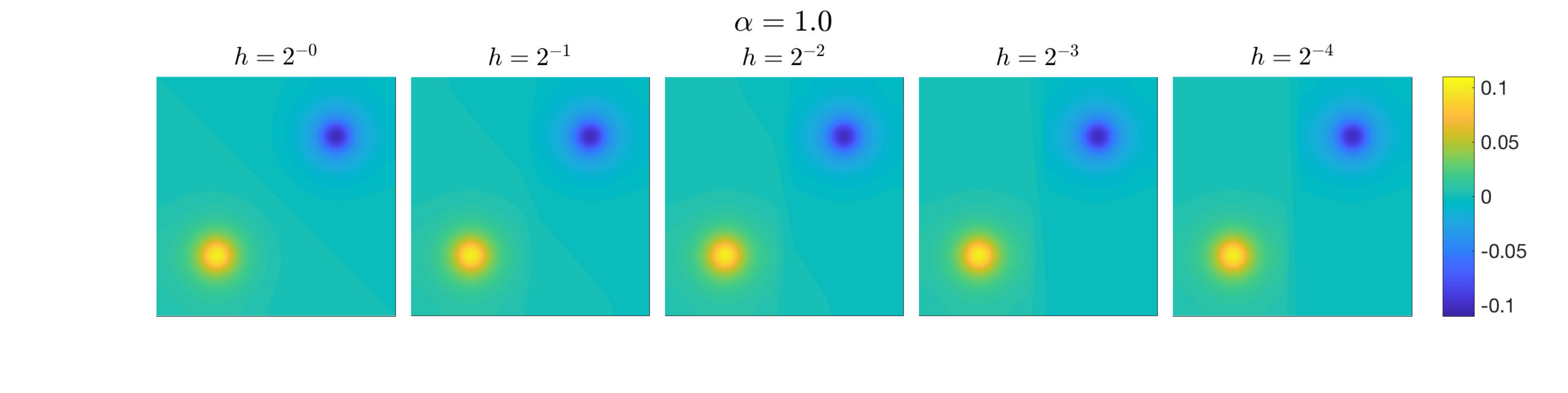}
\includegraphics[width=\textwidth,trim=4cm 1.5cm 1cm 0cm,clip]{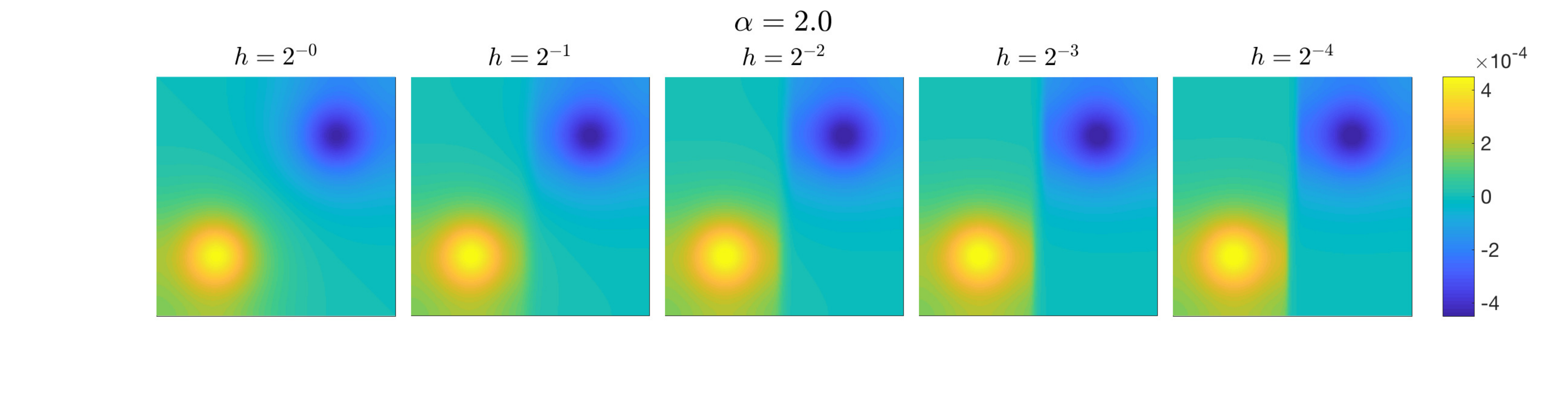}
\includegraphics[width=\textwidth,trim=4cm 2cm 1cm 0cm,clip]{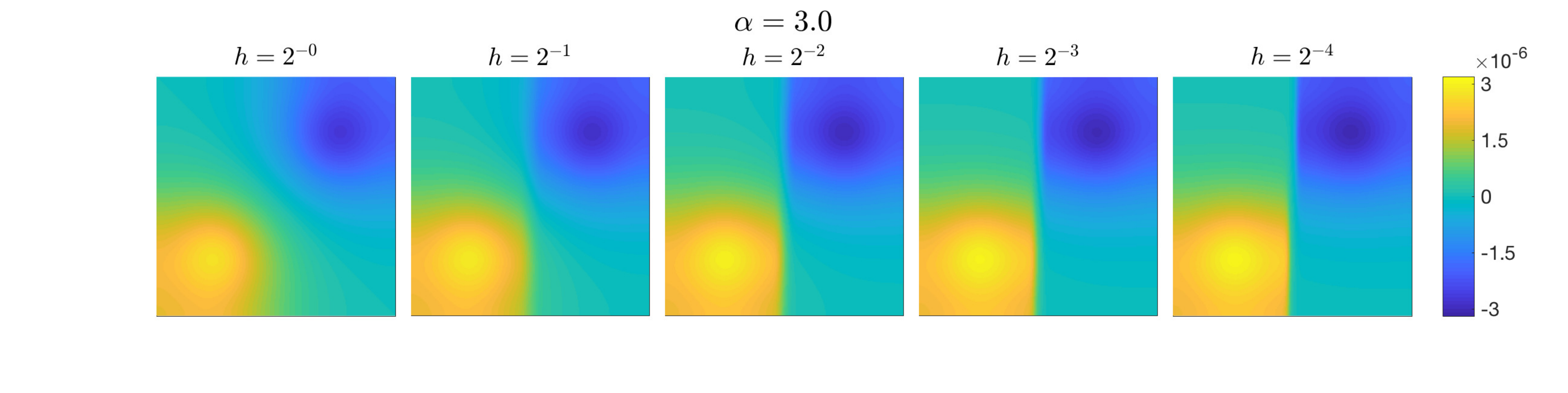}
\caption{The minimizers of the functional $\Jp^{(\infty)}$ for different values of $h$ and $\alpha$, as described in subsection \ref{ssec:Channel}.}
\label{fig:channel}
\end{figure}

\begin{figure}
\centering
\includegraphics[width=\textwidth,trim=4cm 1.5cm 1cm 0cm,clip]{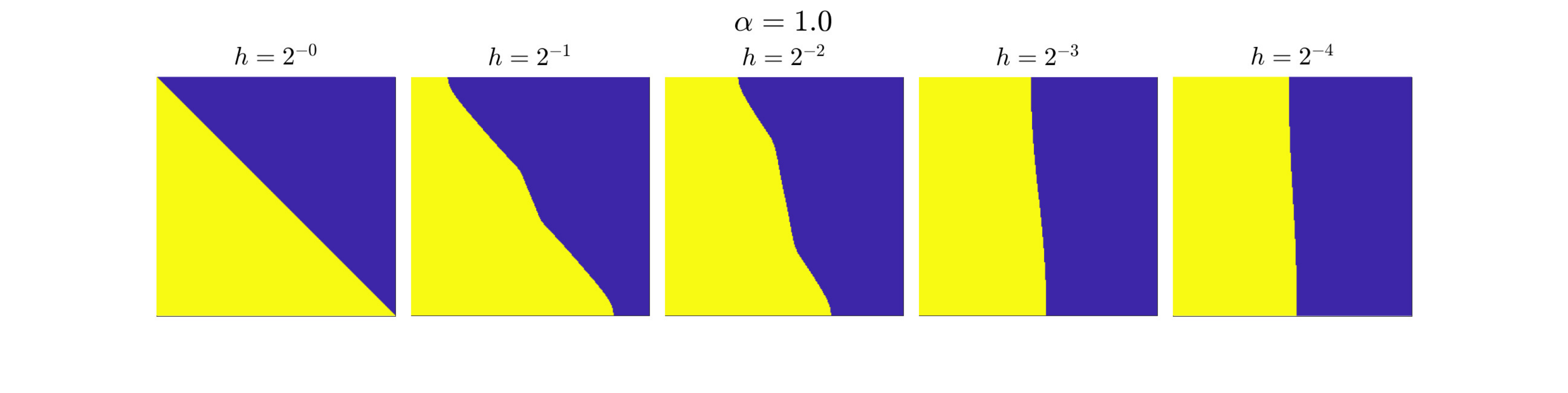}
\includegraphics[width=\textwidth,trim=4cm 1.5cm 1cm 0cm,clip]{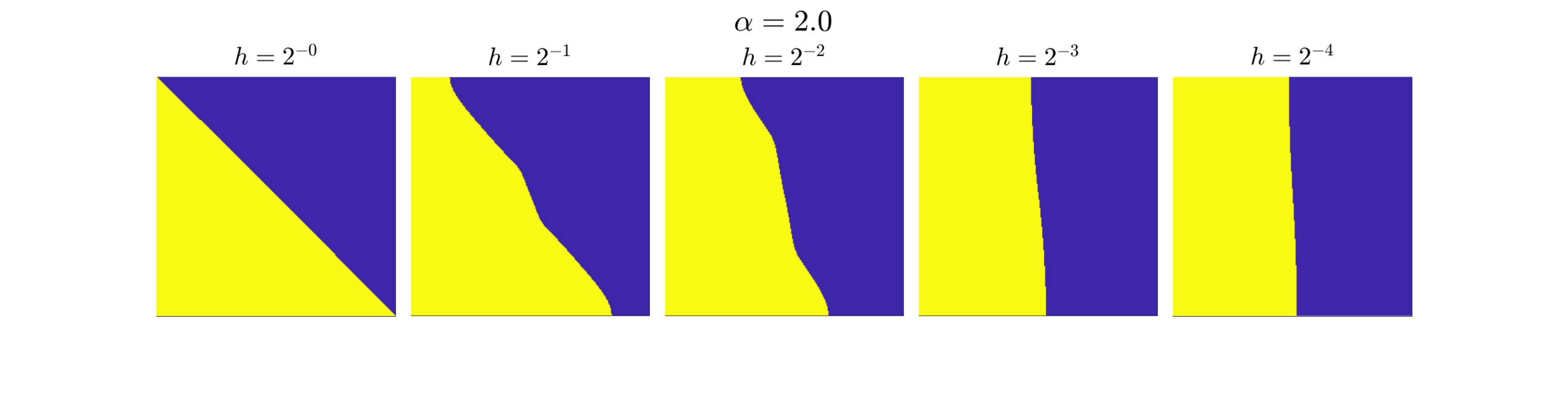}
\includegraphics[width=\textwidth,trim=4cm 2cm 1cm 0cm,clip]{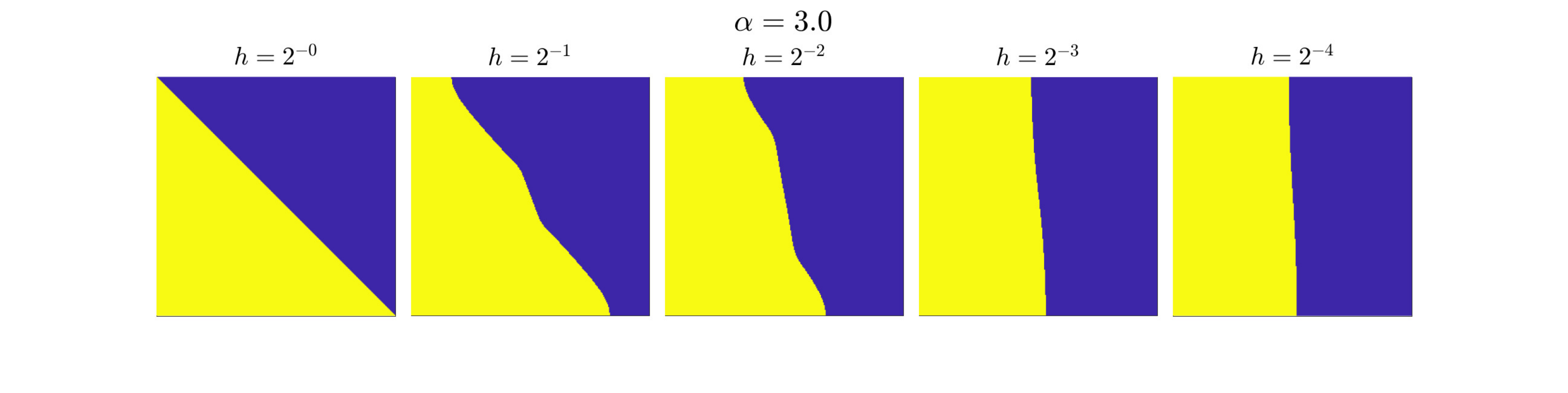}
\caption{The sign of minimizers from Figure \ref{fig:channel}, showing the resulting classification.}
\label{fig:channel_sign}
\end{figure}

\subsection{Localization Bounds for Kriging and Probit}
\label{ssec:Epsilon}
\mt{
We study how the rate affects convergence to the continuum limits when the localization parameter decreases and the number of data points $n$ is increased.}
We consider \hyperlink{labmod2}{{\bf Labelling model 2}} using both the kriging and probit models; this serves to illustrate the result of Theorem \ref{thm:LimitThmOpt:Probit:pr2neg}, motivate Remark \ref{rem:LimitThmOpt:Probit:pr2neg}, and provide a relation to the results of \cite{SlepcevThorpe}.

We work on the domain $\Omega = (0,1)^2$ and take a uniform data distribution $\rho$. In all cases we fix two datapoints which we label with opposite signs, and sample the remaining $n-2$ datapoints. For kriging we consider the situation where
the data is viewed as noise-free so that the label values are interpolated. We calculate the minimizer $u_n$ of $\Jkr^{(n)}$ numerically via the closed form solution
\[
u_n = A^{(n),-1}R^*(RA^{(n),-1}R^*)^{-1}y,
\]
where $R \in \R^{2\times n}$ is the mapping taking vectors to their values at the \mt{labeled} points. In order to numerically estimate the continuum minimizer $u$ of $\Jkr^{(\infty)}$, we construct a finite-difference approximation to $\cL$ on a uniform grid of 65536 points. This leads to an approximation $\hat{\cA}$ to $\cA$, from which we again use the closed form solution to compute $\hat{u}\approx u$:
\[
\hat{u} = \hat{\cA}^{-1}\hat{R}^*(\hat{R}\hat{\cA}^{-1}\hat{R}^*)^{-1}y,
\]
where $\hat{R}\in\R^{2\times 65556}$ takes discrete functions to their values at the \mt{labeled} points.

In Figure \ref{fig:rates_krig} (left) we show how the $L^2_{\mu_n}$ error between $u_n$ and $\hat{u}$ varies with respect to $\eps$ for increasing values of $n$. All errors are averaged over 200 realizations of the \mt{unlabeled} datapoints, and we consider 100 uniformly spaced values of $\eps$ between 0.005 and 0.5. We see that $\eps$ must belong to a `sweet-spot' in order to make the error small -- if $\eps$ is too small or too large convergence doesn't occur. The right hand side of the figure shows how these lower and upper bounds vary with $n$; the bounds are defined numerically as the points where the second derivative of the error curve changes sign. The rates are in agreement with the results and conjectures up to logarithmic terms, although the sharp bounds are not obtained -- we see that the lower bounds are larger than $\mathcal{O}(n^{-\frac{1}{2}})$, and the upper bounds are smaller than $\mathcal{O}(n^{-\frac{1}{2\alpha}})$. It is possible that the sharp bounds may be approached in a more asymptotic (and computationally infeasible) regime.

Similarly, we note that the minimum error for $\alpha=2$ in Figure~\ref{fig:rates_krig} decreases very slowly in the range of $n$ we considered. This again indicates that we are not yet in the asymptotic regime at $n=1600$. Further experiments (not included) for larger values of $n$ show that the minimum error does converge as $n\to \infty$ as expected.

For the probit model we take $\gamma = 0.01$ and use the same gradient flow algorithm as in subsection \ref{ssec:Channel} for both the continuum and discrete minimizers. Figure \ref{fig:rates_prob} shows the errors, analogously to Figure \ref{fig:rates_krig}. Note that the errors are plotted on logarithmic axes here, as unlike the kriging minimizers, there is no restriction for the minimizers to be on the same scale as the labels. We see that the same trend is observed in terms of requiring upper and lower bounds on $\eps$, and a shift of the error curves towards the left as $n$ is increased.

\begin{figure}
\centering
\includegraphics[width=0.49\textwidth,trim=0cm 0cm 0cm 0cm,clip]{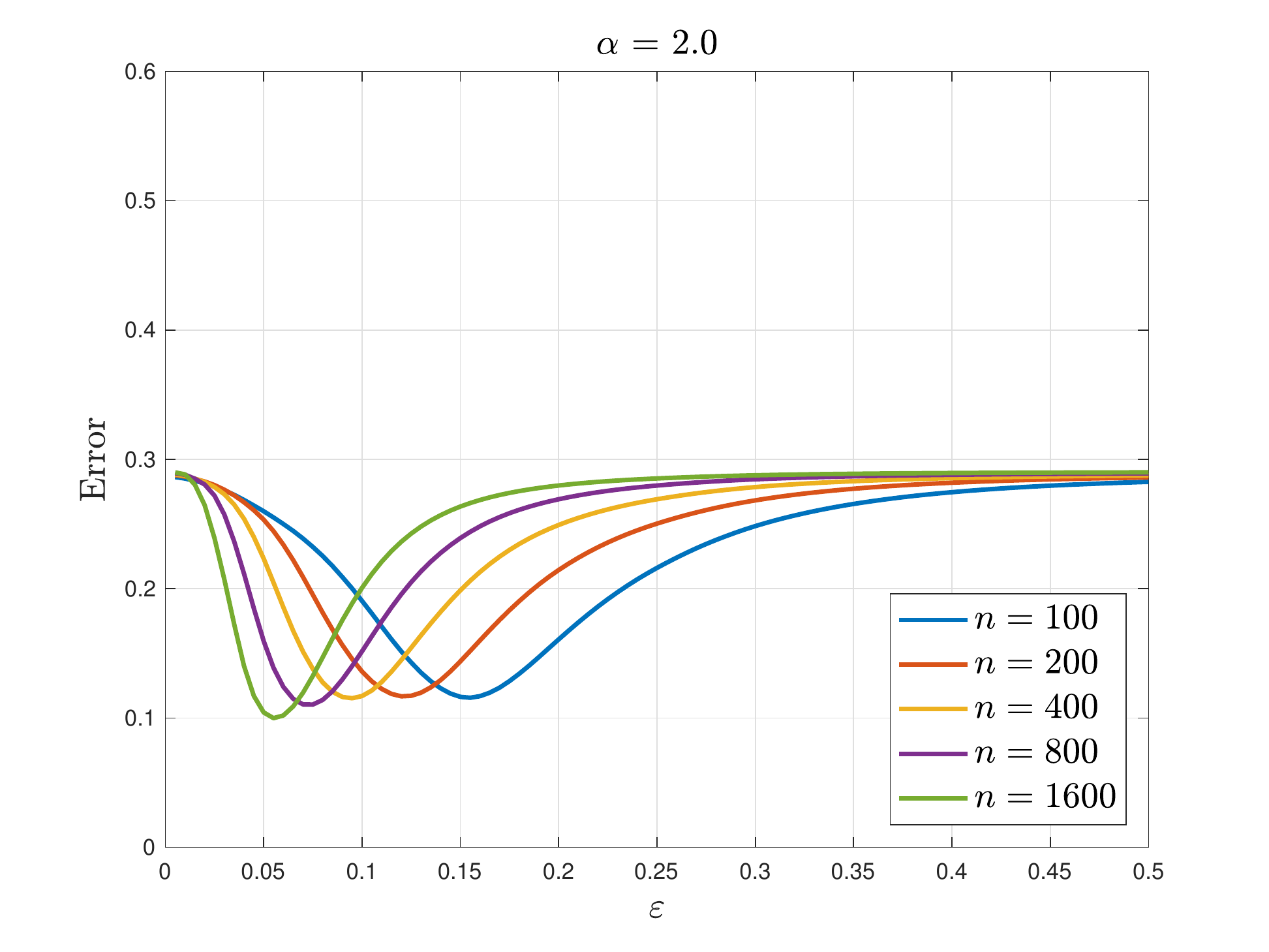}
\includegraphics[width=0.49\textwidth,trim=0cm 0cm 0cm 0cm,clip]{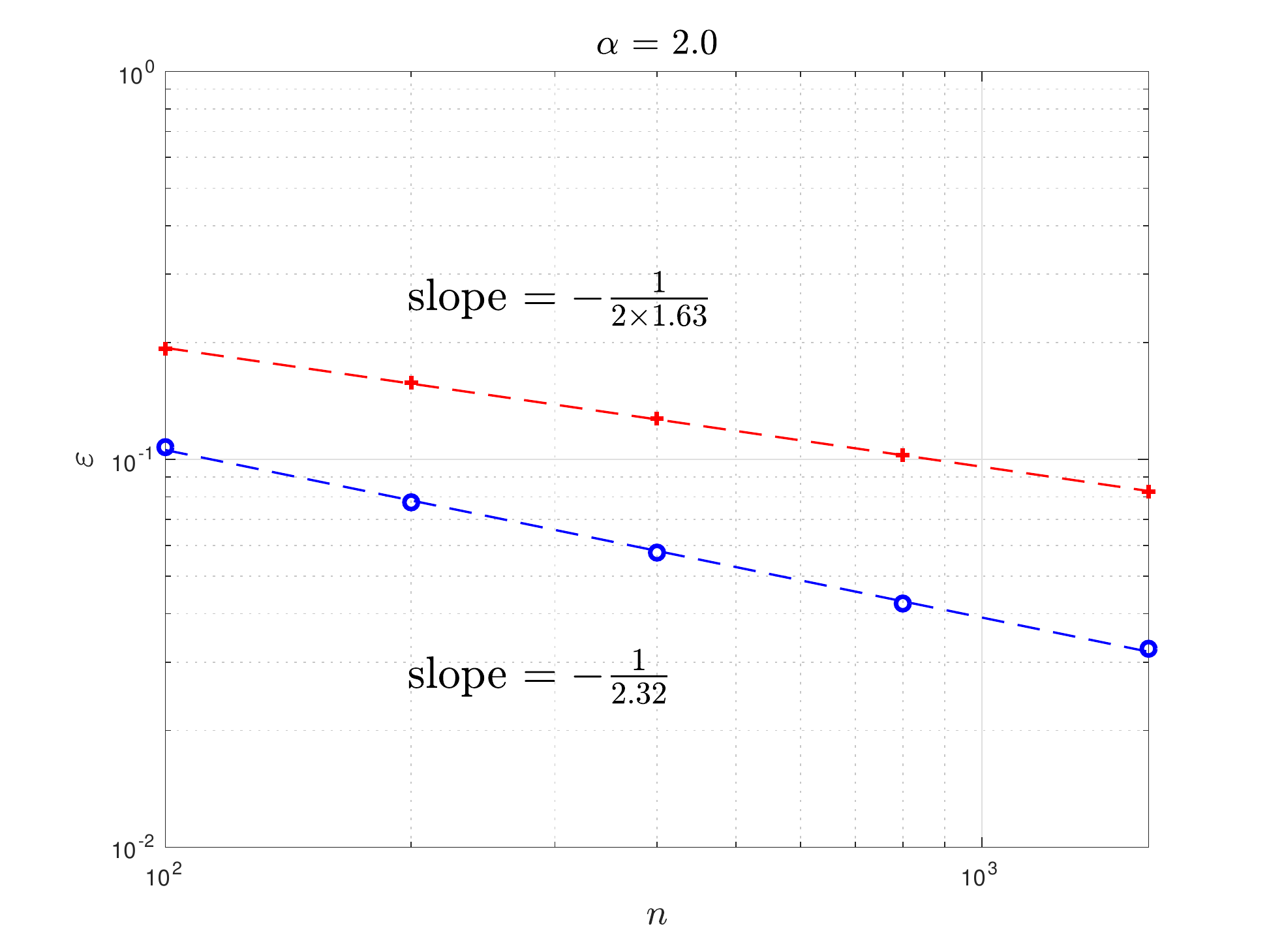}
\includegraphics[width=0.49\textwidth,trim=0cm 0cm 0cm 0cm,clip]{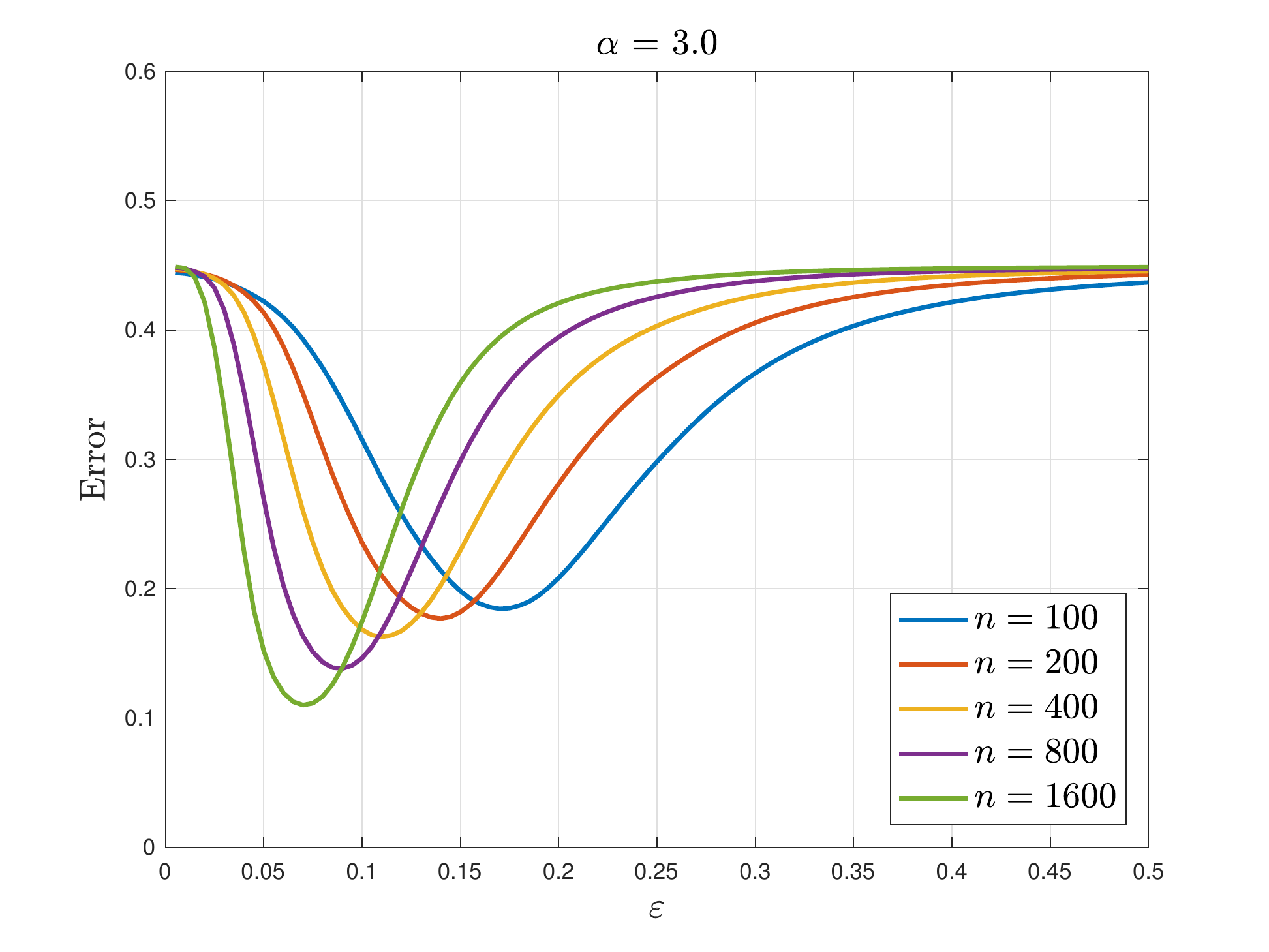}
\includegraphics[width=0.49\textwidth,trim=0cm 0cm 0cm 0cm,clip]{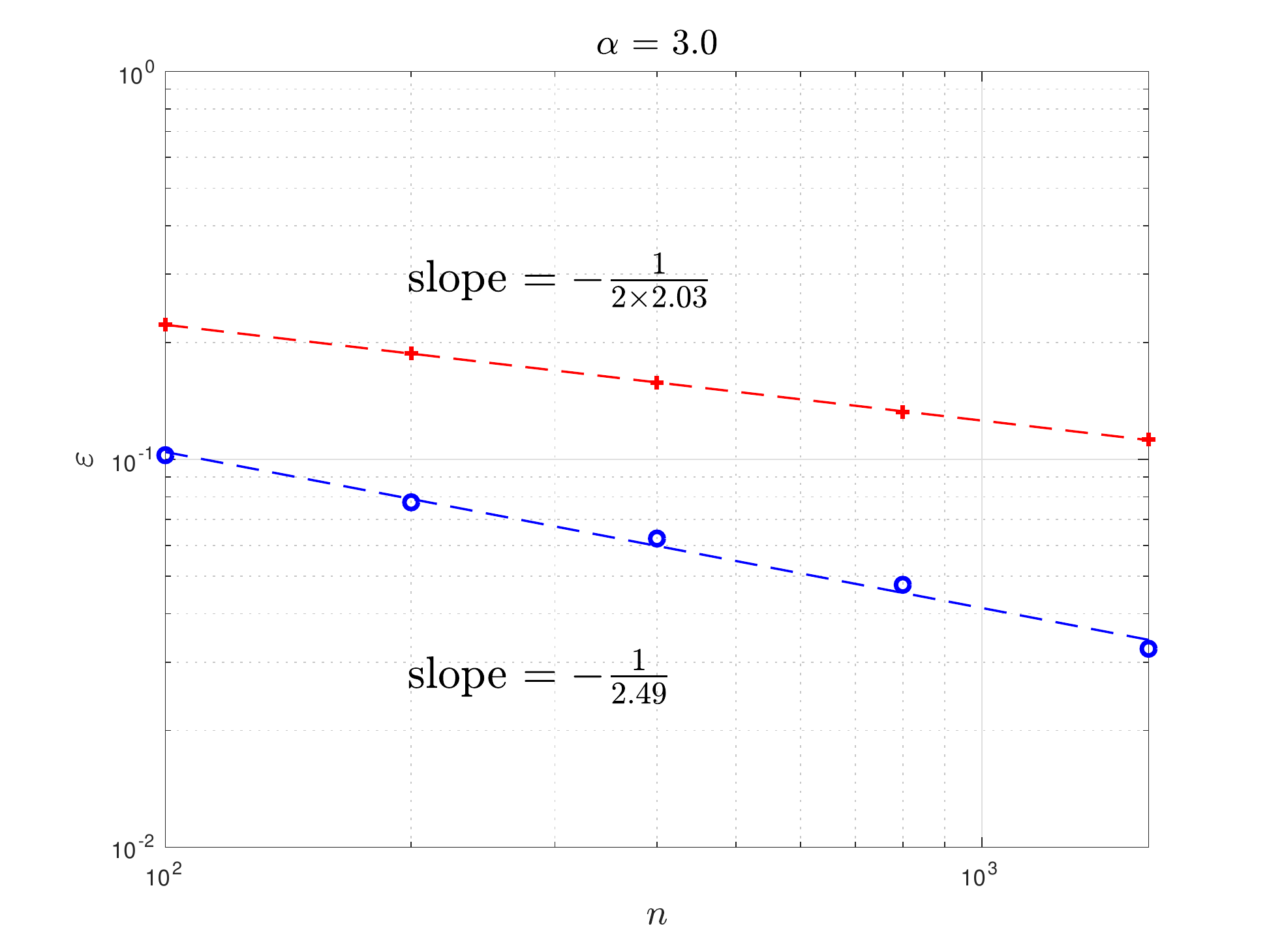}
\includegraphics[width=0.49\textwidth,trim=0cm 0cm 0cm 0cm,clip]{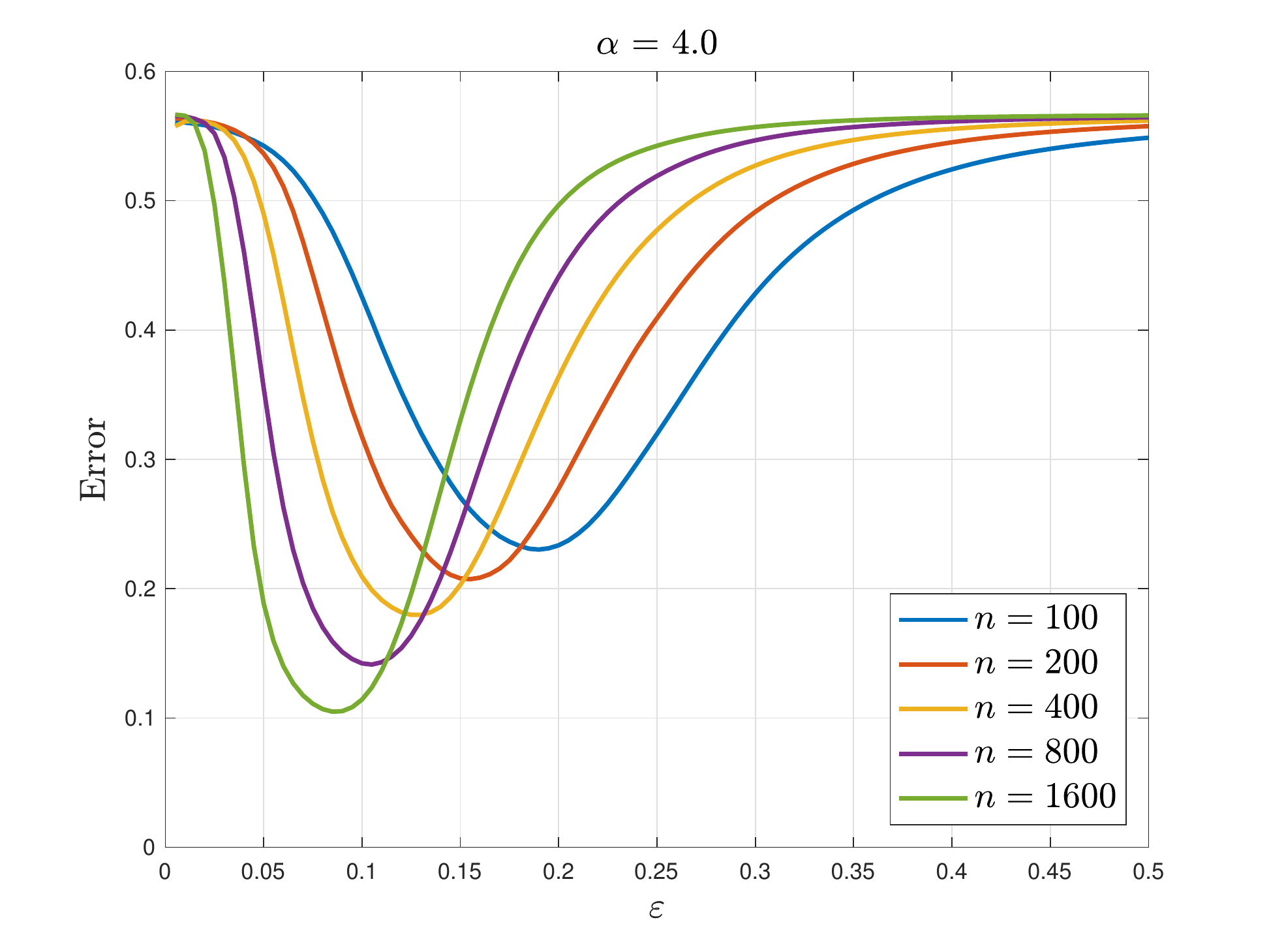}
\includegraphics[width=0.49\textwidth,trim=0cm 0cm 0cm 0cm,clip]{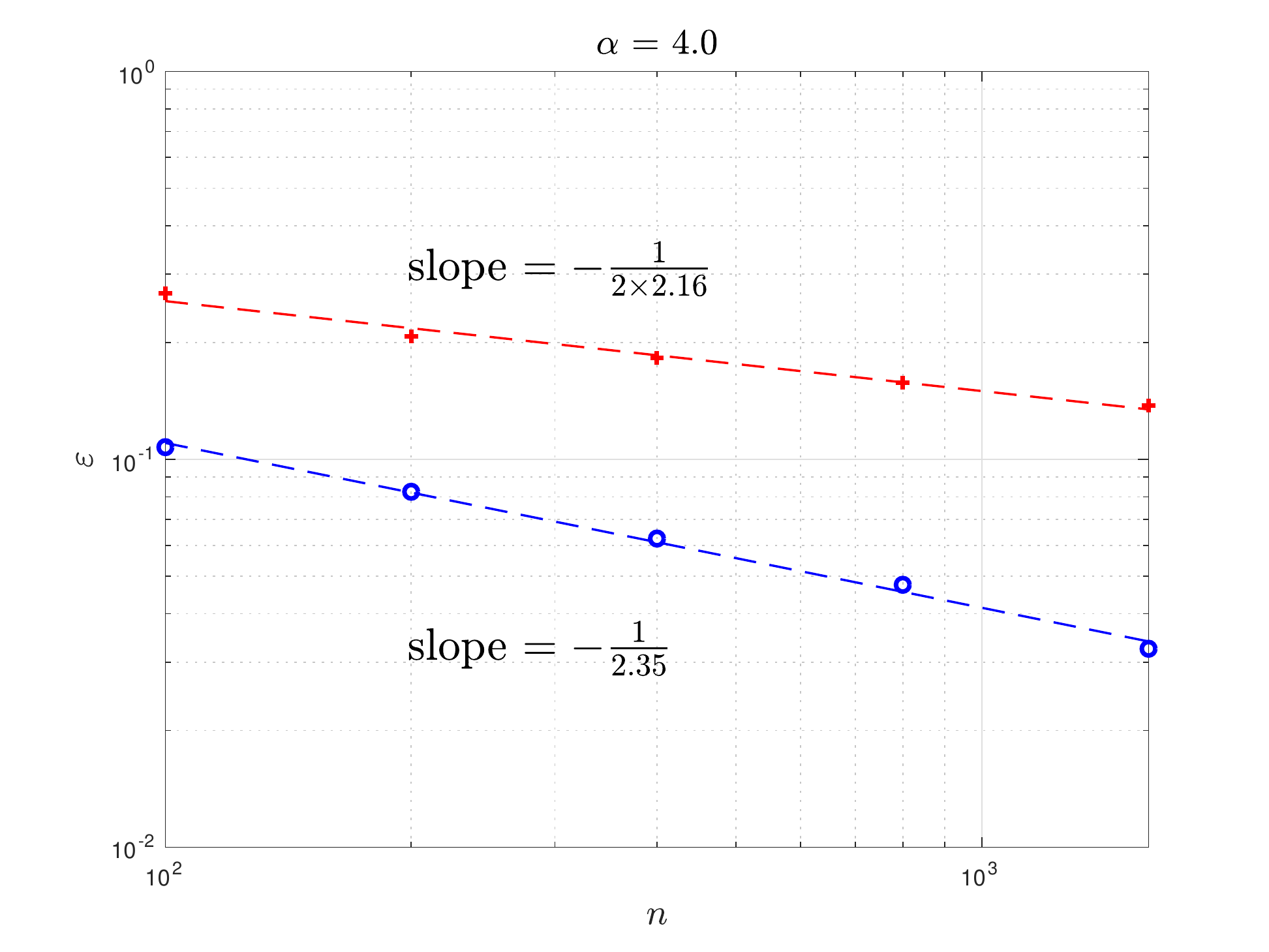}
\caption{(Left) The $L^2_{\mu_n}$ error between discrete minimizers and continuum minimizers of the kriging model versus localization parameter $\eps$, for different values of $n$. (Right) The upper and lower bounds for $\eps(n)$ to provide convergence. The slopes of the lines of best fit provide estimates of the rates.}
\label{fig:rates_krig}
\end{figure}

\begin{figure}
\centering
\includegraphics[width=0.49\textwidth,trim=0cm 0cm 0cm 0cm,clip]{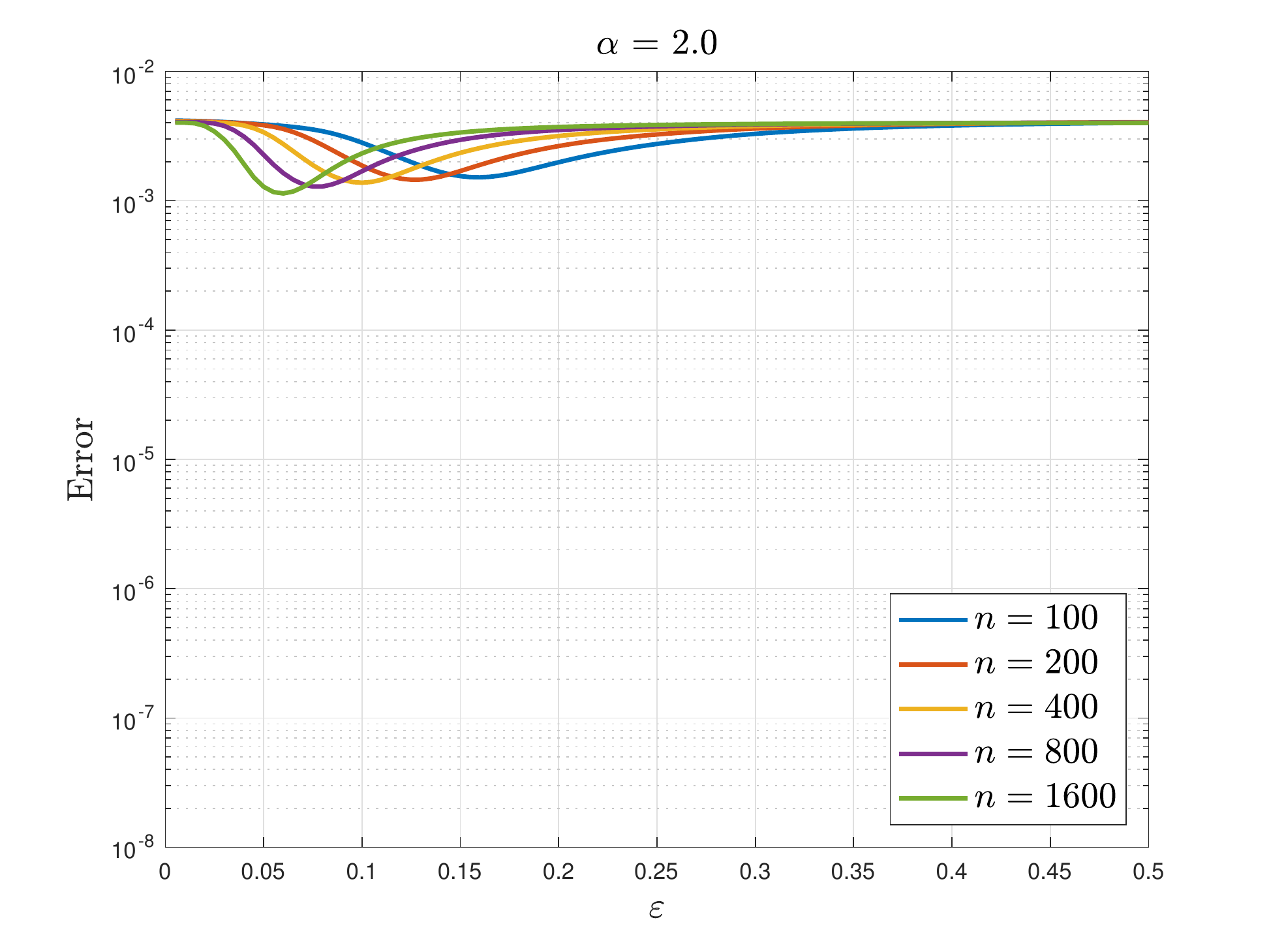}
\includegraphics[width=0.49\textwidth,trim=0cm 0cm 0cm 0cm,clip]{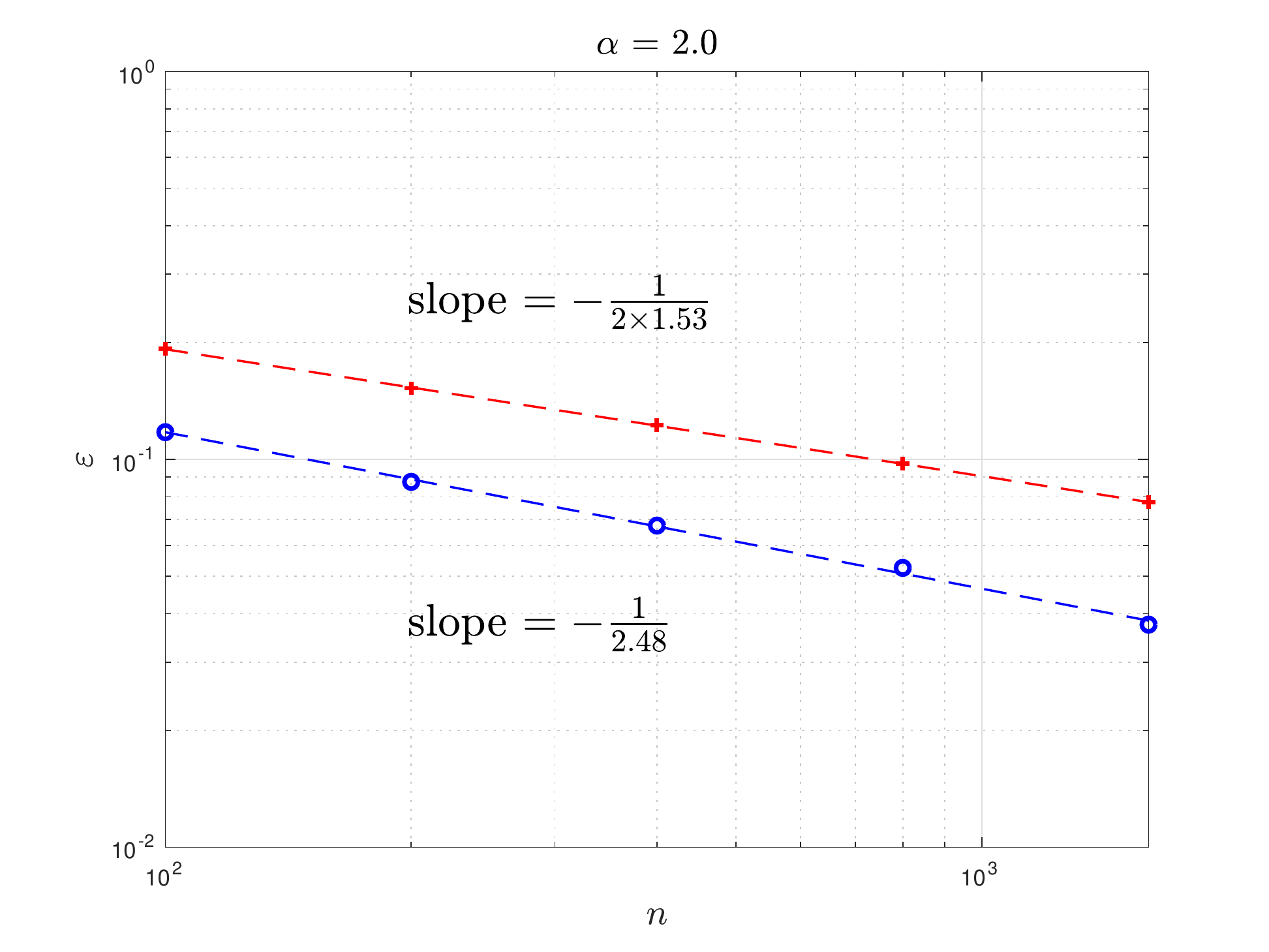}
\includegraphics[width=0.49\textwidth,trim=0cm 0cm 0cm 0cm,clip]{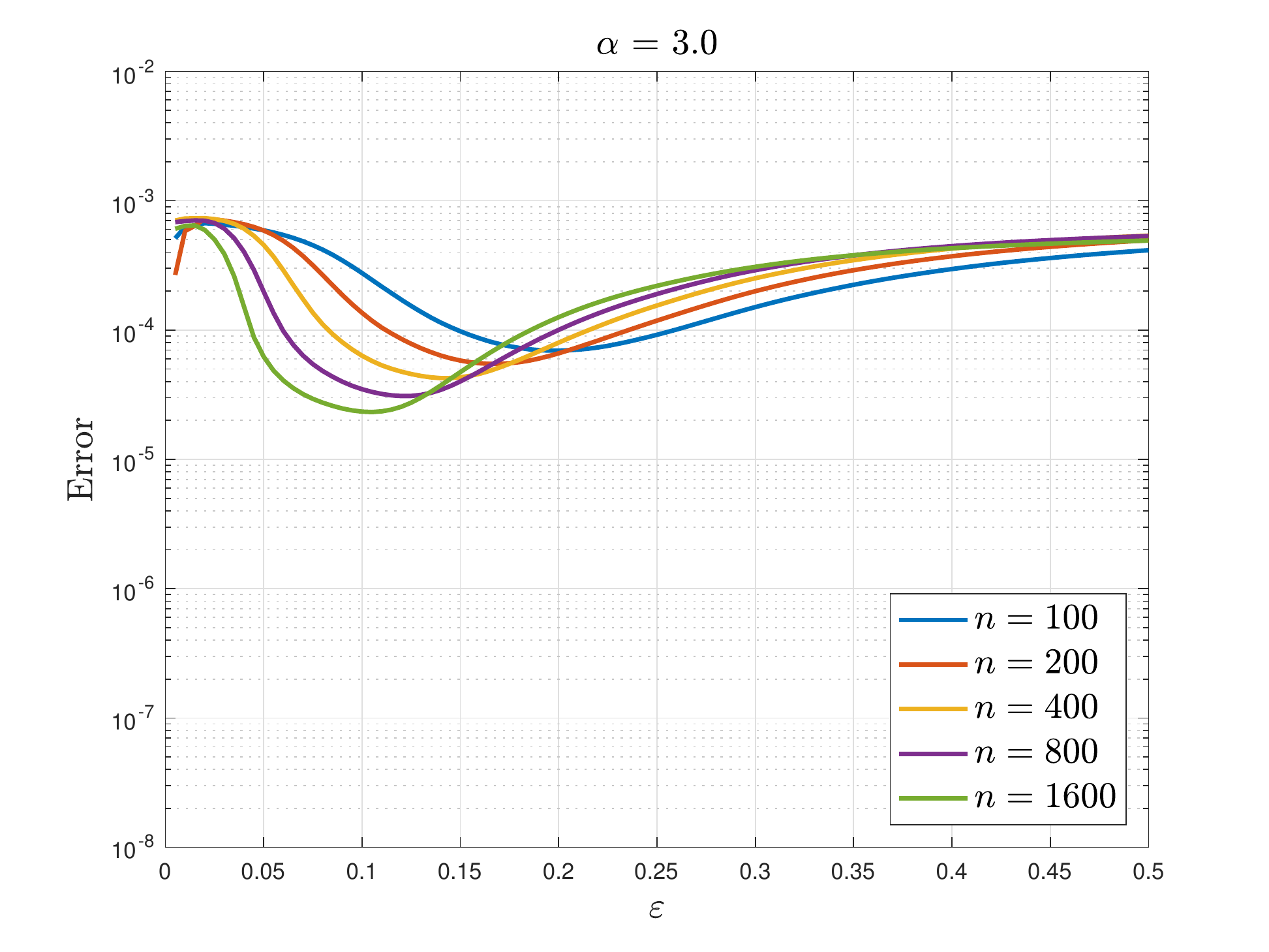}
\includegraphics[width=0.49\textwidth,trim=0cm 0cm 0cm 0cm,clip]{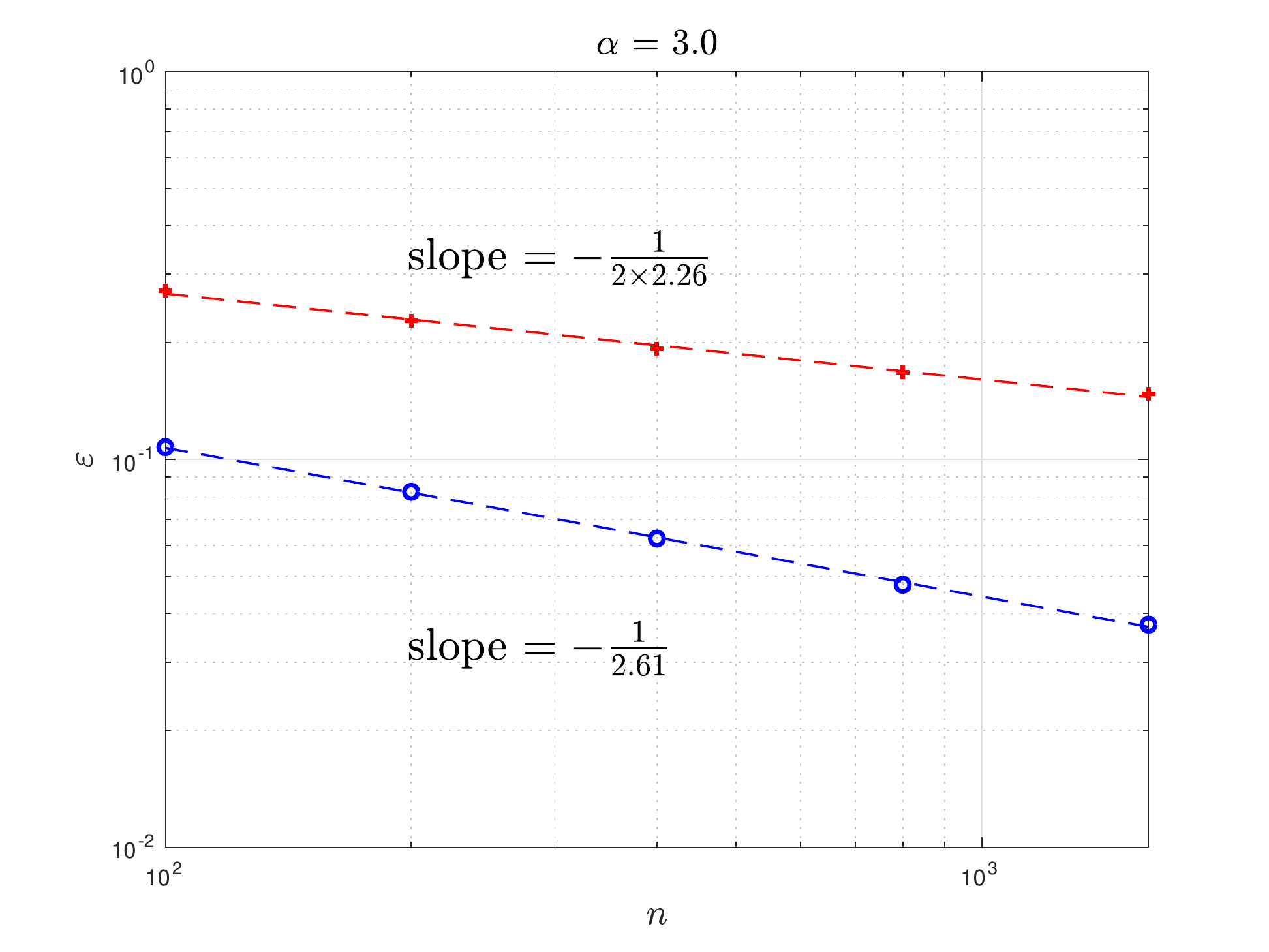}
\includegraphics[width=0.49\textwidth,trim=0cm 0cm 0cm 0cm,clip]{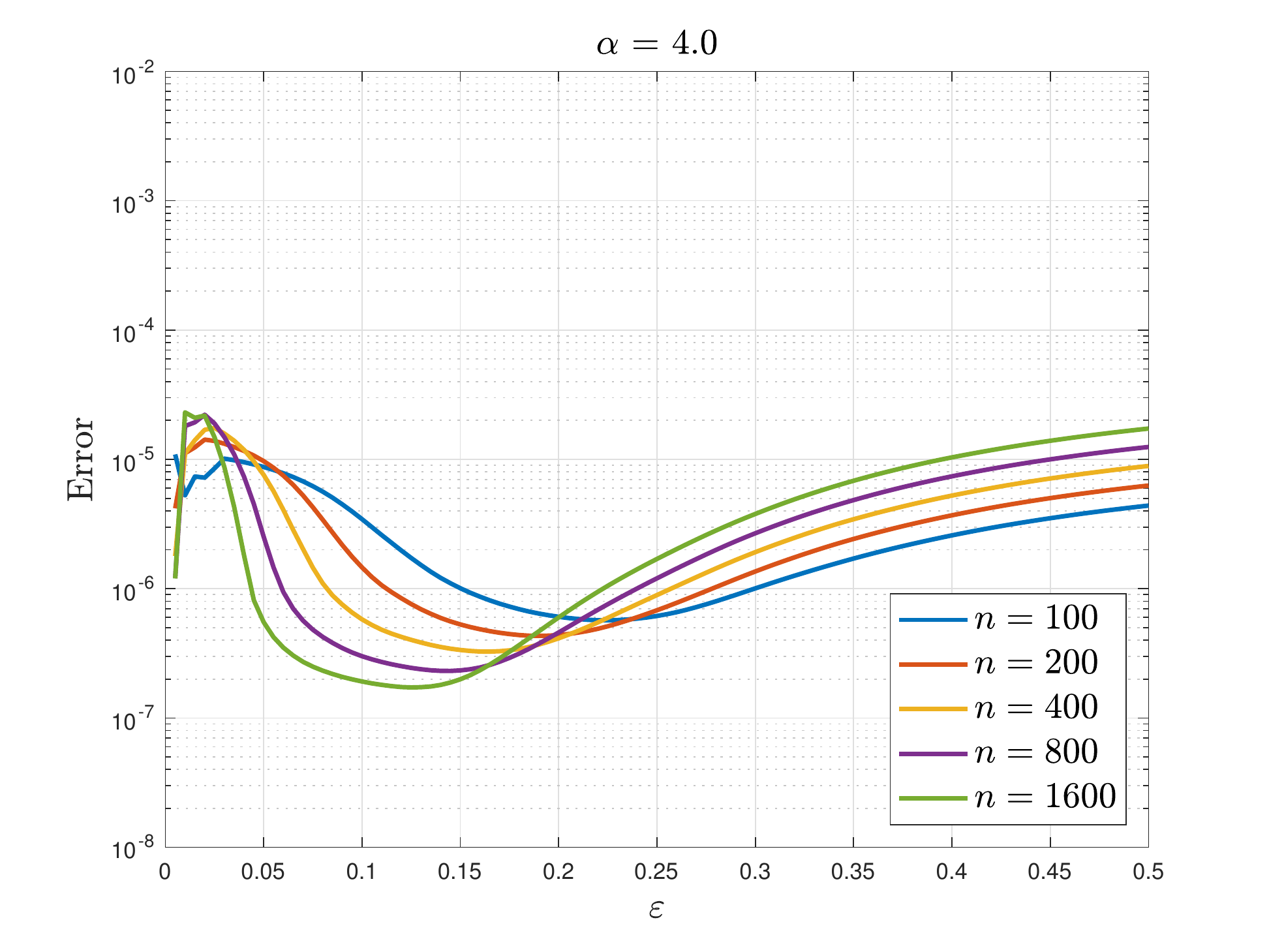}
\includegraphics[width=0.49\textwidth,trim=0cm 0cm 0cm 0cm,clip]{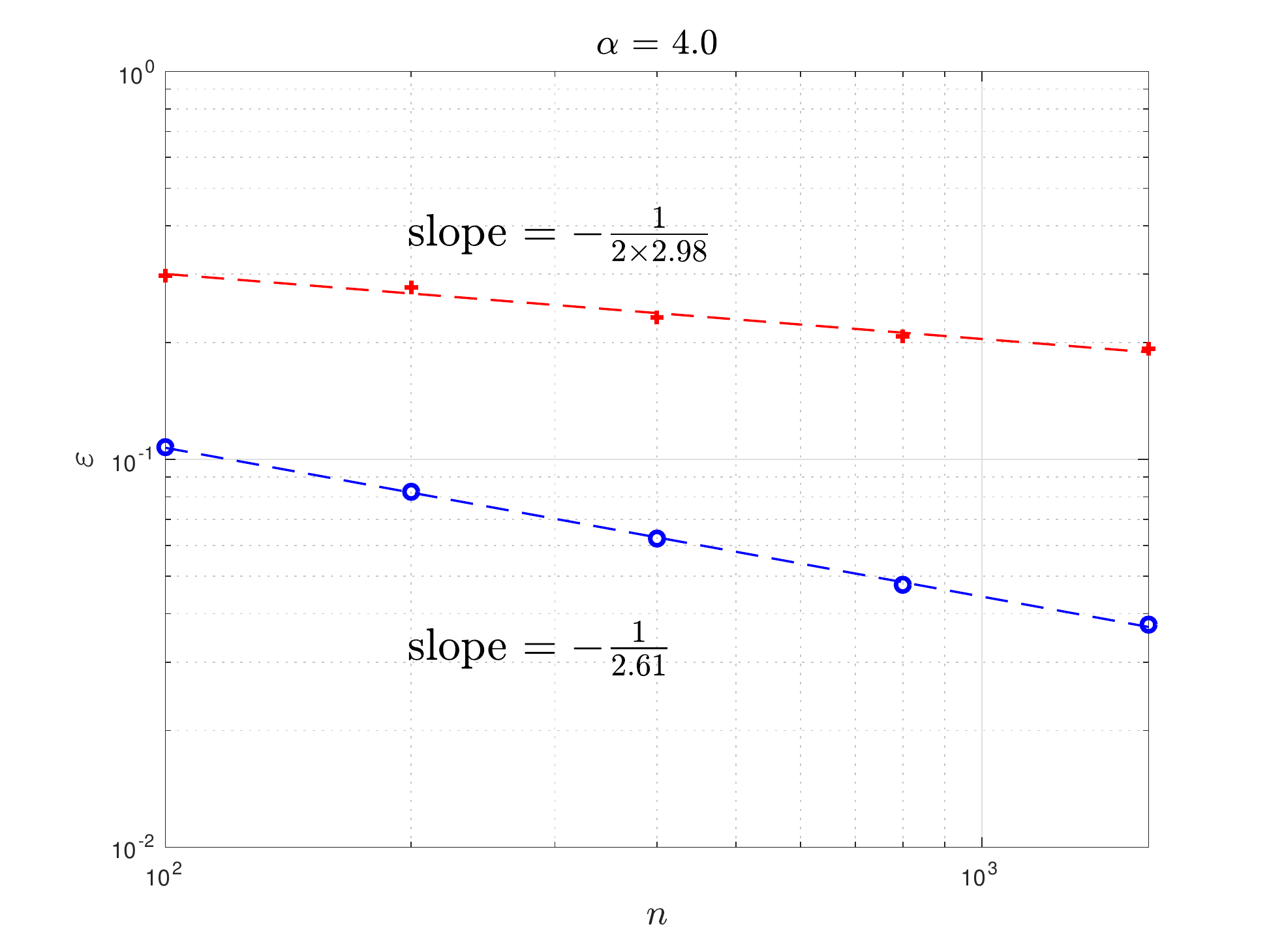}
\caption{(Left) The $L^2_{\mu_n}$ error between discrete minimizers and continuum minimizers of the probit model versus localization parameter $\eps$, for different values of $n$. (Right) The upper and lower bounds for $\eps(n)$ to provide convergence. The slopes of the lines of best fit provide estimates of the rates.}
\label{fig:rates_prob}
\end{figure}

\subsection{Extrapolation on Graphs}
\label{ssec:Extrap}

We consider the problem of smoothly extending a sparsely defined function on a graph to the entire graph. Such extrapolation was studied in \cite{shi2017weighted}, and was achieved via the use of a weighted nonlocal Laplacian. We use the kriging model with \hyperlink{labmod2}{{\bf Labelling Model 2}}, labelling two points with opposite signs, and setting $\gamma = 0$. We fix a set of datapoints $\{x_j\}_{j=1}^{n}$, $n = 1600$, drawn from the uniform density on the domain $\Omega = (0,1)^2$.   We fix $\tau = 1$ and look at how the smoothness of minimizers of the kriging functional $\Jkr^{(n)}$ varies with $\alpha$. 
The minimizers are computed directly from the closed form solution, as in subsection \ref{ssec:Epsilon}.
When $\alpha>d/2$ we choose $\eps$ to approximately minimize the $L^2_{\mu_n}$ errors between the discrete and continuum solutions (since the continuum solution is non-trivial).
When $\alpha\leq d/2$ a representative $\eps$ is chosen which is approximately twice the connectivity radius.
The minimizers are shown in Figure \ref{fig:extrap} for $\alpha = 0.5,1.0,1.5,2.0$. Spikes are clearly visible for $\alpha \leq d/2 = 1$: the requirement for $\alpha > d/2$ to avoid spikes appears to be essential.

\begin{figure}
\centering
\includegraphics[width=0.49\textwidth,trim=3cm 0cm 2.5cm 0cm,clip]{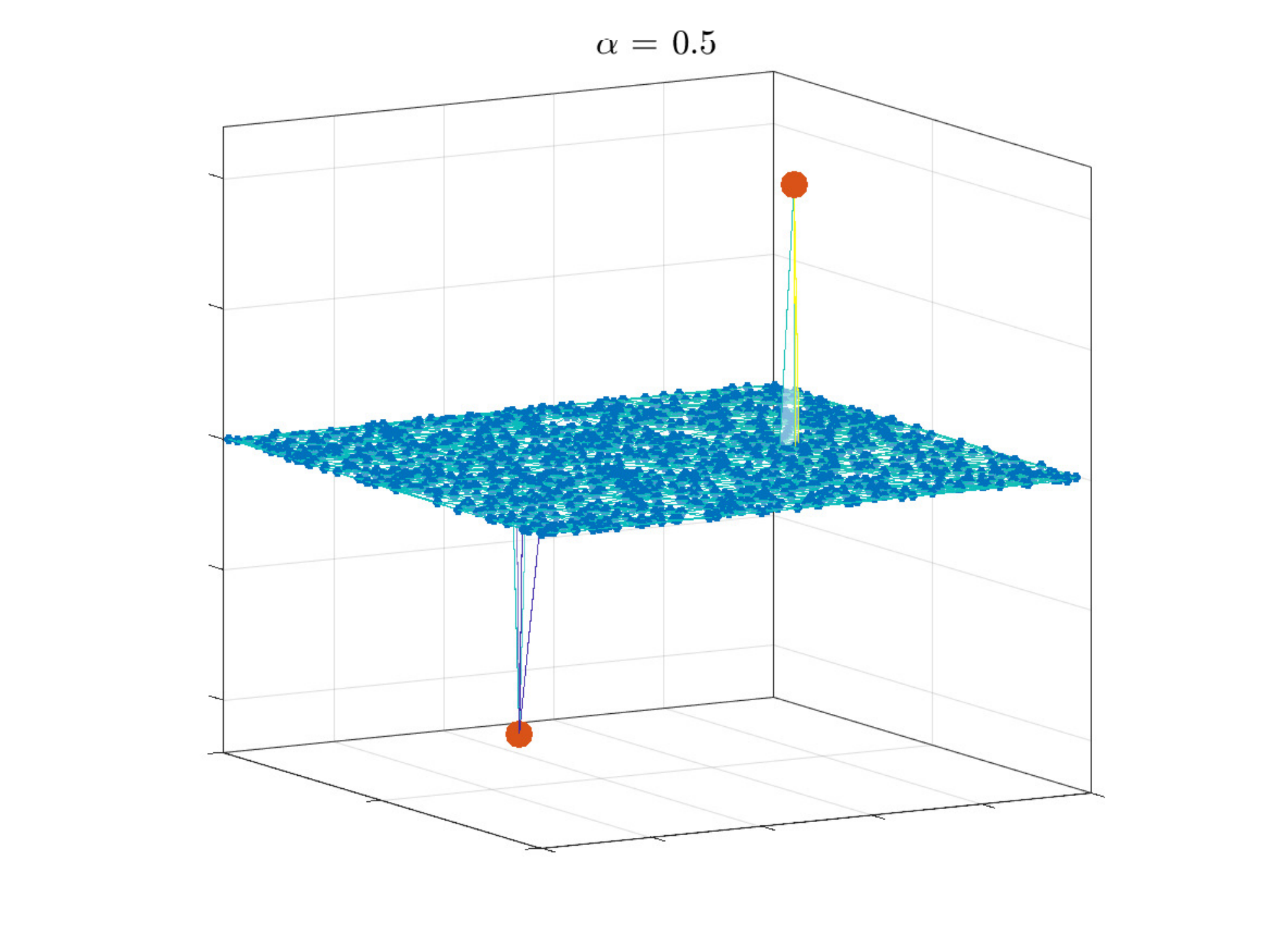}
\includegraphics[width=0.49\textwidth,trim=3cm 0cm 2.5cm 0cm,clip]{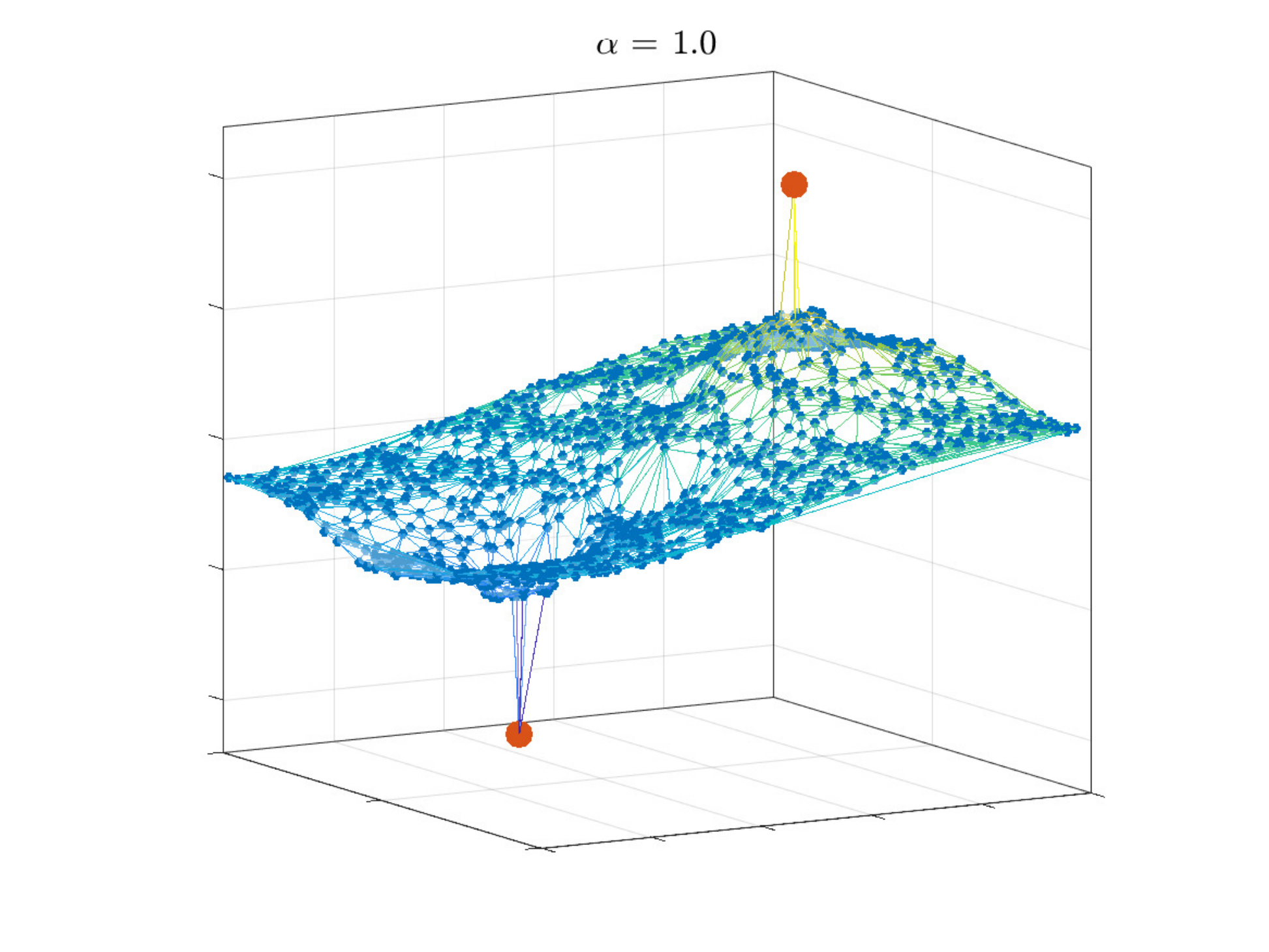}
\includegraphics[width=0.49\textwidth,trim=3cm 0cm 2.5cm 0cm,clip]{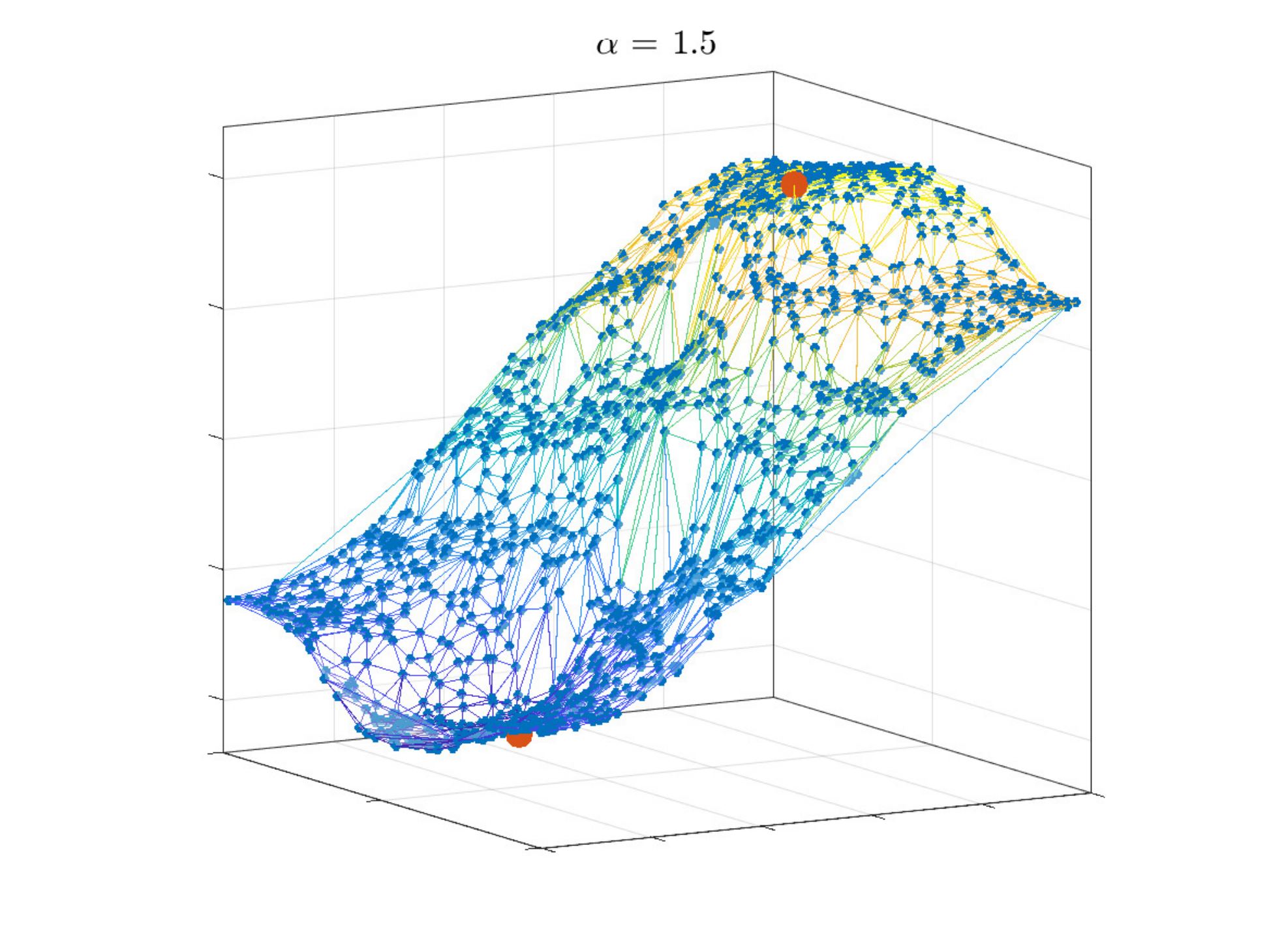}
\includegraphics[width=0.49\textwidth,trim=3cm 0cm 2.5cm 0cm,clip]{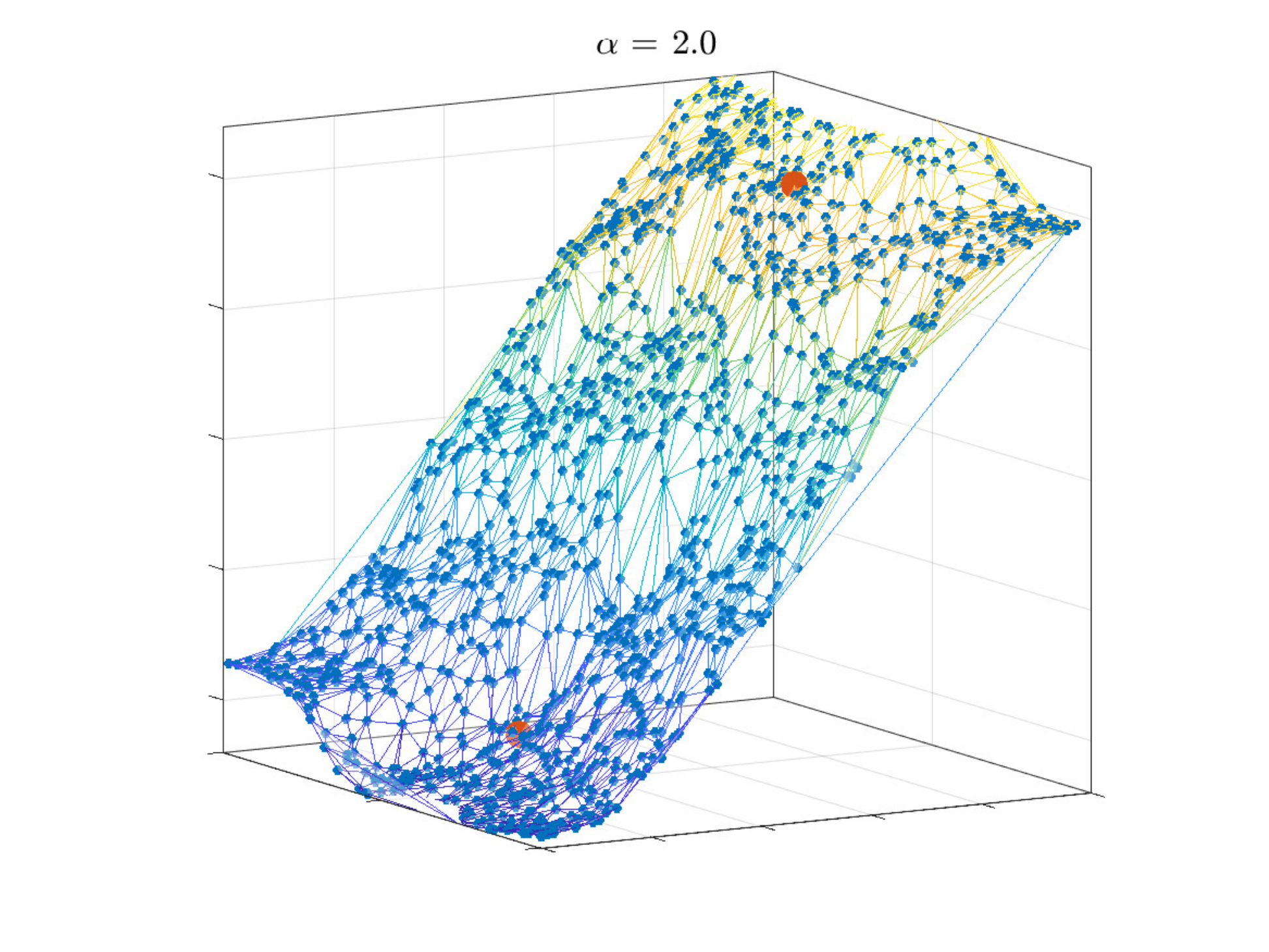}
\caption{The extrapolation of a sparsely defined function on a graph using the kriging model, for various choices of parameter $\alpha$.}
\label{fig:extrap}
\end{figure}

\subsection{Bayesian Level Set for Sampling}
\label{ssec:MCMC}

We now turn to the problem of sampling the conditioned continuum measures introduced in subsections \ref{ssec:PR2} and \ref{ssec:BLS2}, specifically their common $\gamma\to 0$ limit. From this sampling we can, for example, calculate the mean of the classification, which may be used to define a measure of uncertainty of the classification at each point. This is because, for binary random variables, the mean determines the variance. Knowing the uncertainty in classification has great potential utility, for example in active learning in  guiding where to place resources in labelling in order
to reduce uncertainty.

We fix $\Omega = (0,1)^2$. The data distribution $\rho$ is shown in Figure \ref{fig:rho}; it is constructed as a continuum analogue of the two moons distribution \cite{zhou2004learning}, with the majority of its mass concentrated on two curves.  The contrast ratio in the sampling density $\rho$ is approximately 100:1 between the values on and off of the curves. The resulting operator $\cL$ contains significant clustering information: in Figure \ref{fig:rho} we show the second eigenfunction of $\cL$, termed the Fiedler vector in analogy with second eigenvector of the graph Laplacian. The sign of this function provides a good estimate for the decision boundary in an unsupervised context. We use \hyperlink{labmod2}{{\bf Labelling Model 2}}, labelling a single point on each curve with opposing signs as indicated by $\bullet$ and $\circ$ in Figure \ref{fig:rho}.

Sampling is performed using the preconditioned Crank-Nicolson MCMC algorithm \cite{CRSW08}, which has favourable dimension-independent statistical properties,
as demonstrated in \cite{trillos2017consistency} in the graph-based
setting of relevance here. We consider three choices of $\alpha > d/2$, and two choices of inverse length-scale parameter $\tau$. In general we require $\alpha > d$ for the measure $\nu_2$ in Theorem \ref{thm:ltp:probit:ZeroNoise} to be well-defined. However numerical evidence suggests that
the conclusions of Proposition \ref{t:gold} are satisfied with this choice of $\rho$, implying that we may make use of Remark \ref{rem:zeronoise_alpha} and that
$\alpha > \frac{d}{2}$ suffices. The operator $\cL$ is discretized using a finite difference method on a square grid of $40000$ points, and sampling is performed on the span of its first $500$ eigenfunctions. 

In Figure \ref{fig:mcmc_1} we show the mean of the sign of samples on the left hand side, for each choice of $\alpha$, after fixing $\tau = 1$. Note that uncertainty is greater the further the values of the mean are from $\pm 1$: specifically we have that $\mathrm{Var}\big(S(u(x)\big) = 1 - \big[\mathbb{E}(S(u(x)))\big]^2$. We see that the classification on the curves where the data concentrates
is fairly certain, whereas classification away from the curves is uncertain; 
furthermore the certainty increases away from the curves slightly as $\alpha$ is increased. Samples $S(u)$ are also shown in the same figure; the uncertainty away from the curves is illustrated also by these samples.

In Figure \ref{fig:mcmc_2} we show the same results, but with the choice $\tau = 0.2$ so that samples possess a longer length scale. The classification certainty now propagates away from the curves more easily. The effect of the asymmetry of the labelling is also visible in the mean for the case $\alpha = 4$: uncertainty is higher in the bottom-left corner than the top-left corner.

Since the prior on the latent random field $u$ may be difficult to ascertain in applications, the sensitivity of the classification on the choice of the parameters $\alpha$, $\tau$ indicates that it could be wise to employ hierarchical Bayesian methods to learn appropriate values for them along with the latent field $u$. Dimension robust MCMC methods are available to sample such hierarchical distributions \cite{chen2018hierarchical}, and application to classification problems are shown in that paper.


\begin{figure}
\centering
\includegraphics[width=0.49\textwidth,trim=7cm 1cm 6.5cm 0cm,clip]{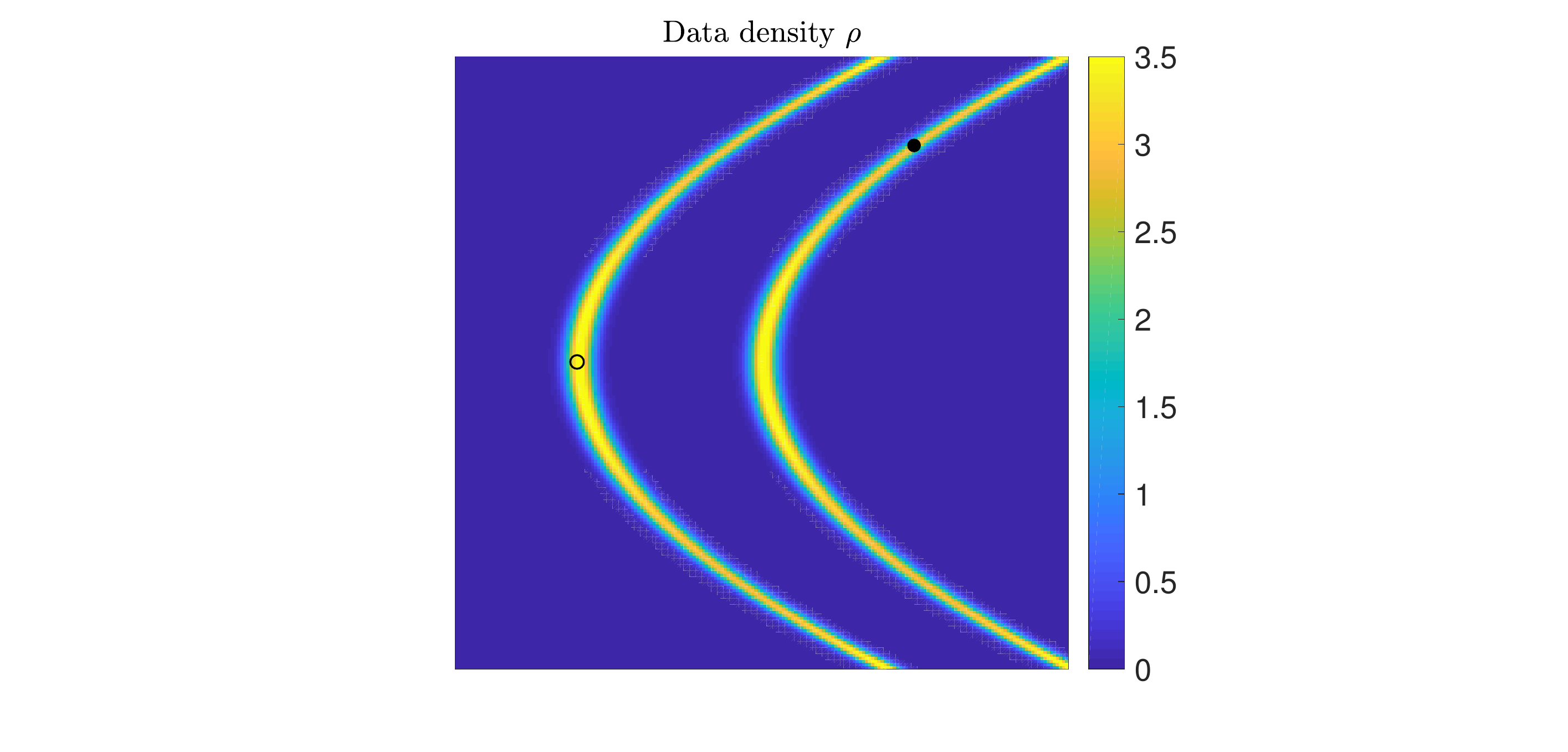}
\includegraphics[width=0.49\textwidth,trim=7cm 1cm 6.5cm 0cm,clip]{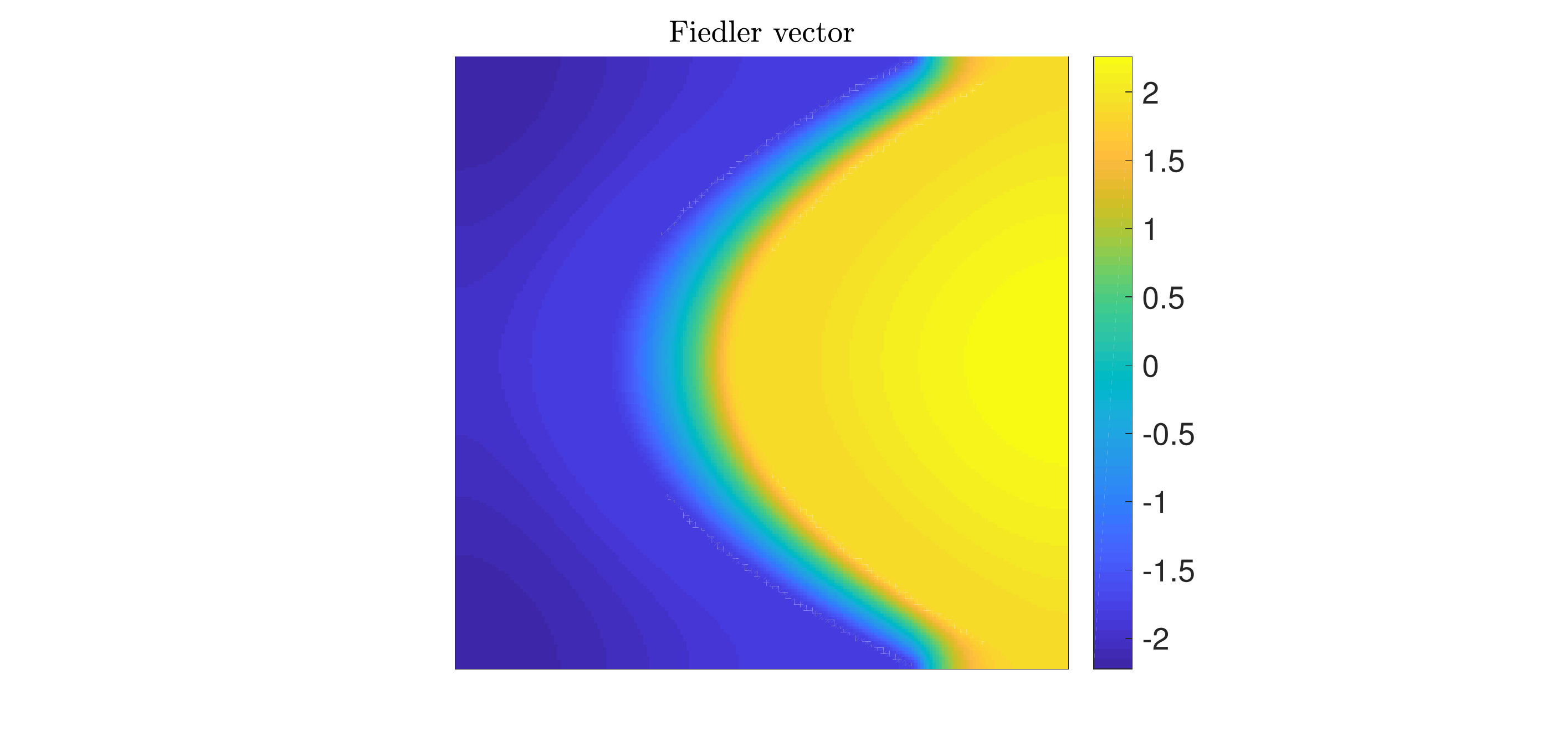}
\caption{(Left) The data distribution $\rho$ used in the MCMC experiments, and the locations of the two \mt{labeled} datapoints. (Right) The second eigenfunction of the operator $\cL$ corresponding to $\rho$.}
\label{fig:rho}
\end{figure}

\begin{figure}
\centering
\includegraphics[width=\textwidth,trim=1.1cm 0cm 3cm 0cm,clip]{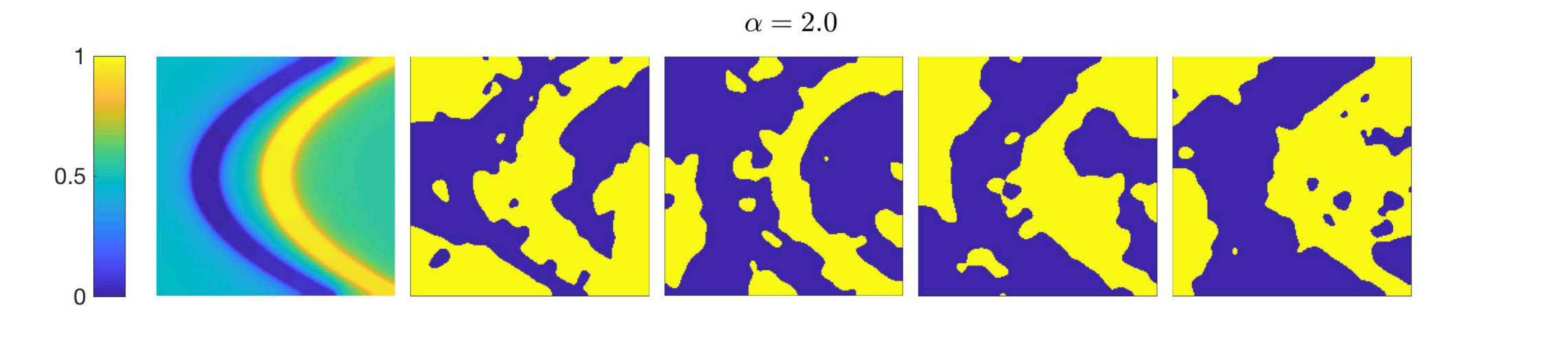}
\includegraphics[width=\textwidth,trim=1.1cm 0cm 3cm 0cm,clip]{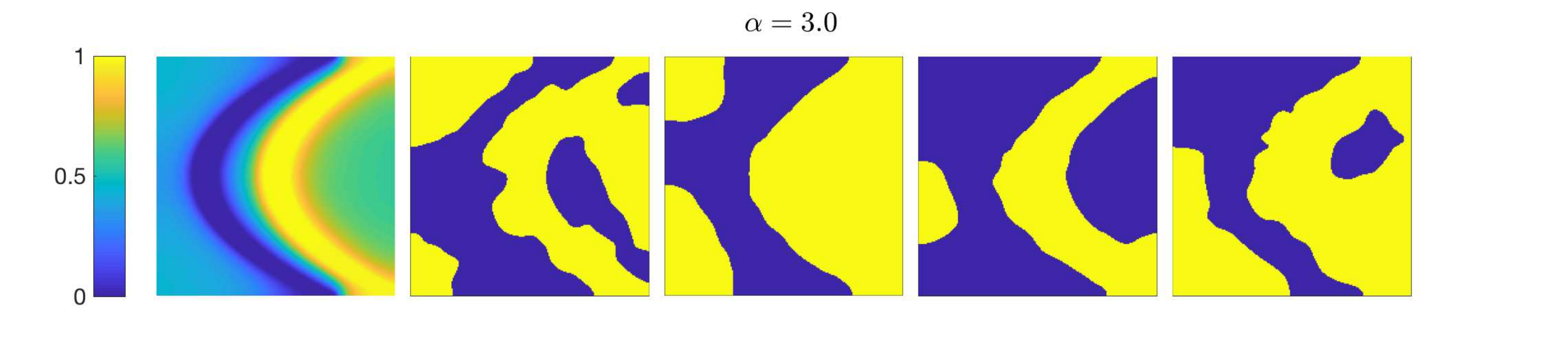}
\includegraphics[width=\textwidth,trim=1.1cm 0cm 3cm 0cm,clip]{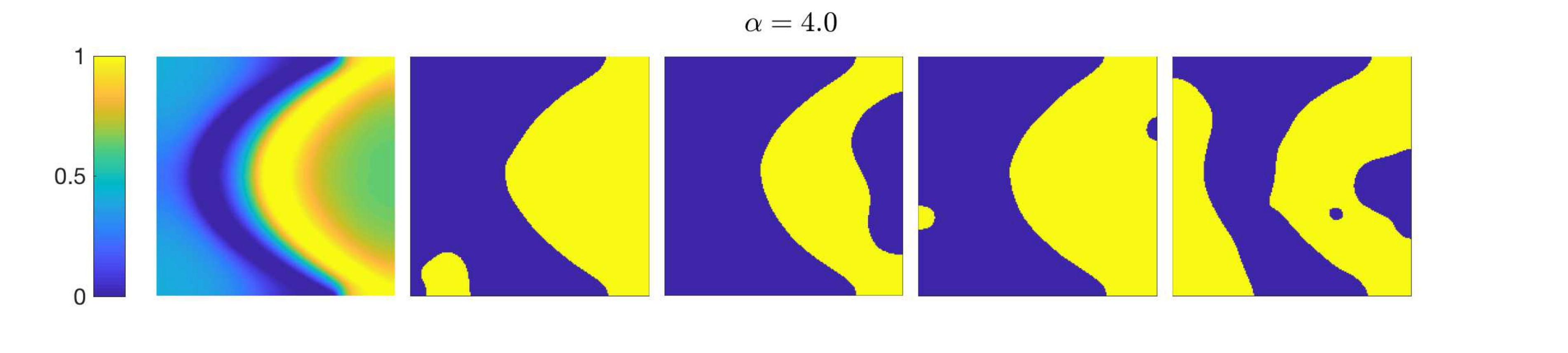}
\caption{(Left) The mean $\mathbb{E}\big(S(u)\big)$ of the classification arising from the conditioned measure $\nu_2$. (Right) Examples of samples $S(u)$ where $u \sim \nu_2$. Here we choose $\tau = 1$.}
\label{fig:mcmc_1}
\end{figure}

\begin{figure}
\centering
\includegraphics[width=\textwidth,trim=1.1cm 0cm 3cm 0cm,clip]{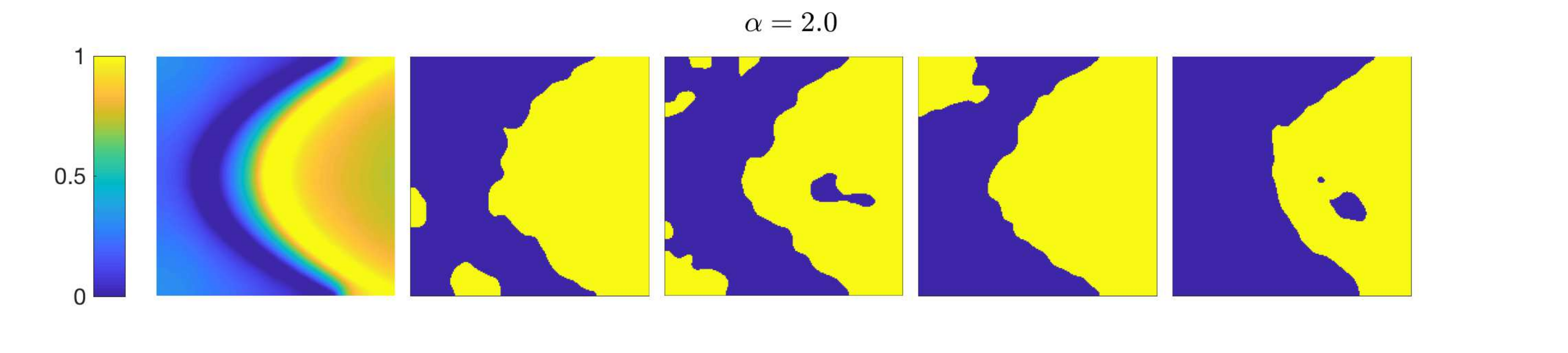}
\includegraphics[width=\textwidth,trim=1.1cm 0cm 3cm 0cm,clip]{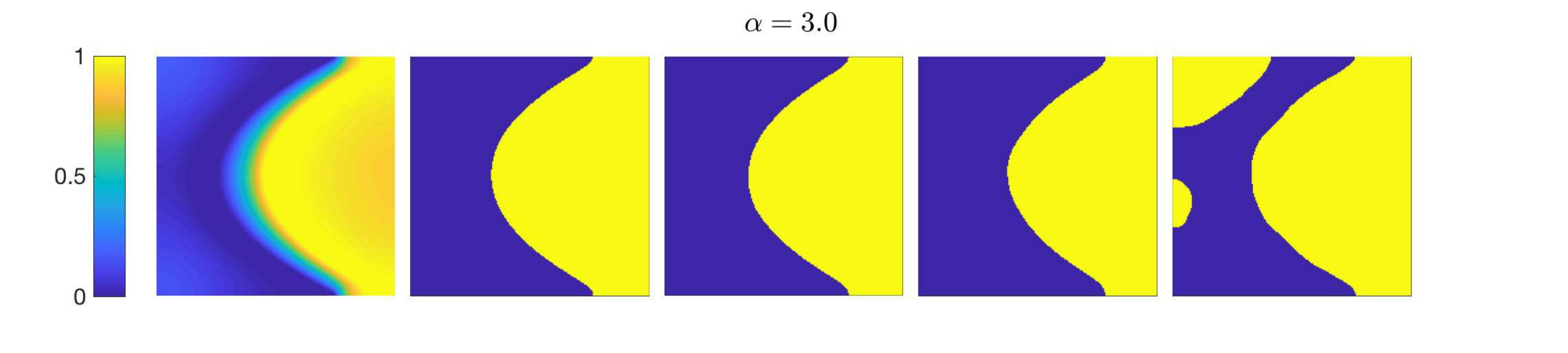}
\includegraphics[width=\textwidth,trim=1.1cm 0cm 3cm 0cm,clip]{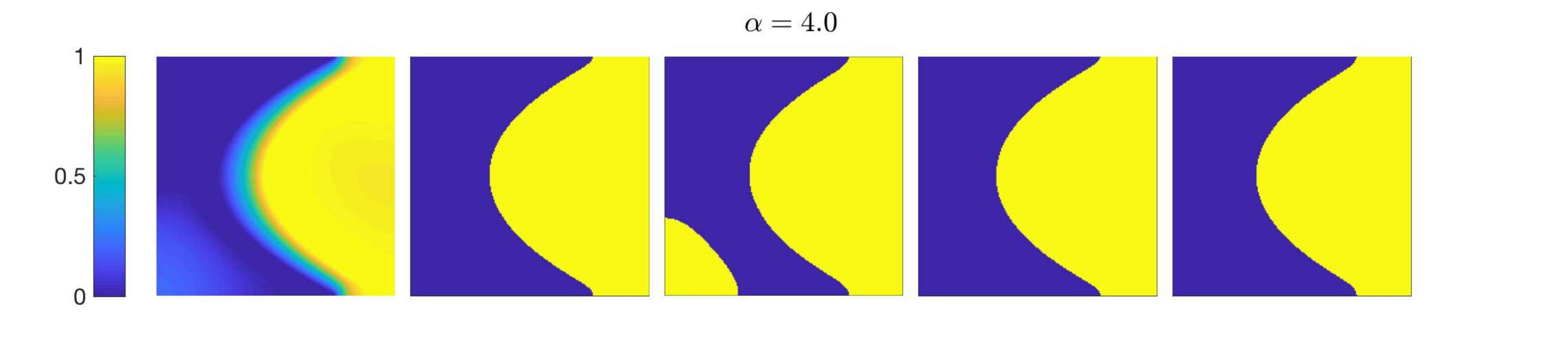}
\caption{(Left) The mean $\mathbb{E}\big(S(u)\big)$ of the classification arising from the conditioned measure $\nu_2$. (Right) Examples of samples $S(u)$ where $u \sim \nu_2$. Here we choose $\tau = 0.2$.}
\label{fig:mcmc_2}
\end{figure}

\section{Conclusions}
\label{sec:C}

In this paper we have studied large graph limits of semi-supervised
learning problems in which smoothness is imposed via a shifted graph 
Laplacian, raised to a power. Both optimization and Bayesian approaches
have been considered. To keep the exposition manageable in length
we have confined our attention to the unnormalized graph Laplacian.
However, one may instead choose to work with the normalized graph 
Laplacian $L=I-D^{-\frac12}WD^{-\frac12}$, in place of $L = D-W$. 
In the normalized case the continuum PDE operator is given by
\[ \cL u=-\frac{1}{\rho^{3/2}}\nabla \cdot \left(\rho^2 \nabla \Bigl(\frac{u}{\rho^{1/2}}\Bigr)\right) \]
with no flux boundary conditions:
$\nabla \bigl(\frac{u}{\rho^{1/2}}\bigr) \cdot \nu=0$ on $\partial\Omega$, where $\nu$ is the outside unit normal vector to $\partial \Omega$.
Theorems \ref{thm:LimitThmDir:LimitThmDir}, \ref{thm:LimitThmOpt:Probit:pr} and \ref{thm:ltp:probit:ZeroNoise} generalize in a straightforward way
to such a change in the graph Laplacian.

Future directions stemming from the work in this paper include:
(i) providing a limit theorem for probit MAP estimators under \hyperlink{labmod2}{{\bf Labelling Model 2}}; (ii) providing limit theorems for the Bayesian probability distributions considered,
using the machinery introduced in \cite{trillos2017consistency,trillos2017continuum};
(iii) using the limiting problems in order to analyze and quantify efficiency of algorithms on large graphs;
(iv) invoking specific sources of data and studying the effectiveness of 
PDE limits in comparison to non-local limits.
\vspace{\baselineskip}

\noindent{\bf Acknowledgements}
The authors are grateful to Ian Tice and Giovanni Leoni for valuable insights and references.
 The authors are thankful to Christopher Sogge and Steve Zelditch for useful background informtion.
DS acknowledges the support of  the National Science Foundation under the  grant DMS 1516677.
The authors are also grateful to the Center for Nonlinear Analysis (CNA) and Ki-Net (NSF Grant  RNMS11-07444). MMD and AMS are supported by AFOSR Grant FA9550-17-1-0185 and ONR Grant N00014-17-1-2079.
MT is grateful to the Cantab Capital Institute for the Mathematics of Information (CCIMI).

\normalem
\bibliography{references}

\begin{thebibliography}{10}

\bibitem{Abels}
{\sc H.~Abels}, {\em Short lecture notes: Interpolation theory and function
  spaces}, 2011,
  \url{http://www.uni-r.de/Fakultaeten/nat_Fak_I/abels/SkriptInterpolationstheorieSoSe11.pdf}.

\bibitem{AbramowitzStegun}
{\sc M.~Abramowitz and I.~A. Stegun}, {\em Handbook of mathematical functions
  with formulas, graphs, and mathematical tables}, vol.~55 of National Bureau
  of Standards Applied Mathematics Series, For sale by the Superintendent of
  Documents, U.S. Government Printing Office, Washington, D.C., 1964.

\bibitem{belkin2002laplacian}
{\sc M.~Belkin and P.~Niyogi}, {\em Laplacian eigenmaps and spectral techniques
  for embedding and clustering}, in Advances in neural information processing
  systems, 2002, pp.~585--591.

\bibitem{bertozzi2012diffuse}
{\sc A.~L. Bertozzi and A.~Flenner}, {\em Diffuse interface models on graphs
  for classification of high dimensional data}, Multiscale Modeling \&
  Simulation, 10 (2012), pp.~1090--1118.

\bibitem{Luo18}
{\sc A.~L. Bertozzi, X.~Luo, O.~Papaspiliopoulos, and A.~M. Stuart}, {\em
  Scalable and robust sampling methods for {B}ayesian graph-based
  semi-supervised learning}, In preparation,  (2018).

\bibitem{UQ17}
{\sc A.~L. Bertozzi, X.~Luo, A.~M. Stuart, and K.~C. Zygalakis}, {\em
  Uncertainty quantification in the classification of high dimensional data},
  arXiv preprint arXiv:1703.08816,  (2017).

\bibitem{bishop}
{\sc C.~Bishop}, {\em Pattern recognition and machine learning (information
  science and statistics), 1st edn. 2006. corr. 2nd printing edn}, Springer,
  New York,  (2007).

\bibitem{blum2001learning}
{\sc A.~Blum and S.~Chawla}, {\em Learning from labeled and unlabeled data
  using graph mincuts}, tech. report, CMU Tech Report, 2001.

\bibitem{boykov2001fast}
{\sc Y.~Boykov, O.~Veksler, and R.~Zabih}, {\em Fast approximate energy
  minimization via graph cuts}, IEEE Transactions on Pattern Analysis and
  Machine Intelligence, 23 (2001), pp.~1222--1239.

\bibitem{braides02}
{\sc A.~Braides}, {\em {$\Gamma$}-Convergence for Beginners}, Oxford University
  Press, Oxford, 2002.

\bibitem{BO05}
{\sc M.~Burger and S.~Osher}, {\em A survey on level set methods for inverse
  problems and optimal design}, Europ. J. Appl. Math., 16 (2005), pp.~263--301.

\bibitem{calder17game}
{\sc J.~Calder}, {\em The game theoretic p-{L}aplacian and semi-supervised
  learning with few labels}, arXiv preprint arXiv:1711.10144,  (2017).

\bibitem{chen2018hierarchical}
{\sc V.~Chen, M.~M. Dunlop, O.~Papasiliopoulos, and A.~M. Stuart}, {\em Robust
  {MCMC} sampling with non-{G}aussian and hierarchical priors in high
  dimensions}, arXiv preprint arXiv:1803.03344,  (2018).

\bibitem{CRSW08}
{\sc S.~L. Cotter, G.~O. Roberts, A.~M. Stuart, and D.~White}, {\em
  \uppercase{MCMC} methods for functions: modifying old algorithms to make them
  faster.}, Statistical Science, 28 (2013), pp.~424--446.

\bibitem{cristoferi18}
{\sc R.~Cristoferi and M.~Thorpe}, {\em Large data limit for a phase transition
  model with the $p$-{L}aplacian on point clouds}, arxiv preprint
  arXiv:1802.08703,  (2018).

\bibitem{dalmaso93}
{\sc G.~Dal~Maso}, {\em An Introduction to {$\Gamma$}-Convergence}, Springer,
  1993.

\bibitem{S13}
{\sc M.~Dashti and A.~M. Stuart}, {\em The {Bayesian} approach to inverse
  problems}, in Handbook of Uncertainty Quantification, Springer, 2016,
  p.~arxiv preprint arXiv:1302.6989.

\bibitem{barcode}
{\sc M.~Dunlop, C.~Elliott, V.~Hoang, and A.~Stuart}, {\em {B}ayesian
  formulations of multidimensional barcode inversion}.
\newblock arXiv preprint arXiv:1706.01960.

\bibitem{trillos2017consistency}
{\sc N.~Garc{\'\i}a~Trillos, Z.~Kaplan, T.~Samakhoana, and D.~Sanz-Alonso},
  {\em On the consistency of graph-based {B}ayesian learning and the
  scalability of sampling algorithms}, arXiv preprint arXiv:1710.07702,
  (2017).

\bibitem{trillos2017continuum}
{\sc N.~Garc{\'\i}a~Trillos and D.~Sanz-Alonso}, {\em Continuum limit of
  posteriors in graph {B}ayesian inverse problems}, arXiv preprint
  arXiv:1706.07193,  (2017).

\bibitem{trillos}
{\sc N.~Garc{\'\i}a~Trillos and D.~Slep{\v{c}}ev}, {\em A variational approach
  to the consistency of spectral clustering}, Applied and Computational
  Harmonic Analysis,  (2016).

\bibitem{garciatrillos15b}
{\sc N.~Garc\'{i}a~Trillos and D.~Slep\v{c}ev}, {\em On the rate of convergence
  of empirical measures in $\infty$-transportation distance}, Canadian Journal
  of Mathematics, 67 (2015), pp.~1358--1383.

\bibitem{trillos2}
{\sc N.~Garc\'{i}a~Trillos and D.~Slep\v{c}ev}, {\em Continuum limit of total
  variation on point clouds}, Archive for Rational Mechanics and Analysis, 220
  (2016), pp.~193--241.

\bibitem{Gilbert}
{\sc J.~E. Gilbert}, {\em Interpolation between weighted {$L\sp{p}$}-spaces},
  Ark. Mat., 10 (1972), pp.~235--249.

\bibitem{Grieser02}
{\sc D.~Grieser}, {\em Uniform bounds for eigenfunctions of the {L}aplacian on
  manifolds with boundary}, Comm. Partial Differential Equations, 27 (2002),
  pp.~1283--1299.

\bibitem{Grisvard}
{\sc P.~Grisvard}, {\em Elliptic problems in nonsmooth domains}, SIAM, 2011.

\bibitem{Hormander}
{\sc L.~H\"ormander}, {\em The spectral function of an elliptic operator}, Acta
  Math, 121 (1968), pp.~193--218.

\bibitem{iglesias2015bayesian}
{\sc M.~A. Iglesias, Y.~Lu, and A.~M. Stuart}, {\em A {B}ayesian level set
  method for geometric inverse problems}, Interfaces and Free Boundary
  Problems,  (2015).

\bibitem{Leoni}
{\sc G.~Leoni}, {\em A first course in {S}obolev spaces}, vol.~181 of Graduate
  Studies in Mathematics, American Mathematical Society, Providence, RI,
  second~ed., 2017.

\bibitem{li2012markov}
{\sc S.~Z. Li}, {\em Markov random field modeling in computer vision}, Springer
  Science \& Business Media, 2012.

\bibitem{madry2010fast}
{\sc A.~Madry}, {\em Fast approximation algorithms for cut-based problems in
  undirected graphs}, in Foundations of Computer Science (FOCS), 2010 51st
  Annual IEEE Symposium on, IEEE, 2010, pp.~245--254.

\bibitem{nadler09}
{\sc B.~Nadler, N.~Srebro, and X.~Zhou}, {\em Semi-supervised learning with the
  graph {L}aplacian: The limit of infinite unlabelled data}, in Advances in
  neural information processing systems, 2009, pp.~1330--1338.

\bibitem{neal}
{\sc R.~Neal}, {\em Regression and classification using {G}aussian process
  priors}, Bayesian Statistics, 6 (1998), p.~475.

\bibitem{ng2002spectral}
{\sc A.~Y. Ng, M.~I. Jordan, and Y.~Weiss}, {\em On spectral clustering:
  Analysis and an algorithm}, in Advances in neural information processing
  systems, 2002, pp.~849--856.

\bibitem{Peetre}
{\sc J.~Peetre}, {\em On an interpolation theorem of {F}oia\c s and {L}ions},
  Acta Sci. Math. (Szeged), 25 (1964), pp.~255--261.

\bibitem{shi2000normalized}
{\sc J.~Shi and J.~Malik}, {\em Normalized cuts and image segmentation}, IEEE
  Transactions on pattern analysis and machine intelligence, 22 (2000),
  pp.~888--905.

\bibitem{shi2017weighted}
{\sc Z.~Shi, S.~Osher, and W.~Zhu}, {\em Weighted nonlocal {L}aplacian on
  interpolation from sparse data}, Journal of Scientific Computing, 73 (2017),
  pp.~1164--1177.

\bibitem{SlepcevThorpe}
{\sc D.~Slep{\v{c}}ev and M.~Thorpe}, {\em Analysis of $ p $-{L}aplacian
  regularization in semi-supervised learning}, arXiv preprint arXiv:1707.06213,
   (2017).

\bibitem{SoggeZelditch}
{\sc C.~D. Sogge and S.~Zelditch}, {\em Riemannian manifolds with maximal
  eigenfunction growth}, Duke Math. J., 114 (2002), pp.~387--437.

\bibitem{SzlamBresson}
{\sc A.~Szlam and X.~Bresson}, {\em Total variation and {C}heeger cuts}, in
  Proceedings of the 27th International Conference on Machine Learning, 2010,
  pp.~1039--1046.

\bibitem{thorpe16}
{\sc M.~Thorpe and A.~M. Johansen}, {\em Convergence and rates for
  fixed-interval multiple-track smoothing using $k$-means type optimization},
  Electronic Journal of Statistics, 10 (2016), pp.~3693--3722.

\bibitem{thorpe17}
{\sc M.~Thorpe and F.~Theil}, {\em Asymptotic analysis of the
  {G}inzburg-{L}andau functional on point clouds}, to appear in the Proceedings
  of the Royal Society of Edinburgh Section A: Mathematics, arXiv preprint
  arXiv:1604.04930,  (2017).

\bibitem{thorpe15}
{\sc M.~Thorpe, F.~Theil, A.~M. Johansen, and N.~Cade}, {\em Convergence of the
  $k$-means minimization problem using {$\Gamma$}-convergence}, SIAM Journal on
  Applied Mathematics, 75 (2015), pp.~2444--2474.

\bibitem{vangennip12a}
{\sc Y.~Van~Gennip and A.~L. Bertozzi}, {\em {$\Gamma$}-convergence of graph
  {G}inzburg-{L}andau functionals}, Advances in Differential Equations, 17
  (2012), pp.~1115--1180.

\bibitem{von2007tutorial}
{\sc U.~Von~Luxburg}, {\em A tutorial on spectral clustering}, Statistics and
  computing, 17 (2007), pp.~395--416.

\bibitem{von2008consistency}
{\sc U.~Von~Luxburg, M.~Belkin, and O.~Bousquet}, {\em Consistency of spectral
  clustering}, The Annals of Statistics,  (2008), pp.~555--586.

\bibitem{wahba1990spline}
{\sc G.~Wahba}, {\em Spline models for observational data}, SIAM, 1990.

\bibitem{williams1996gaussian}
{\sc C.~K. Williams and C.~E. Rasmussen}, {\em {G}aussian processes for
  regression}, in Advances in neural information processing systems, 1996,
  pp.~514--520.

\bibitem{zhou2004learning}
{\sc D.~Zhou, O.~Bousquet, T.~N. Lal, J.~Weston, and B.~Sch{\"o}lkopf}, {\em
  Learning with local and global consistency}, in Advances in neural
  information processing systems, 2004, pp.~321--328.

\bibitem{belkin09}
{\sc X.~Zhou and M.~Belkin}, {\em Semi-supervised learning by higher order
  regularization.}, in AISTATS, 2011, pp.~892--900.

\bibitem{zhu2005semi}
{\sc X.~Zhu}, {\em Semi-supervised learning literature survey}, tech. report,
  Computer Science, University of Wisconsin-Madison, 2005.

\bibitem{zhuthesis}
{\sc X.~Zhu}, {\em Semi-supervised learning with graphs}, PhD thesis, Carnegie
  Mellon University, language technologies institute, school of computer
  science, 2005.

\bibitem{zhu2003semi}
{\sc X.~Zhu, Z.~Ghahramani, and J.~Lafferty}, {\em Semi-supervised learning
  using {Gaussian} fields and harmonic functions}, in Proceedings of the 20th
  International Conference on Machine Learning, vol.~3, 2003, pp.~912--919.

\bibitem{zhu2003semib}
{\sc X.~Zhu, J.~D. Lafferty, and Z.~Ghahramani}, {\em Semi-supervised learning:
  From {Gaussian} fields to {Gaussian} processes}, tech. report, CMU Tech
  Report:CMU-CS-03-175, 2003.

\end{thebibliography}
\bibliographystyle{siamplain}
\ULforem

\section{Appendix}
\label{sec:A}

\subsection{Function Spaces}
\label{app:funsp}

Here we establish the equivalence between the spectrally defined Sobolev spaces, $\cH^s(\Omega)$ and the standard Sobolev spaces. 

We denote by 
\[H^2_N(\Omega) = \left\{ u \in H^2(\Omega) \::\: \frac{\partial u}{\partial n} =0
\te{ on } \partial \Omega \right\} \]
the domain of $\cL$. Analogously we denote by $H^{2m}_N(\Omega)$ the domain of $\cL^m$, that is 
\[H^{2m}_N(\Omega) = \left\{ u \in H^{2m}(\Omega) \::\: \frac{\partial \cL^r u}{\partial n} =0 \te{ for all } 0\leq r \leq m-1
\te{ on } \partial \Omega \right \} \]
Finally we let $H^{2m+1}_N(\Omega)  = H^{2m+1}(\Omega) \cap H^{2m}_N(\Omega)$. 

For $m \geq 0$ and $u,v \in H^{2m+1}_N(\Omega)$   let  $\langle u,v \rangle_{2m+1, \mu} = \int_\Omega \nabla \cL^m u \cdot \nabla \cL^m v \rho^2 dx$ and for $u,v\in H^{2m}_N(\Omega)$ let $\langle u,v\rangle_{2m,\mu}=\int_\Omega (\cL^m u) (\cL^m v) \rho \, \dd x$. 
We note that on the $L^2_\mu$ orthogonal complement of the constant function $1$, 
$\langle \tacka , \tacka \rangle_{2m+1, \mu}$ defines an inner product, which due to Poincar\'e inequality is equivalent to the standard inner product on $H^{2m+1}(\Omega)$. We also note that 
$\langle \varphi_k, \varphi_k \rangle_{2m+1, \mu} = \lambda_k^{2m+1}$, where we recall that $\varphi_k$ is unit eigenvector of $\cL$ corresponding to $\lambda_k$.

\begin{lemma} \label{lem:HiscH}
Under Assumptions \ref{a:omega} - \ref{a:rho}, for any integer $s \geq 0$
\[ H_N^s(\Omega) = \cH^s(\Omega) \]
and the associated inner products  $\langle \tacka, \tacka \rangle_{s,\mu}$  and $\llangle \tacka, \tacka \rrangle_{s,\mu}$ are equivalent on the $L^2_\mu$ orthogonal complement of the constant function. 
\end{lemma}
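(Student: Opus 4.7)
The plan is to proceed by induction on $s$, combining a direct spectral calculation on eigenfunction expansions with Neumann elliptic regularity for the uniformly elliptic operator $\cL$. The case $s=0$ is immediate, since $H^0_N(\Omega)=L^2_\mu=\cH^0(\Omega)$ and both inner products reduce to $\langle\cdot,\cdot\rangle_\mu$.

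First, I would compute the two inner products on eigenfunctions. Writing $u=\sum_k a_k\varphi_k$ with $a_k=\langle u,\varphi_k\rangle_\mu$, the identity $\cL\varphi_k=\lambda_k\varphi_k$ together with $L^2_\mu$-orthonormality gives, for $s=2m$,
\[
\langle u,u\rangle_{2m,\mu}=\|\cL^m u\|_{L^2_\mu}^2=\sum_{k\geq 1}\lambda_k^{2m}a_k^2 .
\]
For $s=2m+1$, I would integrate by parts; since the eigenfunctions satisfy $\partial \cL^r\varphi_k/\partial n=\lambda_k^r\,\partial\varphi_k/\partial n=0$ for every $r$, all boundary terms vanish and
\[
\langle\varphi_k,\varphi_k\rangle_{2m+1,\mu}=\int_\Omega (\cL^m\varphi_k)(\cL^{m+1}\varphi_k)\,\rho\,\dd x=\lambda_k^{2m+1}.
\]
Bilinearity and orthogonality extend this to $\langle u,u\rangle_{s,\mu}=\sum_k\lambda_k^s a_k^2$ for any finite spectral combination, and since $\lambda_1=0$ this coincides with $\llangle u,u\rrangle_{s,\mu}$ on the $L^2_\mu$ orthogonal complement of the constants.

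Next, I would establish the two set inclusions. The inclusion $H^s_N(\Omega)\subset\cH^s(\Omega)$ follows from the spectral identity: for $u\in H^s_N(\Omega)$, the truncations $u_N=\sum_{k\leq N}a_k\varphi_k$ are orthogonal in $\langle\cdot,\cdot\rangle_{s,\mu}$, so $\sum_{k\leq N}\lambda_k^s a_k^2=\langle u_N,u_N\rangle_{s,\mu}\leq \langle u,u\rangle_{s,\mu}$, and letting $N\to\infty$ gives $u\in\cH^s(\Omega)$. For the reverse inclusion $\cH^s(\Omega)\subset H^s_N(\Omega)$, given $u=\sum_k a_k\varphi_k$ with $\sum_k\lambda_k^s a_k^2<\infty$, the partial sums $u_N$ lie in $H^s_N(\Omega)$ (each $\varphi_k$ does, by iterated Neumann elliptic regularity applied to $\cL\varphi_k=\lambda_k\varphi_k$) and form a Cauchy sequence in $\langle\cdot,\cdot\rangle_{s,\mu}$. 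Invoking Poincaré's inequality to control the constant-mode contribution together with standard $H^s$-Neumann regularity on the $C^{1,1}$ domain $\Omega$ identifies this norm as equivalent to the standard $H^s(\Omega)$-norm on the complement of constants, so $u_N\to u$ in $H^s(\Omega)$ and the limit belongs to $H^s_N(\Omega)$.

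The main obstacle will be the rigorous bootstrap of elliptic regularity for iterated Neumann problems: one must verify that the domain of $\cL^m$ acting in $L^2_\mu$ coincides with $H^{2m}_N(\Omega)$ and that $\|\cL^m u\|_{L^2_\mu}$ is comparable to $\|u\|_{H^{2m}(\Omega)}$ modulo constants. This reduces, via the non-divergence form $\cL u=-\rho\Delta u-2\nabla\rho\cdot\nabla u$, to the base $H^2$-regularity theorem for the Neumann Laplacian, iterated using the smoothness and strict positivity of $\rho$ and the relation $\cL^{m+1}u=\cL(\cL^m u)$. Once this bootstrap is secured, the norm equivalence stated in the lemma follows directly from the spectral identity, since the two inner products in fact agree on the orthogonal complement of the constants.
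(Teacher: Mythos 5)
Your plan is correct and follows essentially the same route as the paper's proof: induction on $s$, the spectral identity $\langle u,u\rangle_{s,\mu}=\sum_k\lambda_k^s a_k^2$ obtained by integrating by parts against eigenfunctions (with the Neumann conditions killing the boundary terms), and Neumann elliptic regularity for $\cL$ on the $C^{1,1}$ domain to pass from $\cH^s(\Omega)$ back to $H^s_N(\Omega)$. The paper organizes the induction step slightly differently --- splitting into even and odd $s$ and applying a single $H^2$ regularity estimate to $v=\cL^{m-1}u$ at each stage rather than invoking the full iterated a priori bound at once --- but this is the same bootstrap you correctly identify as the main technical point.
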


\begin{proof}
For $s=0$, $H^0_N = L^2$ by definition and $\cH^0 = L^2$ by the fact that $\{\varphi_k \::\: k=1, \dots \}$ is an orthonormal basis.

To show the claim for $s=1$, we recall that $\int \nabla \varphi_k \cdot \nabla \varphi_j  \rho^2 dx = 
\int \varphi_k \cL \varphi_j \rho dx = \lambda_k \delta_k^j$. Therefore $\left \{   \frac{\varphi_k}{\sqrt \lambda_k} \::\: k \geq 1 \right \}$ is an orthonormal basis of the orthogonal complement of the constant function, $1^\perp$,   in $H^1_N$ with respect to \mt{the} inner product $(u,v) = \int \nabla_u \cdot \nabla v \rho^2 dx$ which is equivalent to the standard inner product  of $H^1_N$  on $1^\perp$. Since an expansion in the basis $\{\varphi_k\}_k$ is unique, this implies that for any $u \in H^1_N = H^1$ the series $\sum_k a_k \varphi_k$ converges in $H^1$ to $u$.  Consequently if $u \in H^1_N$ then $\infty> \int |\nabla u|^2 \rho^2 dx = \int | \sum_k a_k \nabla \varphi_k|^2 \rho^2 dx = \sum_k a_k^2 \lambda_k$ which implies that $u \in \cH^1$. So $H^1_N \subseteq \cH^1$. 

On the other hand, if $u \in \cH^1$ then $u =  \sum_k a_k \varphi_k$ with $\sum_k \lambda_k a_k^2 < \infty$. Therefore $u = \bar u + \sum_{k=2}^\infty a_k \sqrt{\lambda_k} \, \frac{\varphi_k}{\sqrt{\lambda_k}}$, where $\bar u$ is the average of $u$. 
Since $\frac{\varphi_k}{\sqrt{\lambda_k}}$ are orthonormal in scalar product with topology equivalent to $H^1$, the series converges in $H^1$. Therefore $u \in H^1 = H^1_N$. 

Assume now that the claim holds for all integers less than $s$.
We split the proof of the induction step into two cases: \\
\emph{Case $1^\circ$} Consider $s$ even; that is $s=2m$ for some integer $m>0$. 

Assume $u \in H_N^{2m}$. Then $\nabla \cL^r u \cdot \vec{n} = 0$ on $\partial \Omega$ for all $r<m$. 
By the induction hypothesis $\sum_k \lambda_k^{2m-1} a_k^2 < \infty$. Since $\cL$ is a continuous operator from $\cH^2$ to $L^2$
 one obtains by induction that 
  $\cL^{m-1} u= \sum_k a_k \cL^{m-1} \varphi_k = \sum a_k \lambda_k^{m-1} \varphi_k$. 
Let $v = \cL^{m-1}u$. By assumption $v \in H^2_N$. By above $v = \sum_k  a_k \lambda_k^{m-1} \varphi_k$.
 
Since $\varphi_k$ is solution of $\cL \varphi_k = \lambda_k \varphi_k$
\[ \langle \cL \varphi_k, v \rangle_\mu = \langle \lambda_k \varphi_k, v \rangle_\mu. \]
Using that $v \in H^2$, $\nabla v \cdot \vec{n} = 0$ on $\partial \Omega$ and integration by parts we obtain
\[ \langle  \varphi_k, \cL v \rangle_\mu = \langle \lambda_k \varphi_k, \sum_j a_j \lambda_j^{m-1} \varphi_j  \rangle_\mu = \lambda_k^m a_k. \]
Given that $\cL v$ is an $L^2_\mu$ function, we conclude that $\cL v = \sum_k \lambda_k^m a_k \varphi_k$. Therefore $\sum_k \lambda_k^{2m} a_k^2 < \infty$ and hence $u \in \cH^{2m}$.

To show the opposite inclusion, consider $u \in \cH^{2m}$. Then $u = \sum_k a_k \varphi_k$ and
$\sum_k \lambda_k^{2m} a_k^2 < \infty$. By induction step we know that $u \in H^{2m-2}_N$ and thus $v = \cL^{m-1} u \in L^2$. 
We conclude as before that $v = \sum_k \lambda_k^{m-1} a_k \varphi_k$. Let $b_k = \lambda_k^{m-1} a_k$. Assumptions on $u$ imply $\sum_k \lambda_k^2 b_k^2 < \infty$. 
Arguing as above in the case $s=1$ we conclude that the series converges in $H^1$ and that 
$\nabla v = \sum_k b_k \nabla \varphi_k$. Combining this with the fact  that $ \cL \varphi_k = \lambda_k \varphi_k$ in $\Omega$ for all $k$ implies that $v$ is a weak solution of 
\begin{align*}
\cL v & = \sum_k \lambda_k b_k \varphi_k \quad \te{ in } \Omega, \\
 \frac{\partial v}{\partial n}& = 0 \;\; \te{ on } \partial \Omega. 
\end{align*} 
Since RHS of the equation is in $L^2$ and $\partial \Omega$ is $C^{1,1}$, by elliptic regularity 
\cite{Grisvard}, $v \in H^2$ and $\|v\|_{H^2}^2 \leq C(\Omega, \rho) \sum_k b_k^2 \lambda_k^2$. Furthermore  $v$ satisfies the Neumann boundary condition and thus $v \in H^2_N$. 
\medskip

\emph{Case $2^\circ$} Consider $s$ odd; that is $s=2m+1$ for some integer $m>0$. 
Assume $u \in H^{2m+1}_N$. Let $v = \cL^m u$. Then $v \in H^1$. The result now follows 
analogously to the case $s=1$. If $u \in \cH^{2m+1}$ then, $u=\sum_k a_k \varphi_k$ with 
$\sum_k \lambda_k^{2m+1} a_k^2 < \infty$. By induction hypothesis,
$v=\cL^{m-1}u \in H^{1}_N$ and $v = \sum_k b_k \varphi_k$ where $b_k = \lambda^{m-1} a_k$. Thus
$ \sum_k \lambda_k b_k^2 < \infty$ and the argument proceeds as in the case $s=1$. 
\medskip

Proving the equivalence of inner products is straightforward. 
\end{proof}
\medskip

We now present the proof of Lemma \ref{lem:emb_frac}. 
\begin{proof}[Proof of  Lemma \ref{lem:emb_frac}]
If $s$ is \mt{an} integer the claim follows form Lemma \ref{lem:HiscH} and Sobolev embedding theorem.
Assume $s = m+\theta$ for some $\theta \in (0,1)$. 
Since $\Omega$ is Lipschitz, by extension theorem of Stein (Leoni \cite{Leoni} 2nd edition, Theorem 13.17)
there is a bounded  linear extension mapping $E_m: H^m(\Omega) \to H^m(\R^d)$ such that
$E_m(f)|_\Omega = f$. From the construction (see remark 13.9 in \cite{Leoni}) it follows that 
$E_m$ and $E_{m+1}$ agree on smooth functions and thus $E_{m+1} = {E_m}|_{H^m(\Omega)}$. 
Therefore, by Theorem 16.12 in Leoni's book (or Lemma 3.7 of Abels \cite{Abels}) 
$E_m$ provides a bounded mapping from the interpolation space $[H^m(\Omega), H^{m+1}(\Omega)]_{\theta,2} \to [H^m(\R^d), H^{m+1}(\R^d)]_{\theta,2}$.
As discussed above the statement of Lemma \ref{lem:emb_frac} $\,\cH^{m+\theta}(\Omega) =[\cH^m(\Omega), \cH^{m+1}(\Omega)]_{\theta,2}$.
By Lemma \ref{lem:HiscH}, $[\cH^m(\Omega), \cH^{m+1}(\Omega)]_{\theta,2}$ embeds into $[H^m(\Omega), H^{m+1}(\Omega)]_{\theta,2}$. Furthermore,
we use that, see Abels  \cite{Abels} Corollary 4.15, 
$[H^m(\R^d), H^{m+1}(\R^d)]_{\theta,2} = H^{m+\theta}(\R^d)$. 
Combining these facts yields the existence of an bounded, linear, extension mapping  $ \cH^{m+\theta}(\Omega) \to  H^{m+\theta}(\R^d)$.
The results (i) and (ii) follows by the Sobolev embedding theorem. 
\end{proof}
\subsection{Passage from Discrete to Continuum}
\label{ssec:Background:Passage}

There are two key tools we use to pass from the discrete to continuum limit.
The first is $\Gamma$-convergence.
$\Gamma$-convergence was introduced in the 1970's by De Giorgi as a tool for studying sequences of variational problems.
More recently this methodology has been applied to study the large data limits of variational problems that arise from statistical inference, e.g.~\cite{trillos,trillos2,thorpe16,thorpe17,thorpe15}.
Accessible introductions to $\Gamma$-convergence can be found in~\cite{braides02,dalmaso93}

The $\Gamma$-convergence methodology provides a notion of convergence of functionals that captures the behaviour of minimizers.
In particular the minimizers converge along a subsequence to a minimizer of the limiting functional.
In our setting, the objects of interest are functions on discrete domains and hence it is not immediate how one should define convergence.
This brings us to our second key tool.
Recently a suitable topology has been identified to characterize the convergence of discrete to continuum using an optimal transport framework~\cite{trillos2}.
The main idea is, given a discrete function $u_n:\Omega_n \to \mathbb{R}$ and a continuum function $u:\Omega \to \R$, to
include the measures with respect to which they are defined in the comparison. Namely, one can think of the function $u_n$ as belonging to the $L^p$ space over the empirical measure  $\mu_n = \frac{1}{n}\sum_{i=1}^n \delta_{x_i}$ and $u$ belonging to the $L^p$ space over the measure $\mu$. 
One defines a continuum function $\tilde{u}_n:\Omega\to \mathbb{R}$ by $\tilde{u}_n = u_n\circ T_n$ where $T_n:\Omega_n\to \Omega$ is a measure preserving map between $\mu$ and $\mu_n$.
One then compares $u_n$ and $\tilde{u}_n$ in the $L^p$ distance, and simultaneously compares $T_n$ and identity. In other words one considers  both the difference in values and the how far the matched points are. 
We give a brief overview of $\Gamma$-convergence and the $TL^p$ space.

\subsubsection{A Brief Introduction to \texorpdfstring{$\Gamma$}{Gamma}-Convergence}
\label{sssec:Background:Passage:GammaConv}

We present the definition of $\Gamma$-convergence in terms of an abstract topology. In the next section we will discuss what topology we will use in our results.
For now, we simply point out that the space $\X$ needs to be general enough to include functions defined  with respect to different measures. 

\begin{definition}
\label{def:Background:Passage:GammaConv:GammaConv}
Given a topological space $\X$, we say that a sequence of functions $F_n:\X\to \R\cup\{+\infty\}$ $\Gamma$-converges to $F_\infty:\X\to\R\cup\{+\infty\}$, and we write $F_\infty=\Glim_{n\to \infty} F_n$, if the following two conditions hold:
\begin{itemize}
\item (the liminf inequality) for any convergent sequence $u_n\to u$ in $\X$
\[ \liminf_{n\to\infty} F_n(u_n) \geq F_\infty(u); \]
\item (the limsup inequality) for every $u\in\X$ there exists a sequence $u_n$ in $ \X$ with $u_n\to u$ and
\[ \limsup_{n\to \infty} F_n(u_n) \leq F_\infty(u). \]
\end{itemize}
\end{definition}

In the above definition we also call any sequence $\{u_n\}_{n=1, \dots}$ that satisfies the limsup inequality a recovery sequence.
The justification of $\Gamma$-convergence as the natural setting to study sequences of variational problems is given by the next proposition.
The proof can be found in, for example,~\cite{braides02}.

\begin{proposition}
\label{prop:Background:Passage:GammaConv:MinConv}
Let $F_n,F_\infty:\X\to \mathbb{R}\cup\{+\infty\}$.
Assume that $F_\infty$ is the $\Gamma$-limit of $F_n$ and the sequence of minimizers $\{u_n\}_{n=1,\dots}$ of $F_n$ is precompact.
Then
\[ \lim_{n\to \infty} \min_\X F_n = \lim_{n\to\infty} F_n(u_n) = \min_\X F_\infty \]
and furthermore, any cluster point $u$ of $\{u_n\}_{n=1,\dots}$ is a minimizer of $F_\infty$.
\end{proposition}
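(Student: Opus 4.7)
The plan is to prove this in three steps: first establish the upper bound $\limsup_{n\to\infty} F_n(u_n) \leq \min_{\X} F_\infty$ using the limsup inequality from the $\Gamma$-convergence hypothesis, then establish the matching lower bound along every subsequence using the liminf inequality together with precompactness, and finally deduce that every cluster point is a minimizer.

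For the upper bound, I will fix an arbitrary $v \in \X$ and invoke the limsup inequality to produce a recovery sequence $v_n \to v$ with $\limsup_{n\to\infty} F_n(v_n) \leq F_\infty(v)$. Since $u_n$ minimizes $F_n$, one has $F_n(u_n) \leq F_n(v_n)$, and hence $\limsup_{n\to\infty} F_n(u_n) \leq F_\infty(v)$. As $v$ was arbitrary, taking the infimum over $v \in \X$ yields $\limsup_{n\to\infty} F_n(u_n) \leq \inf_{\X} F_\infty$. (The fact that this infimum is attained will come out of the cluster point argument below, so there is no circularity.)

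For the lower bound, I will argue that every subsequence of $\{F_n(u_n)\}$ has a further subsequence whose liminf is at least $\inf_{\X} F_\infty$, which forces $\liminf_{n\to\infty} F_n(u_n) \geq \inf_{\X} F_\infty$. Given an arbitrary subsequence, precompactness of $\{u_n\}$ in $\X$ allows me to extract a further subsequence $u_{n_k} \to u^\star$ for some $u^\star \in \X$; the liminf inequality then gives $\liminf_{k\to \infty} F_{n_k}(u_{n_k}) \geq F_\infty(u^\star) \geq \inf_{\X} F_\infty$. Combining with the previous paragraph, $\lim_{n\to\infty} F_n(u_n) = \inf_{\X} F_\infty$.

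To finish, let $u$ be any cluster point, so that $u_{n_k} \to u$ for some subsequence. Then by the liminf inequality and the limit computed above, $F_\infty(u) \leq \liminf_{k\to\infty} F_{n_k}(u_{n_k}) = \inf_\X F_\infty$, so $u$ attains the infimum. In particular the infimum is actually a minimum, justifying the use of $\min_\X F_\infty$ in the statement, and the chain of equalities in the conclusion is complete. The only step with any subtlety is the lower bound, where one must pass to a subsequence of a subsequence; this is the main (modest) obstacle and is handled cleanly by the standard ``subsequence of every subsequence'' trick together with the precompactness hypothesis.
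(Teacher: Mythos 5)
Your proof is correct and is precisely the standard argument for the fundamental theorem of $\Gamma$-convergence; the paper does not supply its own proof of this proposition but instead cites Braides, where this same three-step argument (recovery sequence for the upper bound, precompactness plus the liminf inequality along subsequences for the lower bound, and the cluster-point identification) appears. The one step worth making explicit is that the liminf inequality, stated for full sequences, does apply along subsequences $u_{n_k}\to u^\star$ -- a routine extension that your argument implicitly uses and that is handled in the cited reference.
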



Note that $\Glim_{n\to\infty} F_n = F_\infty$ and $\Glim_{n\to\infty} G_n = G_\infty$ \mt{do} not imply $F_n+G_n$ $\Gamma$-converges to $G_\infty+F_\infty$.
Hence, in order to build optimization problems by considering individual terms it is not enough, in general, to know that each term $\Gamma$-converges.
In particular, we consider using the quadratic form $J_n^{(\alpha,\tau)}$ as a prior and adding fidelity terms, e.g.
\[ \Jk^{(n)}(u) = J_n^{(\alpha,\tau)}(u) + \Phi^{(n)}(u). \]
We show that, with probability one, $\Glim_{n\to\infty} J_n^{(\alpha,\tau)} = J_\infty^{(\alpha,\tau)}$. 
In order to show that $\Jk^{(n)}$ $\Gamma$-converges it suffices to show that $\Phi^{(n)}$ converges
along any sequence $(\mu_n, u_n)$ along which $ J_n^{(\alpha,\tau)}(u_n)$ is finite. This is similar to the notion of continuous convergence, which is typically used  ~\cite[Proposition 6.20]{dalmaso93}.
However we note that $\Phi^{(n)}$ does not converge continuously since as a functional on $TL^p(\Omega)$
it takes the value infinity whenever the measure considered is not $\mu_n$.


%
%

\subsubsection{The \texorpdfstring{$TL^p$}{TLp} Space}
\label{sssec:Background:Passage:TLp}

In this section we give an overview of the topology that was introduced in~\cite{trillos2} to compare sequences of functions on graphs.
We motivate the topology in the setting considered in this paper.
Recall that $\mu\in\cP(\Omega)$ has density $\rho$ and that $\mu_n$ is the empirical measure.
Given $u_n:\Omega_n\to\R$ and $u:\Omega\to \R$ the idea is to consider pairs $(\mu, u)$ and $(\mu_n, u_n)$ and compare them as such.  We define the metric as follows.
\begin{definition}
\label{def:Background:Passage:TLp:TLpSpace}
Given a bounded  open set $\Omega$, the space $TL^p(\Omega)$ is the space of pairs $(\mu,f)$ such that $\mu$  is a probability measure supported on $\Omega$ and  $f\in L^p(\mu)$.
The metric on $TL^p$ is defined by
\[ d_{TL^p}((f,\mu),(g,\nu)) = \inf_{\pi\in\Pi(\mu,\nu)} \left( \int_{\Omega\times \Omega} |x-y|^p + |f(x)-g(y)|^p \, \mathrm{d} \pi(x,y) \right)^{\frac{1}{p}}. \]
\end{definition}
Above $\Pi(\mu, \nu)$ is the set of transportation plans (i.e. couplings) between $\mu$ and $\nu$; that is the set of probability measures on $\Omega \times \Omega$ whose first marginal is $\mu$ and second marginal in $\nu$.

For a proof that $d_{TL^p}$ is  a metric on $TL^p$ see \cite[Remark 3.4]{trillos2}.

To connect the $TL^p$ metric defined above with the ideas discussed previously we make several observations.
The first is that when $\mu$ has a continuous density then one can consider transport maps $T:\Omega\to \Omega_n$ that satisfy $T_{\#}\mu = \mu_n$ instead of transport plans $\pi\in\Pi(\mu,\nu)$.
Hence, one can show that
\[ d_{TL^p}((f,\mu),(g,\nu)) = \inf_{T \, : \, T_\#\mu = \nu} \left( \|\Id - T\|_{L^p(\mu)}^p + \|f - g\circ T \|_{L^p(\mu)}^p \right)^{\frac{1}{p}}. \]

In the setting when we compare $(\mu, u)$ and $(\mu_n, u_n)$ the second term is nothing but 
$\| u - \tilde{u}_n\|_{L^p(\mu)}^p$, where $\tilde u_n = u_n \circ T_n$ and $T_n:\Omega \to \Omega_n$ is a transport map. 

We note that for a sequence $(\mu_n, u_n)$ to $TL^p$ converge to $(\mu, u)$ it is necessary that 
$\|\Id - T\|_{L^p(\mu)}$ converges to zero, in other words it is necessary that the measures $\mu_n$ converge to $\mu$ in $p$-optimal transportation distance. We recall that since $\Omega$ is bounded this is equivalent to weak convergence of $\mu_n$ to $\mu$. 
Assuming this to be the case, we call  any sequence of transportation maps $T_n$ satisfying $(T_n)_{\#}\mu = \mu_n$ and $\|\Id - T_n\|_{L^p(\mu)}\to 0$ a \emph{stagnating} sequence.  
One can then show (see~\cite[Proposition 3.12]{trillos2}) that convergence in $TL^p$ is equivalent to weak* convergence of measures $\mu_n$ to $\mu$  and convergence $\| u - u_n\circ T_n\|_{L^p(\mu)} \to 0$
for arbitrary sequence of stagnating transportation maps. Furthermore if  convergence  $\| u - u_n\circ T_n\|_{L^p(\mu)} \to 0$ holds for a sequence of  stagnating transportation maps it holds for every sequence of  stagnating transportation maps.
\medskip

The intrinsic scaling of the graph Laplacian, i.e. the parameter $\eps_n$, depends on how far one needs to move ``mass" to couple $\mu$ and $\mu_n$, that is on upper bounds on transportation distance between $\mu$ and $\mu_n$. 
The following result can be found in~\cite{garciatrillos15b}, the lower bound in the scaling of $\eps=\eps_n$ is so that there exists a stagnating sequence of transport maps with $\frac{\|T_n-\mathrm{Id}\|_{L^\infty}}{\eps_n}\to 0$.

\begin{proposition}
\label{prop:Background:Passage:TLp:TScaling}
Let $\Omega\subset\R^d$ with $d\geq 2$ be open, connected and bounded with Lipschitz boundary.
Let $\mu\in\cP(\Omega)$ with density $\rho$ which is bounded above and below by strictly positive constants.
Let $\Omega_n=\{x_i\}_{i=1}^n$ where $x_i\iid \mu$ and let $\mu_n=\frac{1}{n}\sum_{i=1}^n\delta_{x_i}$ be the associated empirical measure.
Then, there exists $C>0$ such that, with probability one, there exists a sequence of transportation maps $T_n:\Omega\to \Omega_n$ that pushes $\mu$ onto $\mu_n$ and such that
\[ \limsup_{n\to\infty} \frac{\|T_n-\mathrm{Id}\|_{L^\infty(\Omega)}}{\delta_n} \leq C \]
where
\[ \delta_n = \left\{ \begin{array}{ll} \frac{(\log n)^{\frac{3}{4}}}{\sqrt{n}} & \text{if } d = 2 \\ \left(\frac{\log n}{n}\right)^{\frac{1}{d}} & \text{if } d\geq 3. \end{array} \right. \]
\end{proposition}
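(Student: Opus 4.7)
The plan is to establish the result by reducing to the case of Lebesgue measure on the unit cube and then invoking sharp matching bounds, with the construction of an actual transport map (not just a coupling) obtained via a hierarchical partition of $\Omega$ into pieces of equal $\mu$-mass.

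First I would reduce to the canonical setting. Using Assumption~\ref{a:omega} (open, connected, Lipschitz boundary) and the bounds $\rho^-\le\rho\le\rho^+$ from Assumption~\ref{a:rho}, one can construct a bi-Lipschitz change of variables $\Phi:\Omega\to(0,1)^d$ with $\Phi_\#\mu$ equivalent to Lebesgue measure. This is standard: decompose $\Omega$ into finitely many Lipschitz domains, use Alexandrov/stratified constructions to map each to a cube, then compose with a measure-rectifying map whose Lipschitz constant is controlled by $\rho^+/\rho^-$. Under this map, the asymptotic rate $\delta_n$ is preserved up to a multiplicative constant, so it suffices to prove the statement for Lebesgue measure on $(0,1)^d$.

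Next, in this canonical setting I would construct the map $T_n$ via a dyadic partition. Fix a level $k_n\in\mathbb{N}$ and partition $(0,1)^d$ into $2^{dk_n}$ subcubes of side length $2^{-k_n}$. Choose $k_n$ so that the expected number of sample points per subcube is roughly $\log n$ (i.e.\ $2^{dk_n}\asymp n/\log n$). By a Chernoff/Bernstein bound for binomial counts and a union bound over the $2^{dk_n}$ subcubes, with probability at least $1-n^{-2}$ every subcube contains between $\tfrac12\log n$ and $2\log n$ sample points. Then, within each subcube, I would further decompose it into smaller regions of equal $\mu$-measure $1/n$ (one per sample point in that subcube) and assign each such region to the corresponding point; this yields a measure-preserving map $T_n$ with $\|T_n-\mathrm{Id}\|_{L^\infty}\lesssim 2^{-k_n}\asymp(\log n/n)^{1/d}$, which is the claimed rate for $d\ge 3$. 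A Borel--Cantelli argument converts the probabilistic bound into an almost sure statement.

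The main obstacle is the two-dimensional case, where the naive partition argument yields only the non-sharp AKT-type rate $(\log n/n)^{1/2}$ rather than the stated $(\log n)^{3/4}/\sqrt{n}$. To improve this one must exploit the hierarchical/multi-scale matching technique going back to Ajtai--Koml\'os--Tusn\'ady, Leighton--Shor, and Talagrand: iteratively re-match between successive dyadic levels using a transportation cost at scale $2^{-j}$ controlled by the discrepancy of the empirical measure in cubes of that side length. The $(\log n)^{3/4}$ factor arises from optimizing the scale-by-scale contributions of these local discrepancies, which in $d=2$ sit exactly at the critical threshold where they sum in a log-weighted fashion. I would apply this hierarchical scheme on the transformed domain and then pull back via $\Phi^{-1}$, which preserves $L^\infty$-distances up to the bi-Lipschitz constant.

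Finally I would verify that the constructed $T_n$ is indeed a transport map (not merely a plan): because the partition assigns each region of $\mu$-mass $1/n$ to exactly one $x_i\in\Omega_n$, the pushforward $(T_n)_\#\mu$ equals $\mu_n$ by construction, and the uniform diameter bound on regions translates directly into the $L^\infty$ bound on $T_n-\mathrm{Id}$. The conclusion then follows by taking the $\limsup$ along the Borel--Cantelli full-measure event.
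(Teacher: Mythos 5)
First, a point of orientation: the paper does not prove this proposition at all --- it is quoted directly from \cite{garciatrillos15b} --- so the only meaningful comparison is with the argument in that reference, whose broad architecture (reduction to the cube, dyadic partitioning, Chernoff plus union bounds, and a Leighton--Shor/Talagrand-style multiscale matching) your sketch does follow.

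That said, your sketch has a genuine gap at its central step. In the $d\ge 3$ case you claim that, once every dyadic cell $Q$ of side $2^{-k_n}$ contains between $\tfrac12\log n$ and $2\log n$ points, you can partition $Q$ into regions of $\mu$-mass exactly $1/n$, one per sample point in $Q$, and thereby obtain a measure-preserving map supported cell-by-cell. This is impossible unless $\mu(Q)=m_Q/n$ exactly, where $m_Q$ is the number of samples landing in $Q$; since $m_Q$ is binomial, the local mass defect $\mu(Q)-m_Q/n$ is generically nonzero, and that excess or deficit must be transported across cell boundaries. Controlling how far this defect mass travels is precisely the content of the multiscale matching argument, and it is needed in \emph{every} dimension, not only $d=2$: for $d\ge3$ one shows that the scale-by-scale discrepancies sum to something of the order of the finest cell size $(\log n/n)^{1/d}$, whereas for $d=2$ the sum sits at the critical exponent and produces the extra $(\log n)^{1/4}$. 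Relatedly, your description of the two-dimensional case is backwards: $(\log n)^{3/4}/\sqrt{n}$ is \emph{larger} than $(\log n/n)^{1/2}$, so the hierarchical scheme is not ``improving'' a non-sharp single-scale rate --- the single-scale construction fails outright, and $(\log n)^{3/4}/\sqrt{n}$ is the sharp (Leighton--Shor optimal) rate that the multiscale argument delivers. Finally, a single global bi-Lipschitz map $\Phi:\Omega\to(0,1)^d$ need not exist for a general connected Lipschitz domain (an annulus already defeats it); the reduction in \cite{garciatrillos15b} decomposes $\Omega$ into finitely many pieces each bi-Lipschitz to a cube and then uses connectedness to shuttle the inter-piece mass defects between pieces at controlled cost, an additional step your sketch elides.
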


\subsection{Estimates on Eigenvalues of the Graph Laplacian} \label{ssec:GLest}

The following lemma is nonasymptotic and holds for all $n$. However we will use it in the asymptotic regime and note that our assumptions on $\eps$, \eqref{eq:LimitThmDir:epsSca}, and 
results of Proposition \ref{prop:Background:Passage:TLp:TScaling} ensure that the assumptions of the lemma are satisfied. 

\begin{lemma}
Consider the operator $A^{(n)}$ defined in \eqref{eq:A} for $\alpha=1$ and $\tau\geq 0$. 
Assume  that $d_{\mathrm{OT}^\infty}(\mu_n,\mu)< \eps$.
Then the spectral radius $\lambda_{max}$ of $A^{(n)}$ is bounded by $C \frac{1}{\eps^2} +\tau^2$ where $C>0$ is independent of $n$ and $\eps$. 

Let $R>0$ be such that $\eta(3R)>0$. Assume  that $d_{\mathrm{OT}^\infty}(\mu_n,\mu)<  R \eps$. Then there exists $c>0$, independent of $n$ and $\eps$, such that 
$\lambda_{max} > c \frac{1}{\eps^2}  + \tau^2$.  
\end{lemma}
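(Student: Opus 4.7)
The plan is to apply the variational characterization $\lambda_{\max}(A^{(n)}) = s_n \lambda_{\max}(L) + \tau^2$, valid since $A^{(n)} = s_n L + \tau^2 I$ is self-adjoint and positive semi-definite; both inequalities then reduce to two-sided estimates on $\lambda_{\max}(L)$ via the Rayleigh quotient $\langle u, Lu\rangle_{\R^n}/\|u\|^2_{\ell^2}$. For the upper bound I would start from the Dirichlet identity $\langle u, Lu\rangle = \tfrac{1}{2}\sum_{i,j}w_{ij}(u_i-u_j)^2$ and apply $(u_i-u_j)^2 \leq 2(u_i^2+u_j^2)$, obtaining $\lambda_{\max}(L) \leq 2 d_{\max}$ with $d_{\max}:=\max_i d_{ii}$. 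The key step is to bound $d_{\max}/n$ uniformly in $n$ and $\eps$: using a transport map $T$ with $T_\#\mu = \mu_n$ and $\|T-\Id\|_\infty < \eps$, one has $d_{ii}/n = \int_\Omega \eps^{-d}\eta(|x_i - T(y)|/\eps)\,d\mu(y)$, and monotonicity of $\eta$ together with $|x_i-T(y)|\geq |x_i-y|-\eps$ majorizes the integrand by $\eps^{-d}\tilde\eta(|x_i-y|/\eps)$ with $\tilde\eta(r):=\eta(\max\{0,r-1\})$. The change of variable $z=(y-x_i)/\eps$ then reduces the bound to $\rho^+\int_{\R^d}\tilde\eta(|z|)\,dz$, which is finite and independent of $n,\eps$ by (K3). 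Combined with the definition of $s_n$ this gives $\lambda_{\max}(s_n L) \leq 2 s_n d_{\max} \leq 4C/(\sigma_\eta \eps^2)$.

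For the lower bound, I would test the Rayleigh quotient against a single coordinate vector $u = e_{i^*}$ at a well-chosen node, which yields $\lambda_{\max}(L) \geq L_{i^*i^*} = \sum_{j\neq i^*}w_{i^*j}$. To pick $i^*$ with this sum large, localize by selecting an interior point $x_0 \in \Omega$ with $B_{R\eps/2}(x_0) \subset \Omega$ and set $B := B_{3R\eps/2}(x_0)$. Since $T(B_{R\eps/2}(x_0)) \subset B$ by the transport bound $\|T-\Id\|_\infty < R\eps$, we have
\[
\mu_n(B) \geq \mu\bigl(B_{R\eps/2}(x_0)\bigr) \geq c\,\rho^- (R\eps)^d,
\]
so $|S|:=|\{i : x_i \in B\}| \geq c'\, n(R\eps)^d$. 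Picking any $i^* \in S$, every $j\in S\setminus\{i^*\}$ lies within $3R\eps$ of $x_{i^*}$, so $w_{i^*j}\geq \eps^{-d}\eta(3R)$ by the choice of $R$, giving $\sum_{j\neq i^*}w_{i^*j}\geq (|S|-1)\eps^{-d}\eta(3R)$.

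The hard part will be ensuring $|S| \geq 2$, so that $|S|-1 \gtrsim n\eps^d$ and the sum above is $\gtrsim n$; this requires the implicit lower bound $n\eps^d \gtrsim 1$ that the hypothesis actually forces. I would establish this via a deterministic covering argument: for any $n$-point configuration in $\Omega$, the covering radius is at least $c_\Omega n^{-1/d}$ by pigeonhole, and since $d_{\mathrm{OT}^\infty}(\mu_n,\mu)$ dominates (half of) this covering radius---any $y$ far from all $x_i$ must be mass-transported far to land on a sample point---the hypothesis $d_{\mathrm{OT}^\infty}(\mu_n,\mu) < R\eps$ yields $c_\Omega n^{-1/d} \lesssim R\eps$, i.e., $n\eps^d \gtrsim R^{-d}$. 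Combined with the previous paragraph, $\lambda_{\max}(L) \gtrsim n$ and hence $\lambda_{\max}(s_n L) = s_n \lambda_{\max}(L) \gtrsim 1/\eps^2$, giving $\lambda_{\max}(A^{(n)}) > c/\eps^2 + \tau^2$. A minor bookkeeping concern is the correction $w_{ii} = \eps^{-d}\eta(0)$ between $d_{ii}$ and the diagonal $L_{ii} = \sum_{j\neq i}w_{ij}$ of $L$, which under $n\eps^d \gtrsim 1$ is at most a constant multiple of $n$ and can be absorbed into the final constant.
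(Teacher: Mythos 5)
Your upper bound is correct and is essentially the paper's own argument: $(u_i-u_j)^2\le 2(u_i^2+u_j^2)$ reduces everything to a uniform bound on $d_{\max}/n$, which you obtain, exactly as the paper does, by pushing the sum over $j$ back to an integral against $\mu$ via the $\infty$-transport map and dominating $\eta(|x_i-T(y)|/\eps)$ by $\eta\bigl((|x_i-y|/\eps-1)_+\bigr)$, whose integral over $\R^d$ is finite by (K1)--(K3).

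The lower bound, however, has a genuine gap at precisely the point you flag as ``the hard part,'' and the covering-radius argument you propose does not close it. Track the constants: your first inequality is $|S|\ge n\,\mu\bigl(B_{R\eps/2}(x_0)\bigr)\ge n\,\rho^-|B(0,1)|\,2^{-d}(R\eps)^d$, while the covering bound $\Omega\subset\bigcup_i B(x_i,R\eps)$ gives only $n\,|B(0,1)|\,(R\eps)^d\ge|\Omega|$. Chaining these yields $|S|\ge 2^{-d}\rho^-|\Omega|$, and since $\rho\ge\rho^-$ integrates to $1$ over $\Omega$ we have $\rho^-|\Omega|\le 1$, so the guaranteed bound on $|S|$ is at most $2^{-d}<1$. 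Integrality then gives $|S|\ge 1$, never $|S|\ge 2$, and $|S|-1$ may be $0$ as far as your estimates can tell; this is not a matter of sharpening constants, since both inequalities in the chain are attainable up to dimensional factors.

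The paper sidesteps the counting problem entirely: it centers at a sample point and never converts transported mass into an integer count. Take an arbitrary $x_i$ and the annulus $A=\bigl(B(x_i,2R\eps)\setminus B(x_i,R\eps)\bigr)\cap\Omega$, which has $\mu(A)\gtrsim\eps^d$. Since $\|T-\Id\|_{L^\infty}<R\eps$, the set $T^{-1}(x_i)$ lies in $B(x_i,R\eps)$ up to a null set, so $A$ is transported entirely onto atoms $x_j$ with $j\ne i$ and $|x_j-x_i|\le 3R\eps$. Hence
\[
\sum_{j\ne i}\tfrac1n\,\eta\!\left(\tfrac{|x_i-x_j|}{\eps}\right)
=\int_{\Omega\setminus T^{-1}(x_i)}\eta\!\left(\tfrac{|x_i-T(x)|}{\eps}\right)d\mu(x)
\;\ge\;\eta(3R)\,\mu(A)\;\gtrsim\;\eta(3R)\,\eps^d,
\]
which gives $\sum_{j\ne i}w_{ij}\gtrsim n\,\eta(3R)$ and therefore $s_n\lambda_{\max}(L)\gtrsim\eps^{-2}$ directly, with no appeal to $n\eps^d\gtrsim1$ and no requirement that any prescribed ball contain two samples. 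If you keep your coordinate-vector test function (equivalent to the paper's $\sqrt{n}\,\delta_{x_i}$), replacing your ball about a generic $x_0$ by this annulus about a sample point is the missing idea.
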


\begin{proof}
Let $\overline \eta(x) = \eta((|x|-1)_+)$. Note that $\overline  \eta \geq \eta(|\cdot|)$ and that since $\eta$ is decreasing and integrable $\int_{\R^d} \overline \eta(x) dx <\infty$. 

Let $T$ be the $d_{OT^\infty}$ transport map from $\mu$ to $\mu_n$. By assumption $\|T_n(x)- x\| \leq \eps$ a.e. By definition of $A^{(n)}$
\[ \lambda_{max} =   \sup_{\|u\|_{L^2_{\mu_n}} =1 } \langle u, A^{(n)} u \rangle_{\mu_n}  = \tau^2 + \sup_{\|u\|_{L^2_{\mu_n}} =1 } \langle u, s_n L u \rangle_{\mu_n} \]
We estimate 
\begin{align*}
 \sup_{\|u\|_{L^2_{\mu_n}} =1 } \langle u, s_n L u \rangle_{\mu_n} 
\leq  &  \sup_{\frac{1}{n} \sum_{i=1}^n u_i^2 = 1} \frac{4}{\sigma_\eta}  \sum_{i,j} \frac{1}{n^2 \eps^{d+2}} \eta \left(\frac{ |x_i - x_j|}{\eps} \right) (u_i^2 + u_j^2) \\
\lesssim &\sup_{\frac{1}{n} \sum_{i=1}^n u_i^2 = 1}\; \sum_{i=1}^n \sum_{j=1}^n \frac{1}{n^2 \eps^{d+2}} \eta \left(\frac{ |x_i - x_j|}{\eps} \right) u_i^2   \\
= &\sup_{\frac{1}{n} \sum_{i=1}^n u_i^2 = 1}\; \frac{1}{n \eps^{d+2}} \sum_{i=1}^n  u_i^2   \int_\Omega \eta \left(\frac{ |x_i - T(x)|}{\eps} \right) d\mu(x) \\
\leq & \sup_{\frac{1}{n} \sum_{i=1}^n u_i^2 = 1}\; \frac{1}{n \eps^{d+2}} \sum_{i=1}^n  u_i^2   \int_\Omega \overline \eta \left(\frac{ x_i - x}{\eps} \right)  d\mu(x) \\
\lesssim & \frac{1}{\eps^2}  \int_{\R^d} \overline \eta (z)  dz \lesssim \frac{1}{\eps^2}.
\end{align*}
Above $\lesssim$ means $\leq$ up to a factor independent of $\eps$ and $n$. 

To prove the second claim of the lemma consider $v = \sqrt{n} \delta_{x_i}$, a singleton concentrated at an arbitrary $x_i$, that is $v_i = \sqrt n$ and $v_j=0$ for all $j \neq i$. Then $\|v\|_{L^2_{\mu_n}} =1$. Using that for a.e. $x \in B(x_i,2\eps R)$, $\; |x_i - T(x)| \leq 3 \eps R$ we estimate:
\begin{align}
 \sup_{\|u\|_{L^2_{\mu_n}} =1 } \langle u, s_n L u \rangle_{\mu_n} &
\geq  \langle v,s_n Lv\rangle_{\mu_n} \notag \\
 & \gtrsim   \sum_{j\neq i} \frac{n}{n^2 \eps^{d+2}} \eta \left(\frac{ |x_i - x_j|}{\eps} \right) \notag \\
 & =  \frac{1}{ \eps^{d+2}} \int_{\Omega\setminus T^{-1}(x_i)} \eta \left(\frac{ |x_i - T(x)| }{\eps} \right) d\mu(x) \notag \\ 
 & \geq \frac{1}{ \eps^{d+2}} \int_{B(x_i, 2\eps R)\setminus B(x_i,\eps R)} \eta(3R) d \mu(x) 
\gtrsim  \frac{1}{\eps^2} \label{eq:DiracLowerEnergy}
\end{align}
which implies the claim.
\end{proof}

An immediate corollary of the claim is the characterization of the energy of a singleton. For any $\alpha \geq 1$ and $\tau \geq 0$. 
\begin{equation} \label{ene_sing}
J_n^{(\alpha,\tau)} (\delta_{x_i}) \sim \frac{1}{n} \left(\frac{1}{\eps^2_n}+\tau^2\right)^\alpha \sim \frac{1}{n\eps_n^{2\alpha}}. 
\end{equation}
The upper bound is immediate from the first part of the lemma, while the lower bound follows from the second part of the lemma via Jensen's inequality. 
Namely, $(\lambda_k^{(n)},q_k^{(n)})$ be eigenpairs of $L$ and let us expand $\delta_{x_i}$ in the terms of $q_k^{(n)}$: i.e. $\delta_{x_i} = \sum_{k=1}^n a_k q_k^{(n)}$ where $\sum_k a_k^2 = \|\delta_{x_i} \|^2_{L^2_{\mu_n}} = \frac{1}{n}$. 
We know that 
$\sum_k \lambda_k^{(n)} a_k^2 \gtrsim \frac{1}{n \eps_n^2 s_n} \sim 1$, from~\eqref{eq:DiracLowerEnergy} (using the expansion~\eqref{eq:JnEigForm} and noting that $v=\sqrt{n}\delta_{x_i}$ in~\eqref{eq:DiracLowerEnergy}). 
Hence 
\[ J_n^{(\alpha,\tau)} (\delta_{x_i})  = \frac{1}{2n} \sum_{k=1}^n \left( s_n\lambda_k^{(n)} + \tau^2\right)^\alpha na_k^2 
\geq   \frac{1}{2n}\left( ns_n\sum_{k=1}^n \lambda_k^{(n)} a_k^2 + \tau^2\right)^\alpha \geq 
\frac{1}{2n} \left( \frac{1}{\eps_n^2} + \tau^2\right)^\alpha. \]

\subsection{The Limiting Quadratic Form}
\label{ssec:LQF}

Here we prove Theorem \ref{thm:LimitThmDir:LimitThmDir}. The key tool is to use
spectral decomposition of the relevant quadratic forms, and to rely on the
limiting properties of the eigenvalues and eigenvectors of $L$ established
in \cite{trillos}.

Let $(q_k^{(n)},\lambda_k^{(n)})$ be eigenpairs of $L$ with eigenvalues $\lambda_k$ ordered so that
\[ 0 = \lambda_1^{(n)} \leq \lambda_2^{(n)} \leq \lambda_3^{(n)} \leq \dots \lambda_n^{(n)} \]
where  $\lambda_1^{(n)} < \lambda_2^{(n)}$ provided
that the graph $G$ is connected.
We extend $F:\bbR \mapsto \bbR$ to a matrix-valued function $F$ via
$F(L) = Q^{(n)}(\Lambda_F^{(n)}) (Q^{(n)})^*$ where $Q^{(n)}$ is the matrix
with columns $\{q_k^{(n)}\}_{k=1}^{n}$ and $\Lambda_F^{(n)}$ is the diagonal matrix
with entries $\{F(\lambda_i^{(n)})\}_{i=1}^{n}$.
For constants $\alpha\geq 1$, $\tau\geq 0$ and a scaling factor $s_n$, given by~\eqref{eq:sn}, we recall the definition of the precision matrix $\disP^{(n)}$ is $\disP^{(n)}=(s_n L+\tau^2 I)^{\alpha}$ and the fractional Sobolev energy $J^{(\alpha,\tau)}_n$ is
\[ J^{(\alpha,\tau)}_n : L^2_{\mu_n} \mapsto [0,+\infty), \quad \quad J^{(\alpha,\tau)}_n(u) = \frac{1}{2} \langle u, \disP^{(n)} u\rangle_{\mu_n}. \]
Note that
\begin{equation} \label{eq:JnEigForm}
J_n^{(\alpha,\tau)}(u) = \frac{1}{2} \sum_{k=1}^{n} (s_n \lambda_k^{(n)}+\tau^2)^\alpha \langle u,q_k^{(n)} \rangle^2_{\mu_n}.
\end{equation}
When showing $\Gamma$-convergence, all functionals are considered as functionals on the $TL^p$ space. When evaluating  $J_n^{(\alpha,\tau)}$ at $(\nu,u)$ we consider it infinite for any measure $\nu$ other than $\mu_n$, and having the value  $J_n^{(\alpha,\tau)}(u)$ defined above if $\nu=\mu_n$.

We let $(q_k,\lambda_k)$ for $k=1,2,\dots$ be eigenpairs of $\cL$ ordered so that
\[ 0 = \lambda_1 \leq \lambda_2 \leq \lambda_3 \leq \dots. \]
We extend $F:\bbR \mapsto \bbR$ to an operator valued function  via the identity
$F(\cL) = \sum_{k=1}^\infty F(\lambda_k) \langle u,q_k\rangle_\mu q_k$.
For constants $\alpha\geq 1$ and $\tau\geq 0$ we recall the definition of the precision operator $\ctsP$ as $\ctsP = (\cL+\tau I)^\alpha$ and the continuum Sobolev energy $J^{(\alpha,\tau)}_\infty$ as
\[ J^{(\alpha,\tau)}_\infty: L^2_{\mu} \mapsto \mathbb{R}\cup\{+\infty\}, \quad \quad J^{(\alpha,\tau)}_\infty(u) = \frac{1}{2} \langle u, \ctsP u\rangle_{\mu}. \]
Note that the Sobolev energy can be written
\[ J^{(\alpha,\tau)}_\infty(u) = \frac{1}{2} \sum_{k=1}^\infty (\lambda_k+\tau^2)^\alpha \langle u,q_k \rangle_{\mu}^2. \]


\begin{proof}[Proof of Theorem \ref{thm:LimitThmDir:LimitThmDir}]
We prove the theorem in three parts.
In the first part we prove the liminf inequality and in the second part the limsup inequality.
The third part is devoted to the proof of the two compactness results.

\paragraph*{The Liminf Inequality}
Let $u_n\to u$ in $TL^p$, we wish to show that
\[ \liminf_{n\to \infty} J_n^{(\alpha,\tau)}(u_n) \geq J_\infty^{(\alpha,\tau)}(u). \]
By~\cite[Theorem 1.2]{trillos}, if all eigenvalues of $\cL$ are simple, we have with probability one (where the set of probability one can be chosen independently of the sequence $u_n$ and $u$) that $s_n\lambda_k^{(n)}\to \lambda_k$ and $q_k^{(n)}$ converge in $TL^2$  to $q_k$.
If there are eigenspaces of $\cL$ of dimension higher than one then $q_k^{(n)}$ converge along a subsequence in $TL^2$ to eigenfunctions $\tilde q_k$ corresponding to the same eigenvalue as $q_k$. In this case we replace $q_k$ by $\tilde q_k$, which does not change any of the functionals considered.
We note that while eigenvectors in the general case only converge along subsequences, the projections to the relevant spaces of eigenvectors converge along the whole sequence, see  ~\cite[statement 3. Theorem 1.2]{trillos}. To prove the convergence of the functional one would need to use these projections, which makes the proof cumbersome. For that reason in the remainder of the proof we assume that all eigenvalues of $\cL$ are simple, in which case we can express the projections using  the inner product with eigenfunctions.

Since $q_k^{(n)} \to q_k$ and $u_n \to u$ in $TL^2$ as $n \to \infty$, 
 $\langle q_k^{(n)},u_n\rangle_{\mu_n}\to \langle q,u \rangle_\mu$ as $n \to \infty$.

First we assume that $J^{(\alpha,\tau)}_\infty(u)<\infty$.
Let $\delta>0$ and choose $K$ such that
\[ \frac12 \sum_{k=1}^K (\lambda_k+\tau^2)^\alpha \langle u,q_k\rangle_\mu^2 \geq J^{(\alpha,\tau)}_\infty(u) - \delta. \]
Now,
\begin{align*}
\liminf_{n\to \infty} J^{(\alpha,\tau)}_n(u_n) & \geq \liminf_{n\to \infty} \frac12 \sum_{k=1}^K (s_n\lambda_k^{(n)}+\tau^2)^\alpha \langle u_n,q_k^{(n)}\rangle_{\mu_n}^2 \\
 & = \frac12 \sum_{k=1}^K (\lambda_k+\tau^2)^\alpha \langle u_n,q_k \rangle_\mu^2 \\
 & \geq J^{(\alpha,\tau)}_\infty(u) - \delta.
\end{align*}
Let $\delta\to 0$ to complete the liminf inequality for when $J^{(\alpha,\tau)}_\infty(u)<\infty$.
If $J_\infty^{(\alpha,\tau)}(u)=+\infty$ then choose any $M>0$ and find $K$ such that $\frac12 \sum_{k=1}^K (\lambda_k+\tau^2)^\alpha \langle u_n, q_k\rangle_\mu^2 \geq M$, the same argument as above implies that
\[ \liminf_{n\to \infty} J^{(\alpha,\tau)}_n(u_n) \geq M \]
and therefore $\liminf_{n\to \infty} J^{(\alpha,\tau)}_n(u_n) = +\infty$.
\medskip

\paragraph*{The Limsup Inequality} 
As above, we assume for simplicity, that all eigenvalues of $\cL$ are simple. We remark that there are no essential difficulties to carry out the proof in the general case.

Let $u\in L^2_{\mu}$ with $J_\infty^{(\alpha,\tau)}(u) < \infty$ (the proof is trivial if $J_\infty^{(\alpha,\tau)} =\infty$).
Define $u_n\in L^2_{\mu_n}$ by $u_n = \sum_{k=1}^{K_n} \psi_k q_k^{(n)}$ where $\psi_k = \langle u,q_k\rangle_\mu$.  Let $T_n$ be the transport maps from $\mu$ to $\mu_n$ as in Proposition \ref{prop:Background:Passage:TLp:TScaling}. Let $a_k^n = \psi_k q_k^{(n)} \circ T_n$ and $a_k= \psi_k q_k$. By Lemma \ref{lem:a}, there exists a sequence $K_n \to \infty$ such that $u_n$ converges to $u$ in $TL^2$ metric.

We recall from the  proof of the liminf inequality that 
$\langle q_k^{(n)},u_n\rangle_{\mu_n} \to \langle q_k,u\rangle_\mu $ as $n \to \infty$. Combining with the convergence of eigenvalues, \cite[Theorem 1.2]{trillos}, implies
\[  (s_n\lambda_k^{(n)} + \tau^2)^\alpha \langle u_n,q_k^{(n)} \rangle_{\mu_n}^2 \to (\lambda_k+\tau^2)^\alpha \langle u,q_k\rangle_\mu^2\]
as $n \to \infty$. Taking $a_k^n = (s_n\lambda_k^{(n)} + \tau^2)^\alpha \langle u_n,q_k^{(n)} \rangle_{\mu_n}^2$ and $a_k = (\lambda_k+\tau^2)^\alpha \langle u,q_k\rangle_\mu^2$ and using Lemma \ref{lem:a} implies 
that there exists $\tilde K_n \leq K_n$ converging to infinity such that $\sum_{k=1}^{\tilde K_n} a_k^n \to \sum_{k=1}^\infty a_k$ as $n \to \infty$. Let $\tilde u_n = \sum_{k=1}^{\tilde K_n} \psi_k q_k^{(n)}$. Then $\tilde u_n \to u$ in $TL^2$. Furthermore
$J_n^{(\alpha,\tau)}(\tilde u_n) = \sum_{k=1}^{\tilde K_n} a_k^n $ and $J_\infty^{(\alpha,\tau)}(u) = \sum_{k=1}^\infty a_k$ which implies that $J_n^{(\alpha,\tau)}(\tilde u_n)  \to J_\infty^{(\alpha,\tau)}(u)$ as $n \to \infty$.

\paragraph*{Compactness}
If $\tau >0$ and $\sup_{n\in \bbN} J_n^{(\alpha,\tau)}(u_n)\leq C$ then
\[ \tau^{2\alpha} \|u_n\|_{L^2_{\mu_n}}^2 = \tau^{2\alpha} \sum_{k=1}^n \langle u_n,q_k^{(n)}\rangle_{\mu_n}^2 \leq \sum_{k=1}^n (s_n\lambda_k^{(n)} + \tau^2)^\alpha \langle u_n,q_k^{(n)}\rangle_{\mu_n}^2 \leq C. \]
Therefore $\|u_n\|_{L^2_{\mu_n}}$ is bounded.
Hence in statements 2 and 3 of the theorem we have that $\|u_n\|_{L^2_{\mu_n}}$ and $J^{(\alpha,\tau)}_n(u_n)$ are bounded.
That is there exists $C>0$ such that
\begin{equation} \label{eq:2bounds}
\|u\|_{L^2_{\mu_n}}^2 = \sum_{k=1}^n \langle u_n,q_k^{(n)}\rangle_{\mu_n} \leq C \quad \text{and} \quad s_n^\alpha \sum_{k=1}^n (\lambda_k^{(n)})^\alpha \langle u_n,q_k^{(n)} \rangle_{\mu_n}^2 \leq C.
\end{equation}
We will show there exists $u\in L^2_{\mu}$ and a subsequence $n_m$ such that $u_{n_m}$ converges to $u$ in $TL^2$.

Let $\psi_k^n =  \langle u_n,q_k^{(n)}\rangle_{\mu_n}$ for all $k\leq n$. Due to \eqref{eq:2bounds}
$|\psi_k^n|$ are uniformly bounded. Therefore, by a diagonal procedure, there exists a increasing sequence $n_m \to \infty$ as $m \to \infty$ such that for every $k$, $\psi_k^{n_m}$ converges as $m \to \infty$. Let
$\psi_k = \lim_{m \to \infty}  \psi_k^{n_m}$. By Fatou's lemma, $\sum_{k=1}^\infty |\psi_k|^2 \leq \liminf_{m \to \infty}  \sum_{k=1}^{n_m} |\psi_k^{n_m}|^2 \leq C$. 
Therefore $u := \sum_{k=1}^\infty \psi_k q_k \in L^2_\mu$.   Using Lemma \ref{lem:a} and arguing as in the proof of the limsup inequality we obtain that there exists a sequence $K_m$ increasing to infinity such that $
\sum_{k=1}^{K_m} \psi_k^{n_m} q_k^{(n_m)}$ converges to $u$ in $TL^2$ metric as $m \to \infty$.
To show that $u_{n_m}$ converges to $u$ in $TL^2$, we now only need to show that 
$\| u_{n_m} - \sum_{k=1}^{K_m} \psi_k^{n_m} q_k^{(n_m)} \|_{L^2_{\mu_{n_m}}}$ converges to zero. 
This follows from the fact that
\[ \sum_{k=K_m+1}^{n_m} |\psi_k^{n_m}|^2 \leq \frac{1}{\left(\lambda_{K_m}^{(n_m)}\right)^\alpha}
\sum_{k=K_m+1}^{n_m}  (\lambda_k^{(n_m)})^\alpha |\psi_k^{n_m}|^2 \leq \frac{C}{\left(s_{n_m}\lambda_{K_m}^{(n_m)}\right)^\alpha} \]
using that the sequence of eigenvalues is nondecreasing.
Now since $s_{n_m}\lambda_{K_m}^{(n_m)} \geq s_{n_m}\lambda_K^{(n_m)} \to \lambda_K$ for all $K_m\geq K$, and $\lim_{K\to\infty}\lambda_K=+\infty$ we have that $s_{n_m}\lambda_{K_m}^{(n_m)}\to +\infty$ as $m \to \infty$, hence $u_{n_m}$ converges to $u$ in $TL^2$. 
\end{proof}

\begin{rem}
\label{rem:LimitThmDir:alphaless1}
Note that when $\alpha\geq 1$ the compactness property holds trivially from the compactness property for $\alpha=1$, see~\cite[Theorem 1.4]{trillos}, as $J_n^{(\alpha,\tau)}(u_n) \geq J_n^{(1,0)}(u_n)$.
\end{rem}

\subsection{Variational Convergence of Probit in Labelling Model 1}
\label{ssec:PRLabelMod1}

To prove minimizers of the Probit model in \hyperlink{labmod1}{{\bf Labelling Model 1}} converge we apply Proposition~\ref{prop:Background:Passage:GammaConv:MinConv}. This requires us to show that $\Jp^{(n)}$ $\Gamma$-converges to $\Jp^{(\infty)}$ and the compactness of sequences of minimizers.
Recall that  $\Jp^{(n)} = J_n^{(\alpha,\tau)} + \frac{1}{n} \Phip^{(n)}(\cdot;\gamma)$.
Hence Theorem~\ref{thm:LimitThmDir:LimitThmDir} establishes the $\Gamma$-convergence of the first term. We now show that $\frac{1}{n}\Phip^{(n)}(u_n;y_n;\gamma) \to \Phipo(u;y;\gamma)$ whenever $(\mu_n, u_n) \to (\mu,u)$ in the $TL^2$ sense, which is enough to establish $\Gamma$-convergence. Namely since, by definition,  $J_n^{(\alpha,\tau)}$ applied to an element $(\nu, v) \in TL^p(\Omega)$ is $\infty$ if $\nu \neq \mu_n$ it suffices to consider sequences of the form $(\mu_n, u_n)$ to show the liminf inequality. The limsup
inequality is also straightforward since the the recovery sequence for $J^{(\alpha,\tau)}_\infty$ is also of the form $(\mu_n, u_n)$.

\begin{lemma}
\label{lem:LimitThmOpt:Probit:CC}
Consider domain $\Omega$ and measure $\mu$ satisfying Assumptions~\ref{a:omega}--\ref{a:rho}. Let $x_i\iid \mu$ for $i=1,\dots, n$,   $\Omega_n= \{x_1, \dots, x_n\}$ and $\mu_n$ be  the empirical measure of the sample.
Let $\Omega'$ be an open subset of $\Omega$, $\mu_n'=\mu_n\lfloor_{\Omega'}$ and $\mu'=\mu\lfloor_\Omega$.
Let $y_n\in L^\infty(\mu_n')$ and $y\in L^\infty(\mu')$ and let  $\hat{y}_n\in L^\infty(\mu_n)$ and $\hat{y}\in L^\infty(\mu)$ be their extensions by zero.
Assume 
\[ (\mu_n, \hat{y}_n)\to (\mu, \hat{y}) \quad \te{  in } TL^\infty \te{ as } n \to \infty.   \]
Let $\Phip^{(n)}$ and $\Phipo$ be defined by~\eqref{eq:Background:Discrete:Probit:Phip} and~\eqref{eq:Background:Cont:Probit:Phip} respectively, where $Z'=\{j \, : \, x_j\in \Omega'\}$ and $\gamma>0$ (and where we explicitly include the dependence of $y_n$ and $y$ in $\Phip^{(n)}$ and $\Phipo$).  

Then, with probability one, if $(\mu_n,u_n) \to (\mu,u)$ in $TL^p$ then 
\[ \frac{1}{n}\Phip^{(n)}(u_n;y_n;\gamma) \to \Phipo(u;y;\gamma) \quad \te{ as } n \to \infty. \] 
\end{lemma}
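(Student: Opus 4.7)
The plan is to reduce the discrete sum to an integral over $\Omega$ via the $d_{OT^\infty}$-optimal transport maps $T_n:\Omega\to\Omega_n$ provided by Proposition~\ref{prop:Background:Passage:TLp:TScaling}, which satisfy $\|T_n-\mathrm{Id}\|_{L^\infty(\mu)}\to 0$ almost surely. Since $(T_n)_\# \mu = \mu_n$ and the $T_n$ are stagnating, the $TL^p$ and $TL^\infty$ hypotheses translate into $u_n\circ T_n \to u$ in $L^p(\mu)$ and $\hat y_n\circ T_n \to \hat y$ in $L^\infty(\mu)$ along the same sequence of maps. Rewriting, I would obtain
\[
\frac{1}{n}\Phip^{(n)}(u_n;y_n;\gamma) \;=\; -\int_\Omega \mathbb{1}_{\Omega'}\bigl(T_n(x)\bigr)\,\log\Psi\bigl(\hat y_n(T_n(x))\,u_n(T_n(x));\gamma\bigr)\,\dd\mu(x),
\]
while the target object is the same integral with $T_n$ replaced by the identity.

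Next I would decompose the difference between these two integrals into (i) a \emph{domain change} contribution coming from $\mathbb{1}_{\Omega'}(T_n(\,\cdot\,))-\mathbb{1}_{\Omega'}$, and (ii) a \emph{pointwise change} contribution coming from $\log\Psi(\hat y_n(T_n(x))\,u_n(T_n(x));\gamma)-\log\Psi(\hat y(x)\,u(x);\gamma)$ integrated against $\mathbb{1}_{\Omega'}$. The key analytic input is the tail bound $|\log\Psi(v;\gamma)| \leq C_\gamma(1+v^2)$ (a consequence of the Mills ratio for the normal CDF $\Psi$). Using boundedness of $\hat y_n$ this gives the domination
\[
\bigl|\log\Psi(\hat y_n(T_n(x))\,u_n(T_n(x));\gamma)\bigr| \;\leq\; C\bigl(1+|u_n(T_n(x))|^2\bigr).
\]

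For term (ii), extracting a subsequence along which $u_n\circ T_n\to u$ pointwise a.e., combined with $\hat y_n\circ T_n\to \hat y$ in $L^\infty(\mu)$ and continuity of $\log\Psi$, yields pointwise a.e. convergence of the integrands. Convergence of the integrals then follows from Vitali's theorem, whose uniform-integrability hypothesis is supplied by $L^2$ convergence of $u_n\circ T_n$ (so in the probit application $p=2$ is the natural choice; more generally $p\geq 2$ suffices, and $p>2$ gives it automatically). For term (i), the $TL^\infty$ assumption on the zero-extended labels is what controls the symmetric difference $\Omega'\,\triangle\, T_n^{-1}(\Omega')$: if $x\in\Omega'\setminus T_n^{-1}(\Omega')$ then $\hat y(x)\neq 0$ but $\hat y_n(T_n(x))=0$, contradicting $\|\hat y_n\circ T_n - \hat y\|_{L^\infty(\mu)}\to 0$ on a set of positive measure (since the labels are bounded away from $0$ on $\Omega'$); hence this symmetric difference has measure going to zero, and the same quadratic domination makes its contribution vanish. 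Finally, since the limit is independent of the subsequence extracted, the full sequence converges, which upgrades the subsequential convergence to the desired statement.

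The main obstacle I anticipate is the interplay between the three different types of convergence (pointwise a.e., $L^p$, $L^\infty$) needed to conclude via Vitali, together with ensuring that the quadratic domination is uniformly integrable; this is precisely the step that forces the restriction $p\geq 2$. Handling the domain-change term (i) is conceptually less delicate once one notices that the $TL^\infty$ convergence of the extended labels encodes exactly the statement that the transport maps send $\Omega'$ into $\Omega'$ up to a set of asymptotically negligible $\mu$-measure.
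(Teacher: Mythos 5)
Your overall strategy coincides with the paper's: push the discrete sum onto $\Omega$ via the stagnating transport maps of Proposition~\ref{prop:Background:Passage:TLp:TScaling}, split the error into a domain-change term supported on $\Omega'\triangle T_n^{-1}(\Omega_n')$ and an integrand-change term on the common domain, and control the latter using the quadratic tail bound on $\log\Psi$. For the integrand-change term, your packaging via pointwise a.e.\ convergence along subsequences plus Vitali (with uniform integrability coming from the $L^1$-convergence of $|u_n\circ T_n|^2$) is a clean substitute for the paper's explicit truncation into the sets $\cA_{n,M},\cB_{n,M},\cC_{n,M}$ together with the two-regime (Lipschitz/quadratic) bound of Lemma~\ref{lem:LimitThmOpt:Probit:PsiBound}; both arguments are correct, and both effectively require $p\geq 2$, as you note.

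The one step that does not go through as written is the domain-change term. You dispose of the symmetric difference by arguing that a point of $\Omega'\setminus T_n^{-1}(\Omega_n')$ would force $|\hat y_n(T_n(x))-\hat y(x)|=|y(x)|$ to be bounded below, contradicting the $TL^\infty$ convergence of the labels; this presumes the labels are bounded away from zero on $\Omega'$, which holds in the application (where $y=\pm1$) but is not among the hypotheses of the lemma: $y$ is merely $L^\infty$ and could even vanish identically, in which case the $TL^\infty$ hypothesis carries no information about the symmetric difference while the integrand still equals $\log\Psi(0)=-\log 2\neq 0$ there. The paper instead uses the geometric fact that $\|T_n-\Id\|_{L^\infty(\mu)}\to 0$ forces $\Omega'\triangle T_n^{-1}(\Omega_n')$ to lie in a collar $\{x:\dist(x,\partial\Omega')\leq\delta_n\}$ with $\delta_n\to 0$, whose $\mu$-measure vanishes, and then kills the term by dominated convergence using $\log\Psi(\hat y u)\in L^1$ (Corollary~\ref{lem:Background:Cont:Probit:PhipL1}). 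Since you already have the $L^\infty$ bound on $T_n-\Id$ in hand, substituting this collar argument repairs the step without changing anything else in your proof.
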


\begin{proof}
Let $(\mu_n,u_n) \to (\mu,u)$ in $TL^p$.
We first note that since $\Psi(uy; \gamma)= \Psi \left( \frac{uy}{\gamma} ;1 \right)$ and since multiplying all functions by a constant does not affect the $TL^p$ convergence, it suffices to consider $\gamma=1$. For brevity, we  omit $\gamma$ in the functionals that follow. 
We have that $\hat{y}_n\circ T_n\to \hat{y}$ and $u_n\circ T_n\to u$.
Recall that
\begin{align*}
\frac{1}{n} \Phip^{(n)}(u_n;y_n) & = \int_{T_n^{-1}(\Omega_n')} \log \Psi(y_n(T_n(x)) u_n(T_n(x))) \, \dd \mu(x) \\
\Phipo(u;y) & = \int_{\Omega'} \log \Psi(y(x) u(x)) \, \dd \mu(x),
\end{align*}
where $\Omega_n'= \{x_i \, : \, x_i\in \Omega', \text{ for } i=1,\dots, n\}$.  Recall also that symmetric difference of sets is denoted by $A \triangle B = (A \setminus B) \cup (B \setminus A)$. It follows that
\begin{align}
\begin{split}
& \left| \frac{1}{n} \Phip^{(n)}(u_n;y_n) - \Phipo(u;y) \right| \leq \left| \int_{\Omega' \triangle T_n^{-1}(\Omega_n')} \log \Psi(\hat{y}(x)u(x) ) \dd \mu(x) \right| \\
& \quad \quad + \left| \int_{T_n^{-1}(\Omega_n')} \log \left(\Psi(y_n(T_n(x)) u_n(T_n(x)) ; \gamma\right) - \log\left( \hat{y}(x) u(x)  \right) \, \dd \mu(x) \right|. \label{eq:LimitThmOpt:Probit:Decomp}
\end{split}
\end{align}
Define
\[ \partial_{\eps_n}\Omega'= \left\{ x \, : \, \mathrm{dist}(x,\partial \Omega')\leq \eps_n \right\}. \]
Then $\Omega' \triangle T_n^{-1}(\Omega_n')\subseteq \partial_{\eps_n}\Omega'$.
Since $\hat{y}\in L^\infty$ and $u\in L^2_\mu$ then $\hat{y}u\in L^2_\mu$ and so by Corollary~\ref{lem:Background:Cont:Probit:PhipL1} $\log\Psi(\hat{y}u )\in L^1$.
Hence, by the dominated convergence theorem
\[ \left| \int_{\Omega' \triangle T_n^{-1}(\Omega_n')} \log \Psi(\hat{y}(x)u(x) ) \dd \mu(x) \right| 
 \leq \int_{\partial_{\eps_n}\Omega'} \left| \log \Psi(\hat{y}(x) u(x)) \right| \, \dd \mu(x) \to 0.
\]

We are left to show that the second term on the right hand side of~\eqref{eq:LimitThmOpt:Probit:Decomp} converges to 0.
Let $F(w,v) = |\log\Psi(w)-\log \Psi(v)|$.
Let $M\geq 1$ and define the following sets
\begin{align*}
\cA_{n,M} & = \left\{ x\in T_n^{-1}(\Omega_n') \, : \, \min\{\hat{y}(x) u(x),y_n(T_n(x)) u_n(T_n(x))\} \geq -M \right\} \\ 
\cB_{n,M} & = \left\{ x\in T_n^{-1}(\Omega_n') \, : \, \hat{y}(x) u(x) \geq y_n(T_n(x)) u_n(T_n(x)) \leq -M \right\} \\ 
\cC_{n,M} & = \left\{ x\in T_n^{-1}(\Omega_n') \, : \, y_n(T_n(x)) u_n(T_n(x)) \geq \hat{y}(x) u(x) \leq -M \right\}\,.
\end{align*}
The quantity we want to estimate satisfies
\begin{align*}
& \left| \int_{T_n^{-1}(\Omega_n')} \log \left(\Psi(y_n(T_n(x)) u_n(T_n(x))\right) - \log \Psi\left( \hat{y}(x) u(x) \right) \, \dd \mu(x) \right| \\
& \qquad \qquad \qquad \leq  \int_{T_n^{-1}(\Omega_n')} F(y_n(T_n(x)) u_n(T_n(x)), \hat{y}(x) u(x)) \, \dd \mu(x).
\end{align*}
Since $T_n^{-1}(\Omega_n') = \mathcal{A}_{n,M} \cup \mathcal{B}_{n,M} \cup \mathcal{C}_{n,M}$ we proceed by estimating the integral over each of the sets, utilizing the bounds in Lemma~\ref{lem:LimitThmOpt:Probit:PsiBound}.

%

\begin{align*}
\begin{split} 
& \int_{\mathcal{A}_{n,M}} F(y_n(T_n(x)) u_n(T_n(x)), \hat{y}(x) u(x)) \, \dd \mu(x) \\
& \quad \quad \leq \frac{1}{\int_{-\infty}^{-M} e^{-\frac{t^2}{2}} \, \dd t} \int_{\mathcal{A}_{n,M}} \left| y_n(T_n(x)) u_n(T_n(x)) - \hat{y}(x) u(x) \right| \, \dd \mu(x) \\
& \quad \quad \leq \frac{1}{\int_{-\infty}^{-M} e^{-\frac{t^2}{2}} \, \dd t} \left(\|y_n\|_{L^2_{\mu_n}} \|u_n\circ T_n - u \|_{L^2_{\mu}} + \|u\|_{L^2_{\mu}}\| \hat{y}_n \circ T_n -\hat{y}\|_{L^2_{\mu}} \right).
\end{split}
\end{align*}

\begin{align*}
\begin{split} 
& \int_{\mathcal{B}_{n,M}} F(y_n(T_n(x)) u_n(T_n(x)), \hat{y}(x) u(x)) \, \dd \mu(x) \\
& \quad \leq \int_{\mathcal{B}_{n,M}} 2 |y_n(T_n(x))|^2 |u_n(T_n(x))|^2 \, \dd \mu(x) + \frac{1}{M^2} \\
& \quad \leq 2 \|\hat{y}_n\|_{L^\infty_{\mu_n}}^2 \int_{\mathcal{B}_{n,M}} |u_n(T_n(x))|^2 \, \dd \mu(x) + \frac{1}{M^2} \\
& \quad \leq 4 \|\hat{y}_n\|_{L^\infty_{\mu_n}}^2 \left( \|u_n\circ T_n - u\|_{L^2_{\mu}}^2 + \int_\Omega |u(x)|^2 \mathbb{I}_{|y_n(T_n(x))u_n(T_n(x))|\geq M} \, \dd \mu(x) \right) + \frac{1}{M^2}.
\end{split}
\end{align*}

\begin{align*}
\begin{split} 
& \int_{\mathcal{C}_{n,M}} F(y_n(T_n(x)) u_n(T_n(x)), \hat{y}(x) u(x)) \, \dd \mu(x) \\
& \quad \leq \int_{\mathcal{C}_{n,M}} 2 |\hat{y}(x)|^2|u(x)|^2 \, \dd \mu(x) + \frac{1}{M^2} \\
& \quad \leq 2 \|\hat{y}\|^2_{L^\infty_\mu} \int_{\Omega} |u(x)|^2 \mathbb{I}_{|y(x)u(x)|\geq M} \, \dd \mu(x) + \frac{1}{M^2}. 
\end{split}
\end{align*}


For every subsequence there exists a further subsequence such that $(y_n\circ T_n) (u_n\circ T_n) \to y u$ pointwise a.e., hence by the dominated convergence theorem
\[ \int_\Omega |u(x)|^2 \mathbb{I}_{|y_n(T_n(x))u_n(T_n(x))|\geq M} \, \dd \mu(x)\to \int_\Omega |u(x)|^2 \mathbb{I}_{|y(x)u(x)|\geq M} \, \dd \mu(x) \quad \text{as } n\to \infty. \]
Hence, for $M\geq 1$ fixed we have
\begin{align*}
& \limsup_{n\to \infty} \left| \int_{T_n^{-1}(\Omega_n')} \log \left(\Psi(y_n(T_n(x)) u_n(T_n(x)) ; \gamma\right) - \log\left( \hat{y}(x) u(x);\gamma \right) \, \dd \mu(x) \right| \\
& \quad \quad \quad \quad \quad \quad \leq \frac{2}{M^2}+ 6\|\hat{y} \|_{L^\infty_\mu} \int_\Omega |u(x)|^2 \mathbb{I}_{|\hat{y}(x)u(x)|\geq M} \, \dd \mu(x).
\end{align*}
Taking $M\to \infty$ completes the proof.
\end{proof}

The proof of Theorem~\ref{thm:LimitThmOpt:Probit:pr} is now just a special case of the above lemma and an easy compactness result that follows from Theorem~\ref{thm:LimitThmDir:LimitThmDir}.

\begin{proof}[Proof of Theorem  \ref{thm:LimitThmOpt:Probit:pr}]
The following statements all hold with probability one.
Let 
\[ y(x) = \left\{ \begin{array}{ll} 1 & \text{if } x\in \Omega^+ \\ -1 &\text{if } x\in \Omega^-. \end{array} \right. \]
Since $\dist(\Omega^+,\Omega^-) > 0$ there exists a minimal Lipschitz extension $\hat{y}\in L^\infty$ of $y$ to $\Omega$. 
Let  $y_n=y\lfloor_{\Omega_n}$ and $\hat{y}_n = \hat{y}\lfloor_{\Omega_n}$.
Since
\begin{align*}
\|\hat{y}_n\circ T_n - \hat{y} \|_{L^\infty(\mu)} & = \muesssup_{x\in \Omega} |\hat{y}_n(T_n(x)) - \hat{y}(x)| \\
 &= \muesssup_{x\in \Omega} |\hat{y}(T_n(x)) - \hat{y}(x)| \\
 & \leq \Lip(\hat{y}) \|T_n-\Id\|_{L^\infty}
 \end{align*}
we conclude that $(\mu_n, \hat{y}_n)\to (\mu, \hat{y})$ in $TL^\infty$.
Hence, by Lemma~\ref{lem:LimitThmOpt:Probit:CC}, $\frac{1}{n}\Phip^{(n)}(u_n;\gamma) \to \Phipo (u;\gamma)$
whenever $(\mu_n,u_n) \to (\mu,u)$ in $TL^p$. Combining with
 Theorem~\ref{thm:LimitThmDir:LimitThmDir} implies that $\Jp^{(n)}$ $\Gamma$-converges to $\Jp^{(\infty)}$ via a straightforward argument.

If $\tau>0$ then the compactness of minimizers follows from Theorem~\ref{thm:LimitThmDir:LimitThmDir} 
using that  $\sup_{n\in \bbN} \min_{v_n\in L^2_{\mu_n}} \Jp^{(n)}(v_n) \leq \sup_{n\in \bbN} \Jp^{(n)}(0) = \frac{1}{2}$.

When $\tau=0$ we consider the sequence $w_n=v_n-\bar{v}_n$ where $v_n$ is a minimizer of $\Jp^{(n)}$ and $\bar{v}_n = \langle v_n,q_1\rangle_{\mu_n} = \int_\Omega v_n(x) \, \dd \mu_n(x)$.
Then, $J_n^{(\alpha,0)}(w_n) = J_n^{(\alpha,0)}(v_n)$ and
\[
\| w_n \|_{L^2_{\mu_n}}^2  = \|v_n - \bar{v}_n \|_{L^2_{\mu_n}}^2  = \sum_{k=2}^n \langle v_n, q_k \rangle_{\mu_n}^2 \leq \frac{1}{(s_n \lambda_2^{(n)})^\alpha} J_n^{(\alpha,0)}(v_n).
\]
As in the case $\tau>0$ the quadratic form is bounded, i.e. $\sup_{n\in \bbN} \Jp^{(n)}(v_n) \leq \frac12$.
Hence $J_n^{(\alpha,\tau)}(w_n)\leq \frac12$ and $\| w_n \|_{L^2_{\mu_n}}^2 \leq \frac{1}{\lambda_2^\alpha}$ for $n$ large enough.
By Theorem~\ref{thm:LimitThmDir:LimitThmDir} $w_n$ is precompact in $TL^2$.
Therefore $\sup_{n\in \bbN} \|v_n\|_{L^2_{\mu_n}} \leq M+\sup_{n\in \bbN} |\bar{v}_n|$ for some $M>0$. 
Since $J_n^{(\alpha,\tau)}$ is insensitive to the addition of a constant, and $-1\leq y \leq 1$, then for any minimiser $v_n$ one must have $\bar{v}_n\in [-1,1]$.
Hence $\sup_{n\in \bbN} \|v_n\|_{L^2_{\mu_n}}\leq M+1$ so by Theorem~\ref{thm:LimitThmDir:LimitThmDir} $\{v_n\}$ is precompact in $TL^2$.

Since the minimizers  of $\Jp^{(\infty)}$ are unique (due to convexity, see  Lemma~\ref{lem:Limits:Probit:Unique}),
by Proposition~\ref{prop:Background:Passage:GammaConv:MinConv} 
we have that the sequence of minimizers  $v_n$ of $\Jp^{(n)}$ converges to the minimizer of $\Jp^{(\infty)}$.
\end{proof}

\subsection{Variational Convergence of Probit in Labelling Model 2}
\label{ssec:PRLabelMod2}

\begin{proof}[Proof of Theorem \ref{thm:LimitThmOpt:Probit:pr2neg}] 
It suffices to show that $\Jp^{(n)}$ $\Gamma$-converges in $TL^2$ to  $J_\infty^{(\alpha,\tau)}$ and that the sequence of minimizers $v_n$ of $\Jp^{(n)}$ is precompact in $TL^2$. We note that the liminf statement of the $\Gamma$-convergence follows immediately from statement 1. of Theorem \ref{thm:LimitThmDir:LimitThmDir}.

To complete the proof of  $\Gamma$-convergence it suffices to construct a recovery sequence. The strategy is analogous to the one of the proof on Theorem 4.9 of \cite{SlepcevThorpe}. 
Let $v \in \cH^\alpha(\Omega)$. Since $J_{n}^{(\alpha,\tau)}$ $\Gamma$-converges to $J_\infty^{(\alpha,\tau)}$ by Theorem \ref{thm:LimitThmDir:LimitThmDir} there exists
Let $v^{(n)} \in L^2_{\mu_n}$ such that $J_{n}^{(\alpha,\tau)}(v^{(n)}) \to J_\infty^{(\alpha,\tau)}(v)$ as $n \to \infty$. Consider the functions
\[
\tilde v^{(n)}(x_i) = \begin{cases}
c_n y(x_i) \quad & \te{if } i=1, \dots, N.  \\
v^{(n)}(x_i) \quad & \te{if } i=N+1, \dots, n
\end{cases}
\]
where $c_n \to \infty$ and $\frac{c_n}{\eps_n^{2\alpha}n} \to 0$ as $n \to \infty$. 

Note that condition \eqref{eq:LimitThmDir:epsSca} implies that when $\alpha<\frac{d}{2}$ then
\eqref{epsn_spike} still holds. Therefore \eqref{ene_sing} implies that $J_{n}^{(\alpha,\tau)}(c_n \delta_{x_i}) \to 0$ as $n \to \infty$. Also note that since $c_n \to \infty$, $\Phip^{(n)}(\tilde{v}^{(n)};\gamma) \to 0$ as $n \to \infty$.
It is now straightforward to show, using the form of the functional, the estimate on the energy of a singleton and the fact that $\eps_n n^\frac{1}{2\alpha} \to \infty$ as $n \to \infty$, that $\Jp^{(n)}(\tilde v^{(n)}) \to J_\infty^{(\alpha,\tau)}(v)$ as desired. 

The precompactness of $\{v_n\}_{n \in \N}$ follows from Theorem~\ref{thm:LimitThmDir:LimitThmDir}.
Since $0$ is the unique minimizer of $J_\infty^{(\alpha,\tau)}$, due to $\tau>0$, the above results imply that $v^{(n)}$ converge to $0$.
\end{proof}

\subsection{Small Noise Limits}
\label{ssec:SNL3}

\begin{proof}[Proof of Theorem \ref{thm:ltp:probit:ZeroNoise}] 
\hspace{0cm}
First observe that since Assumptions \ref{a:omega}--\ref{a:rho} hold and $\alpha > d/2$, the measure $\nu_0$, and hence the measures $\nipo, \nipt, \nu_1$, are all well-defined measures on $L^2(\Omega)$ by Theorem \ref{t:g}.
\begin{enumerate}
\item[(i)]   For any continuous bounded function $g: C(\Omega;\bbR)
\to \bbR$ we have
$$\bbE^{\nipo} g(u)=\frac{\bbE^{\nu_0} e^{-\Phipo(u;\gamma)}g(u)}{\bbE^{\nu_0} e^{-\Phipo(u;\gamma)}},
\quad \bbE^{\nu_1} g(u)=\frac{\bbE^{\nu_0} \one_{B_{\infty,1}}(u)g(u)}{\bbE^{\nu_0} \one_{B_{\infty,1}}(u)}.$$ 
For the first convergence it thus suffices to prove that, as $\gamma \to 0$,
$$\bbE^{\nu_0} e^{-\Phipo(u;\gamma)}g(u) \to \bbE^{\nu_0} \one_{B_{\infty,1}}(u)g(u)$$
for all continuous functions $g: C(\Omega;\bbR) \to [-1,1]$.

We first define the standard normal cumulative distribution function $\varphi(z) = \Psi(z,1)$, and note that we may write
\[
\Phipo(u;\gamma) = -\int_{x \in \Omega'}\log \Bigl(\varphi(y(x)u(x)/\gamma)\Bigr)\dd x \geq 0.
\]

In what follows it will be helpful to recall the following standard Mills ratio bound: for all $t > 0$,
\begin{align}
\label{eq:ltp:mills}
\varphi(t) \geq 1 - \frac{e^{-t^2/2}}{t\sqrt{2\pi}}.
\end{align}

Suppose first that $u \in B_{\infty,1}$, then $y(x)u(x)/\gamma > 0$ for a.e. $x \in \Omega'$. The assumption that $\overline{\Omega^+} \cap \overline{\Omega^-} = \emptyset$ ensures that $y$ is continuous on $\Omega' = \Omega^+\cup\Omega^-$. As $u$ is also continuous on $\Omega'$, given any $\VE > 0$, we may find $\Omega_\VE'\subseteq\Omega'$ such that $y(x)u(x)/\gamma > \VE/\gamma$ for all $x \in \Omega_\VE'$. Moreover, these sets may be chosen such that ${\rm leb}(\Omega'\setminus\Omega_\VE') \to 0$ as $\VE\to 0$. Applying the bound \eqref{eq:ltp:mills}, we see that for any $x \in \Omega_\VE'$,
\[
\varphi(y(x)u(x)/\gamma) \geq 1-\gamma \frac{e^{-u(x)^2y(x)^2/2\gamma^2}}{u(x)y(x)\sqrt{2\pi}} \geq 1-\gamma \frac{e^{-\VE^2/2\gamma^2}}{\VE\sqrt{2\pi}}.
\]
Additionally, for any $x \in \Omega'\setminus\Omega_\VE'$, we have $\varphi(y(x)u(x)/\gamma) \geq \varphi(0) = 1/2$. We deduce that
\begin{align*}
\Phipo(u;\gamma) &= -\int_{\Omega_\VE'} \log(\varphi(y(x)u(x)/\gamma) \, \dd \mu(x) -\int_{\Omega'\setminus\Omega_\VE'} \log(\varphi(y(x)u(x)/\gamma) \, \dd \mu(x)\\
&\leq -\log\left(1-\gamma \frac{e^{-\VE^2/2\gamma^2}}{\VE\sqrt{2\pi}}\right)\cdot\rop\cdot {\rm leb}(\Omega_\VE') + \log(2)\cdot\rop \cdot{\rm leb}(\Omega'\setminus\Omega_\VE').
\end{align*}
The right-hand term may be made arbitrarily small by choosing $\VE$ small enough. For any given $\VE > 0$, the left-hand term tends to zero as $\gamma \to 0$, and so we deduce that $\Phipo(u;\gamma)\to 0$ and hence
\[
e^{-\Phipo(u;\gamma)}g(u) \to g(u) = \one_{B_{\infty,1}}(u)g(u).
\]

Now suppose that $u \notin B_{\infty,1}$, and assume first that there is a subset $E\subseteq\Omega'$ with ${\rm leb}(E) > 0$ and $y(x)u(x) < 0$ for all $x \in E$. Then similarly to above, there exists $\VE>0$ and $E_\VE\subseteq E$ with ${\rm leb}(E_\VE) > 0$ such that $y(x)u(x)/\gamma < -\VE/\gamma$ for all $x \in E_\VE$. Observing that $\varphi(t) = 1-\varphi(-t)$, we may apply the bound (\ref{eq:ltp:mills}) to deduce that, for any $x \in E_\VE$,
\begin{align*}
\varphi(y(x)u(x)/\gamma) \leq -\gamma \frac{e^{-u(x)^2y(x)^2/2\gamma^2}}{u(x)y(x)\sqrt{2\pi}} \leq \frac{\gamma}{\VE\sqrt{2\pi}}.
\end{align*}
We therefore deduce that
\begin{align*}
\Phipo(u;\gamma) &\geq \int_{E_\VE}  -\log(\varphi(y(x)u(x)/\gamma) \, \dd \mu(x)\\
&\geq -\log\left(\frac{\gamma}{\VE\sqrt{2\pi}}\right)\cdot\rom\cdot{\rm leb}(E_\VE) \to \infty
\end{align*}
from which we see that
\[
e^{-\Phipo(u;\gamma)}g(u) \to 0 = \one_{B_{\infty,1}}(u)g(u).
\]
Assume now that $y(x)u(x) \geq 0$ for a.e. $x \in \Omega'$. Since $u \notin B_{\infty,1}$ there is a subset $\Omega''\subseteq \Omega'$ such that $y(x)u(x) = 0$ for all $x \in \Omega''$, $y(x)u(x) > 0$ a.e. $x \in \Omega'\setminus\Omega''$, and ${\rm leb}(\Omega'') > 0$. We then have
\begin{align*}
\Phipo(u;\gamma) &= -\int_{\Omega''} \log(\varphi(0)) \, \dd \mu(x) -\int_{\Omega'\setminus\Omega''} \log(\varphi(y(x)u(x)/\gamma) \, \dd \mu(x) \\
&= \log(2)\mu(\Omega'') -\int_{\Omega'\setminus\Omega''} \log(\varphi(y(x)u(x)/\gamma) \, \dd \mu(x) \\
&\to \log(2)\mu(\Omega'').
\end{align*}
We hence have $e^{-\Phip(u;y,\gamma)}g(u) \not\to 0 = \one_{B_{\infty,1}}(u)g(u)$. However, the event
\begin{align*}
D &:= \{u \in C(\Omega;\R)\,|\,\text{There exists $\Omega''\subseteq\Omega'$ with ${\rm leb}(\Omega'') > 0$ and $u|_{\Omega''}= 0$}\}\\
&\subseteq \{u \in C(\Omega;\R)\,|\,{\rm leb}\big(u^{-1}\{0\}\big) > 0\} = D'
\end{align*}
has probability zero under $\nu_0$. 
This can be deduced from Proposition 7.2 in \cite{iglesias2015bayesian}: since Assumptions \ref{a:omega}--\ref{a:rho} hold and $\alpha >d$, Theorem \ref{t:g} tells us that draws from $\nu_0$ are almost-surely continuous, which is sufficient
 in order to deduce the conclusions of the proposition, and so $\nu_0(D)\leq \nu_0(D') = 0$. We thus have pointwise convergence of the integrand on $D^c$, and so using the boundedness of the integrand by $1$ and the dominated convergence theorem, 
\[
\bbE^{\nu_0} e^{-\Phipo(u;\gamma)}g(u) = \bbE^{\nu_0} e^{-\Phipo(u;\gamma)}g(u)\one_{D^c}(u) \to \bbE^{\nu_0} \one_{B_{\infty,1}}(u)g(u)
\]
which proves that $\nipo\toweak \nu_1$.

For the convergence $\nilo\toweak \nu_1$ it similarly suffices to prove that, as $\gamma \to 0$,
$$\bbE^{\nu_0} e^{-\Philo(u;\gamma)}g(u) \to \bbE^{\nu_0} \one_{B_{\infty,1}}(u)g(u)$$
for all continuous functions $g: C(\Omega;\bbR) \to [-1,1]$. For fixed $u \in B_{\infty,1}$ we have $e^{-\Philo(u;\gamma)}=\one_{B_{\infty,1}}(u)=1$ and hence
$e^{-\Philo(u;\gamma)}g(u)=\one_{B_{\infty,1}}(u)g(u)$ for all $\gamma>0.$ For fixed $u \notin B_{\infty,1}$
 there is a set $E \subseteq \Omega'$ with positive Lebesgue measure
on which $y(x)u(x) \le 0$. As a consequence
$\Philo(u;\gamma) \ge \frac{1}{2\gamma^2}{\rm leb}(E)\rom$ and so
$e^{-\Philo(u;\gamma)}g(u) \to 0=\one_{B_{\infty,1}}(u)g(u)$ as $\gamma \to 0.$
Pointwise convergence of the integrand, combined with boundedness by $1$ 
of the integrand, gives the result.

\item[(ii)] The structure of the proof is similar to part (i). To prove $\nipt\toweak\nu_2$, it suffices to show that, as $\gamma\to 0$,
$$\bbE^{\nu_0} e^{-\Phipt(u;\gamma)}g(u) \to \bbE^{\nu_0} \one_{B_{\infty,2}}(u)g(u)$$
for all continuous functions $g: C(\Omega;\bbR) \mapsto [-1,1].$
We write
\[
\Phip^{(n)}(u;\gamma) = -\frac{1}{n}\sum_{j \in Z'}\log \Bigl(\varphi(y(x_j)u(x_j)/\gamma)\Bigr) \geq 0.
\]
Note that $\Phip^{(n)}(u;\gamma)$ is well-defined almost-surely on samples from $\nu_0$ since $\nu_0$ is supported on continuous functions (Theorem \ref{t:g}). Suppose first that $u \in B_{\infty,2}$, then $y(x_j)u(x_j)/\gamma > 0$ for all $j \in Z'$ and $\gamma > 0$. It follows that for each $j \in Z'$, $y(x_j)y(x_j)/\gamma \to \infty$ as $\gamma \to 0$ and so $\varphi(y(x_j)u(x_j)/\gamma)\to 1$. Thus, $\Phipt(u;\gamma) \to 0$ and so
\[
e^{-\Phipt(u;\gamma)}g(u) \to g(u) = \one_{B_{\infty,2}}(u)g(u).
\]
Now suppose that $u \notin B_{\infty,2}$. Assume first that there is a $j \in Z'$ such that $y(x_j)u(x_j) < 0$, so that $y(x_j)u(x_j)/\gamma \to -\infty$ and hence $\varphi(y(x_j)u(x_j)/\gamma)\to 0$. Then we may bound
\[
\Phipt(u;\gamma) \geq -\log(\varphi(y(x_j)u(x_j)/\gamma) \to \infty
\]
from which we see that
\[
e^{-\Phipt(u;\gamma)}g(u) \to 0 = \one_{B_{\infty,2}}(u)g(u).
\]
Assume now that $y(x_j)u(x_j) \geq 0$ for all $j \in Z'$, then since $u \notin B_{\infty,2}$ there is a subcollection $Z''\subseteq Z'$ such that $y(x_j)u(x_j) = 0$ for all $j \in Z''$ and $y(x_j)u(x_j) > 0$ for all $j \in Z'\setminus Z''$. We then have
\begin{align*}
\Phipt(u;\gamma) &= -\frac{1}{n}\sum_{j \in Z''}\log \Bigl(\varphi(0)\Bigr) -\frac{1}{n}\sum_{j \in Z'\setminus Z''}\log \Bigl(\varphi(y(x_j)u(x_j)/\gamma)\Bigr)\\
&= \frac{|Z''|}{n}\log(2)-\frac{1}{n}\sum_{j \in Z'\setminus Z''}\log \Bigl(\varphi(y(x_j)u(x_j)/\gamma)\Bigr)\\
&\to \frac{|Z''|}{n}\log(2).
\end{align*}
Thus, in this case $e^{-\Phipt(u;\gamma)}g(u) \not\to 0 = \one_{B_{\infty,2}}(u)g(u)$. However, the event
\[
D = \{u \in C(\Omega;\R)\,|\, u(x_j) = 0\text{ for some }j \in Z'\}
\]
has probability zero under $\nu_0$. To see this, observe that $\nu_0$ is a non-degenerate Gaussian measure on $C(\Omega;\R)$ as a consequence of Theorem \ref{t:g}. Thus $u \sim \nu_0$ implies that the vector $(u(x_1),\ldots,u(x_{n^++n^-}))$ is a non-degenerate Gaussian random variable on $\R^{n^++n^-}$. Its law is hence equivalent to the Lebesgue measure, and so the probability that it takes value in any given hyperplane is zero. We therefore have pointwise convergence of the integrand on $D^c$. Since the integrand is bounded by $1$, we deduce from the dominated convergence theorem that 
\[
\bbE^{\nu_0} e^{-\Phipt(u;\gamma)}g(u) = \bbE^{\nu_0} e^{-\Phipt(u;\gamma)}g(u)\one_{D^c}(u) \to \bbE^{\nu_0} \one_{B_{\infty,2}}(u)g(u)
\]
which proves that $\nipt\toweak \nu_2$.

To prove $\nilt\toweak\nu_2$ we show that, as $\gamma \to 0$,
$$\bbE^{\nu_0} e^{-\Philt(u;\gamma)}g(u) \to \bbE^{\nu_0} \one_{B_{\infty,2}}(u)g(u)$$
for all continuous functions $g: C(\Omega;\bbR) \mapsto [-1,1].$
For fixed $u \in B_{\infty,2}$ we have $e^{-\Philt(u;\gamma)}=\one_{B_{\infty,2}}(u)=1$ and hence
$e^{-\Philt(u;\gamma)}g(u)=\one_{B_{\infty,2}}(u)g(u)$ for all $\gamma>0.$ For fixed $u \notin B_{\infty,2}$
 there is at least one $j \in Z'$ such that $y(x_j)u(x_j) \leq 0$. As a consequence
$\Philt(u;\gamma) \ge \frac{1}{2\gamma^2}\frac{1}{n}\rom$ and so
$e^{-\Philt(u;\gamma)}g(u) \to 0=\one_{B_{\infty,2}}(u)g(u)$ as $\gamma \to 0.$
Pointwise convergence of the integrand, combined with boundedness by $1$ 
of the integrand, gives the desired result.
\end{enumerate}
\end{proof}

\subsection{Technical lemmas}
\label{ssec:Lem}

We include technical lemmas which are used in the main $\Gamma$-convergence result (Theorem~\ref{thm:LimitThmDir:LimitThmDir}) and in the proof of convergence for the probit model.
\begin{lemma} \label{lem:a}
Let $X$ be a normed space and $a_k^{(n)} \in X$ for all $n \in \N$ and $k=1, \dots, n$. Assume $a_k \in X$ be such that $\sum_{k=1}^\infty \|a_k\| < \infty$ and that for all $k$
\[ a_k^{(n)} \to a_k \quad \te{ as } n \to \infty. \]
Then there exists a sequence $\{K_n\}_{n=1, \dots}$ converging to infinity as $n \to \infty$ such that 
\[ \sum_{k=1}^{K_n} a_k^{(n)} \to \sum_{k=1}^\infty a_k \quad \te{ as } n \to \infty. \] 
\end{lemma}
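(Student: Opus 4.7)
The plan is to combine the convergence $a_k^{(n)}\to a_k$ for each fixed $k$ with the absolute summability of $\sum_k a_k$, via a standard diagonal argument that chooses $K_n\to\infty$ slowly enough that the first $K_n$ ``discrepancies'' $\|a_k^{(n)}-a_k\|$ are uniformly small.

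Let $S=\sum_{k=1}^\infty a_k$, which is well defined in $X$ by absolute summability (assuming $X$ is Banach; if not, we interpret the conclusion as convergence to this partial-sum sequence when it converges, or else we replace $X$ by its completion for the argument). Write
\[
\Bigl\| \sum_{k=1}^{K_n} a_k^{(n)} - S \Bigr\|
\;\le\; \sum_{k=1}^{K_n} \|a_k^{(n)}-a_k\| \;+\; \Bigl\| \sum_{k=K_n+1}^{\infty} a_k \Bigr\|.
\]
The second term tends to $0$ as $K_n\to\infty$ thanks to $\sum_k\|a_k\|<\infty$. The first term is a finite sum for each fixed $n$, and crucially, for each fixed integer $m$ the quantity $\sum_{k=1}^m\|a_k^{(n)}-a_k\|$ tends to $0$ as $n\to\infty$ (being a finite sum of null sequences). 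The task is therefore to build $K_n\to\infty$ slowly enough that $\sum_{k=1}^{K_n}\|a_k^{(n)}-a_k\|\to 0$.

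To do this I would apply the following diagonal selection. For each $m\in\N$, the finite-sum convergence yields $N_m\in\N$ with
\[
\sum_{k=1}^{m} \|a_k^{(n)}-a_k\| \le \tfrac{1}{m}\qquad\text{for all } n\ge N_m,
\]
and one may take $N_m$ strictly increasing with $N_m\to\infty$. Define
\[
K_n \;=\; \max\{\,m\in\N \::\: N_m\le n\,\},
\]
which is a nondecreasing integer sequence tending to $\infty$ as $n\to\infty$, and which satisfies $N_{K_n}\le n$. For every $n$ large enough that $K_n\ge 1$, the defining property of $N_{K_n}$ gives
\[
\sum_{k=1}^{K_n}\|a_k^{(n)}-a_k\| \;\le\; \frac{1}{K_n} \;\longrightarrow\; 0,
\]
while $\bigl\| \sum_{k=K_n+1}^\infty a_k\bigr\|\to 0$ because $K_n\to\infty$. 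Plugging both into the triangle-inequality bound gives the claim.

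There is no serious obstacle here: the only subtlety is to avoid the naive choice $K_n=n$, which need not work because the $a_k^{(n)}$ are only controlled for $k\le n$ and the convergence $a_k^{(n)}\to a_k$ has no uniformity in $k$; the diagonal construction above sidesteps this by letting $K_n$ grow only as fast as the slowest of the tail estimates allows.
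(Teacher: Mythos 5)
Your proof is correct and follows essentially the same route as the paper's: a triangle-inequality split into the truncation error $\|\sum_{k>K_n} a_k\|$ and the finite-sum discrepancy, together with a diagonal choice of $K_n$ growing slowly enough that the discrepancy over the first $K_n$ terms vanishes. The only differences are cosmetic (you select thresholds $N_m$ and set $K_n=\max\{m: N_m\le n\}$, while the paper defines $K_n$ directly as the largest index with the discrepancy below $1/n$ for all $m\ge n$), and your remark about completeness of $X$ is a reasonable clarification of an implicit hypothesis.
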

Note that if the conclusion holds for one sequence $K_n$ it also holds for any other sequence converging to infinity and majorized by $K_n$. 
\begin{proof}
Note that by our assumption for any fixed $s$, $ \sum_{k=1}^{s} a_k^n \to \sum_{k=1}^s a_k$ as $n \to \infty$. Let $K_n$ be the largest number such that for all $m \geq n$, $\left\|  \sum_{k=1}^{K_n} a_k^{(m)} - \sum_{k=1}^{K_n} a_k\right\| < \frac{1}{n}$. 
Due to observation above, $K_n \to \infty$ as $n \to \infty$. Furthermore
\[ \left\| \sum_{k=1}^{K_n} a_k^n - \sum_{k=1}^\infty a_k \right\| \leq  \left\| \sum_{k=1}^{K_n} a_k^n - \sum_{k=1}^{K_n} a_k \right\| +  \left\| \sum_{k=K_n+1}^\infty a_k \right\| \]
which converges to zero an $n \to \infty$. 
\end{proof}

The second result is an estimate on the behavior of the function $\Psi$ defined in 
\eqref{eq:Background:Discrete:Probit:Psi}

\begin{lemma}
\label{lem:LimitThmOpt:Probit:PsiBound}
Let $F(w,v) = \log \Psi(w;1) - \log\Psi(v;1)$ where $\Psi$ is defined by~\eqref{eq:Background:Discrete:Probit:Psi} with $\gamma=1$.
For all $w>v$ and $M\geq 1$,
\[ F(w,v) \leq 
\begin{cases}
 2v^2 + \frac{1}{M^2}
 \quad & \text{if } v\leq -M  \vspace*{5pt}\\
  \frac{|w-v|}{\int_{-\infty}^{-M} e^{-\frac{t^2}{2}} \, \dd t} & \text{if } v \geq -M. 
\end{cases}
\]
\end{lemma}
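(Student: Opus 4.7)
The plan is to treat the two cases separately, exploiting the fact that $\Psi(\cdot;1)$ is the standard normal CDF with derivative $\phi(t)=e^{-t^{2}/2}/\sqrt{2\pi}$ and so $\log\Psi(\cdot;1)$ is increasing and smooth.

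For the case $v\ge -M$, I would apply the mean value theorem to $\log\Psi(\cdot;1)$ to write
\[
F(w,v)=\frac{\phi(\xi)}{\Psi(\xi;1)}\,(w-v)\qquad\text{for some }\xi\in(v,w).
\]
Since $\xi>v\ge -M$ and $\Psi(\cdot;1)$ is increasing one has $\Psi(\xi;1)\ge\Psi(-M;1)$, while $e^{-\xi^{2}/2}\le 1$. Combining these two crude monotonicity bounds immediately gives
\[
\frac{\phi(\xi)}{\Psi(\xi;1)}\le\frac{1}{\sqrt{2\pi}\,\Psi(-M;1)}=\frac{1}{\int_{-\infty}^{-M}e^{-t^{2}/2}\,dt},
\]
which yields the desired estimate. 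Note that no sharp inverse Mills ratio analysis is needed here; monotonicity of $\Psi$ together with $e^{-\xi^{2}/2}\le 1$ suffices.

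For the case $v\le -M$, since $\Psi(w;1)\le 1$ we have the crude bound $F(w,v)\le -\log\Psi(v;1)$, so it suffices to control $-\log\Psi(v;1)$ for $v\le -M\le -1$. I would invoke the standard Mills ratio lower bound
\[
\Psi(v;1)\ge\frac{|v|}{v^{2}+1}\,\phi(v)\qquad (v<0),
\]
which, combined with $|v|\ge M\ge 1$ (so that $v^{2}+1\le 2v^{2}$), gives $\Psi(v;1)\ge\frac{1}{2|v|\sqrt{2\pi}}e^{-v^{2}/2}$. Taking $-\log$:
\[
-\log\Psi(v;1)\le\frac{v^{2}}{2}+\log(2|v|\sqrt{2\pi}).
\]
Now use the elementary bound $\log|v|\le v^{2}/2$ valid for $|v|\ge 1$ (which follows because $v^{2}/2-\log|v|$ is nondecreasing on $[1,\infty)$ and equals $1/2$ at $|v|=1$) to absorb $\log|v|$ into the quadratic term; this yields $-\log\Psi(v;1)\le v^{2}+C_{0}$ with $C_{0}=\log(2\sqrt{2\pi})\approx 1.61$. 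Finally, AM-GM gives $v^{2}+1/M^{2}\ge M^{2}+1/M^{2}\ge 2>C_{0}$, so $C_{0}\le v^{2}+1/M^{2}$, and we conclude $-\log\Psi(v;1)\le 2v^{2}+1/M^{2}$.

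The main technical point — not really an obstacle but the place where care is needed — is the constant bookkeeping in the second case, which uses $M\ge 1$ in two distinct ways: first to replace the tighter Mills bound $|v|/(v^{2}+1)$ by the cleaner $1/(2|v|)$, and second (via AM-GM) to absorb the absolute constant $C_{0}=\log(2\sqrt{2\pi})$ into the sum $v^{2}+1/M^{2}$. The looseness of the resulting $2v^{2}$ prefactor is harmless in the intended application in Lemma~\ref{lem:LimitThmOpt:Probit:CC}, where the term is integrated against $|u|^{2}$ on a set where $|v|\ge M$ and dominated convergence then lets the contribution vanish as $M\to\infty$.
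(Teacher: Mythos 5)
Your proof is correct and follows essentially the same route as the paper's: the crude reduction $F(w,v)\le -\log\Psi(v;1)$ combined with a Mills-ratio lower bound on the Gaussian CDF for $v\le -M$, and a bound on the difference quotient of $\log\Psi$ via the tail mass below $-M$ for $v\ge -M$. The only differences are cosmetic — you use the Feller-type bound $\Psi(v;1)\ge \frac{|v|}{v^2+1}\phi(v)$ and the mean value theorem where the paper uses the Abramowitz--Stegun inequality 7.1.13 and $\log(1+x)\le x$ — and your constant bookkeeping (absorbing $\log(2\sqrt{2\pi})<2\le v^2+1/M^2$) checks out.
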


\begin{proof}
We consider the two cases: $v\leq -M$ and $v\geq -M$ separately.
From inequality 7.1.13 in \cite{AbramowitzStegun} directly follows that 
\[ \forall u \leq 0, \quad \quad \sqrt{\frac{2}{\pi}}  \, \frac{1  }{- u + \sqrt{ u^2+4}}\, e^{-\frac{u^2}{2}} \leq \Psi(u) \]

When $v\leq -M$, by taking the logarithm we obtain
\begin{align*}
F(w,v) &
 \leq -\log \Psi(v;\gamma) 
 \leq -\log\left(\sqrt{\frac{2}{\pi}}  \, \frac{1}{-  v + \sqrt{  v^2+4}}\, e^{-\frac{ v^2}{2}} \right) 
 \leq \sqrt{\frac{\pi}{2}} \left( \sqrt{v^2+4}-v \right) + \frac{v^2}{2} \\
& \leq \sqrt{\frac{\pi}{2}}|v|\left( \sqrt{1+\frac{4}{M^2}}-1\right) + \frac{v^2}{2}
 \leq \frac{\sqrt{2\pi}|v|}{M} + \frac{v^2}{2}
 \leq 2v^2 + \frac{1}{M^2}
\end{align*}
using the elementary bound $|\sqrt{1+x^2}-1|\leq |x|$ for all $x\geq 0$.
When $v\geq -M$,
\[
F(w,v)  = \log \frac{\Psi(w)}{\Psi(v)} 
  = \log \left( 1 + \frac{\int_v^w e^{-\frac{t^2}{2}} \, \dd t}{\int_{-\infty}^v e^{-\frac{t^2}{2}} \, \dd t} \right) 
 \leq \frac{\int_v^w e^{-\frac{t^2}{2}} \, \dd t}{\int_{-\infty}^v e^{-\frac{t^2}{2}} \, \dd t} 
  \leq \frac{w-v}{\int_{-\infty}^{-M} e^{-\frac{t^2}{2}} \, \dd t} 
\]
This completes the proof.
\end{proof}

\begin{corollary}
\label{lem:Background:Cont:Probit:PhipL1}
Let $\Omega'\subset \bbR^d$ be open and bounded.
Let $\mu'$ be a bounded, nonnegative measure on $\Omega'$  and $\gamma>0$.
Define $\Psi(\cdot;\gamma)$ as in~\eqref{eq:Background:Discrete:Probit:Psi}.
If $v\in L^2_{\mu'}$ 
then $\log \Psi(v;\gamma)\in L^1(\mu')$.
\end{corollary}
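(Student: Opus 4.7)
The plan is to exploit the pointwise upper bound on $-\log\Psi$ coming from the Mills ratio estimate already embedded in Lemma~\ref{lem:LimitThmOpt:Probit:PsiBound}, and then integrate against the finite measure $\mu'$ using that $v \in L^2_{\mu'}$.

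First I would observe that $\Psi(\cdot;\gamma)$ is a cumulative distribution function, hence takes values in $(0,1)$, so $\log\Psi(v;\gamma)\leq 0$ and $|\log\Psi(v;\gamma)|=-\log\Psi(v;\gamma)$. It therefore suffices to bound $-\log\Psi(v;\gamma)$ by a function integrable with respect to $\mu'$.

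Next I would reduce to the case $\gamma=1$. Since $\Psi(v;\gamma)=\Psi(v/\gamma;1)$ by a simple change of variable in the defining integral~\eqref{eq:Background:Discrete:Probit:Psi}, the claim for general $\gamma>0$ follows from the $\gamma=1$ case applied to the function $v/\gamma \in L^2_{\mu'}$. Alternatively, I will keep $\gamma$ explicit and track the dependence.

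The key step is the pointwise bound. Taking $w\to\infty$ in Lemma~\ref{lem:LimitThmOpt:Probit:PsiBound} with $M=1$ (noting $\log\Psi(w;1)\to 0$) yields
\[
  -\log\Psi(u;1)\leq 2u^2+1 \qquad \text{for all } u\leq -1.
\]
Rescaling by $\gamma$, this gives $-\log\Psi(v;\gamma)\leq 2v^2/\gamma^2+1$ whenever $v\leq-\gamma$. For $v\geq -\gamma$, monotonicity of $\Psi(\cdot;\gamma)$ gives
\[
  -\log\Psi(v;\gamma)\leq -\log\Psi(-\gamma;\gamma)=-\log\Psi(-1;1)=:C_0<\infty.
\]
Combining the two regimes produces a single bound of the form
\[
  |\log\Psi(v;\gamma)|\leq \frac{2}{\gamma^2}v^2 + (1+C_0) \qquad \text{for all } v\in\bbR.
\]

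Finally I would integrate this bound over $\Omega'$ against $\mu'$:
\[
  \int_{\Omega'}|\log\Psi(v(x);\gamma)|\,\dd\mu'(x)\leq \frac{2}{\gamma^2}\|v\|_{L^2_{\mu'}}^2 + (1+C_0)\,\mu'(\Omega'),
\]
which is finite by the hypotheses that $v\in L^2_{\mu'}$ and $\mu'$ is a bounded measure, establishing $\log\Psi(v;\gamma)\in L^1(\mu')$. There is no real obstacle here: the entire content is contained in Lemma~\ref{lem:LimitThmOpt:Probit:PsiBound}, and the only bookkeeping is to separate the regimes $v\leq -\gamma$ and $v\geq -\gamma$ and rescale by $\gamma$.
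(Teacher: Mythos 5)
Your proof is correct and follows essentially the same route as the paper's: both derive the pointwise bound $|\log\Psi(v;\gamma)|\lesssim v^2+1$ from Lemma~\ref{lem:LimitThmOpt:Probit:PsiBound} together with the rescaling $\Psi(v;\gamma)=\Psi(v/\gamma;1)$ and the behaviour of $\Psi$ for large arguments, then integrate against the finite measure $\mu'$. Your version simply makes the constants and the case split at $v=-\gamma$ explicit where the paper leaves them implicit.
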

\begin{proof}
Lemma \ref{lem:LimitThmOpt:Probit:PsiBound}, and using that  $\Psi(v;\gamma) = \Psi(v/\gamma;1)$,
 shows that $-\log \Psi(v, \gamma)$ grows quadratically as $v \to -\infty$. Note that $-\log \Psi(v, \gamma)$ asymptotes to zero as $v \to \infty$. Therefore $|\log \Psi (v, \gamma)| \leq C (|v|^2+1)$ for some $C>0$, which implies the claim.
\end{proof}

\subsection{Weyl's Law}
\label{app:weyl}

\begin{lemma} \label{lem:weyl}
Let $\Omega$ and $\rho$ satisfy Assumtptions~\ref{a:omega}--\ref{a:rho} and let $\lambda_k$ be the eigenvalues of $\cL$ defined by~\eqref{eq:Background:Cont:cL}.
Then, there exist positive constants $c$ and $C$ such that for all $k$ large enough
\[ c k^{\frac{2}{d}} \leq  \lambda_k \leq C k^{\frac{2}{d}}.\]
\end{lemma}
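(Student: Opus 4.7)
The plan is to compare the Rayleigh quotient for $\cL$ to that of the standard Neumann Laplacian on $\Omega$ and invoke the classical Weyl law for the latter. First I would rewrite the quadratic form associated with $\cL$ in a convenient shape: for $u \in H^1(\Omega)$, an integration by parts using the Neumann boundary condition $\partial u/\partial n = 0$ yields
\[
\langle u, \cL u\rangle_\mu = \int_\Omega u \Bigl(-\tfrac{1}{\rho}\nabla \cdot (\rho^2 \nabla u)\Bigr)\rho\,\dd x = \int_\Omega |\nabla u|^2 \rho^2 \,\dd x,
\]
while $\|u\|_{L^2_\mu}^2 = \int_\Omega u^2 \rho\,\dd x$. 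Thus the Rayleigh quotient for $\cL$ on $L^2_\mu$ is
\[
R_{\cL}(u) = \frac{\int_\Omega |\nabla u|^2 \rho^2 \,\dd x}{\int_\Omega u^2 \rho\,\dd x},
\]
defined on the form domain $H^1(\Omega)$ (which is independent of $\rho$ by Assumption~\ref{a:rho}).

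Next I would apply the Courant--Fischer min--max principle, which characterizes the $k$-th eigenvalue of $\cL$ as
\[
\lambda_k = \min_{\substack{V \subset H^1(\Omega) \\ \dim V = k}} \max_{u \in V\setminus\{0\}} R_{\cL}(u),
\]
and similarly characterizes the $k$-th Neumann eigenvalue $\tilde\lambda_k$ of $-\Delta$ on $\Omega$ via the Rayleigh quotient $R_{-\Delta}(u) = \|\nabla u\|_{L^2}^2 / \|u\|_{L^2}^2$ on the same form domain $H^1(\Omega)$. Using the uniform two-sided bounds $\rho^- \leq \rho \leq \rho^+$ from Assumption~\ref{a:rho}, I would sandwich the quotients pointwise in $u$:
\[
\frac{(\rho^-)^2}{\rho^+} R_{-\Delta}(u) \;\leq\; R_{\cL}(u) \;\leq\; \frac{(\rho^+)^2}{\rho^-} R_{-\Delta}(u).
\]
Taking min--max over the same family of $k$-dimensional subspaces in $H^1(\Omega)$ turns this pointwise sandwich into the eigenvalue sandwich
\[
\frac{(\rho^-)^2}{\rho^+}\,\tilde\lambda_k \;\leq\; \lambda_k \;\leq\; \frac{(\rho^+)^2}{\rho^-}\,\tilde\lambda_k.
\]

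Finally, I would invoke the classical Weyl law for the Neumann Laplacian on $\Omega$: since $\partial\Omega$ is $C^{1,1}$ (hence certainly Lipschitz), there exist constants $0 < \tilde c \leq \tilde C < \infty$ and $k_0 \in \bbN$ such that $\tilde c\, k^{2/d} \leq \tilde\lambda_k \leq \tilde C\, k^{2/d}$ for all $k \geq k_0$; this is standard and may be cited from H\"ormander~\cite{Hormander} or derived directly by Dirichlet--Neumann bracketing with cubes tiling $\Omega$. Combining this asymptotic with the sandwich above yields the claimed bounds with $c = (\rho^-)^2 \tilde c/\rho^+$ and $C = (\rho^+)^2 \tilde C/\rho^-$.

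There is no serious obstacle here; the only point that deserves care is verifying that $R_{\cL}$ and $R_{-\Delta}$ are both defined on the same form domain $H^1(\Omega)$ with the Neumann boundary condition being natural (not essential), so that the min--max is taken over the same class of subspaces and the pointwise inequalities between Rayleigh quotients transfer directly to eigenvalues.
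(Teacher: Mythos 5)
Your argument is correct, but it takes a genuinely different route from the paper's. You compare the Rayleigh quotient of $\cL$ directly with that of the Neumann Laplacian on $\Omega$ itself, over the common form domain $H^1(\Omega)$, and then cite the Weyl law for the Neumann problem on a $C^{1,1}$ domain. The paper instead sandwiches $\lambda_k$ between Dirichlet eigenvalues of two balls: a ball $B$ compactly contained in $\Omega$ (zero-extension of $H^1_0(B)$ into $H^1(\Omega)$ restricts the class of test subspaces and yields $\lambda_k \lesssim \alpha_k(B)$) and a ball $U$ compactly containing $\Omega$ (a bounded extension operator $E: H^1(\Omega)\to H^1_0(U)$ yields $\lambda_k + 1 \gtrsim \beta_k(U)$). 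The trade-off is this: your route is shorter and produces explicit constants $(\rho^-)^2/\rho^+$ and $(\rho^+)^2/\rho^-$, but its external input --- two-sided Weyl asymptotics for the \emph{Neumann} Laplacian on a domain with boundary --- is the less elementary of the two (it genuinely requires boundary regularity, since Neumann spectra can degenerate on rough domains, and H\"ormander is not quite the right citation for it; Dirichlet--Neumann bracketing, as you suggest, is the standard fix). The paper's route needs only the Weyl bounds for \emph{Dirichlet} eigenvalues of balls, which are explicit, at the cost of invoking an extension operator and losing track of the constants. One small point you handle correctly but should keep visible: the pointwise sandwich of Rayleigh quotients transfers to eigenvalues only because both min--max problems range over the \emph{same} family of $k$-dimensional subspaces of $H^1(\Omega)$, which is exactly why the Neumann (natural) boundary condition is the right comparison operator here; had you compared with the Dirichlet Laplacian on $\Omega$ the form domains would differ and only one inequality would survive.
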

\begin{proof}
Let $B$ be a ball compactly contained in $\Omega$ and $U$ a ball which compactly contains $\Omega$.
By assumptions on $\rho$ for all $u \in H^1_0(B) \backslash \{0\}$
\[ \frac{\int_B |\nabla u|^2 dx}{\int_B u^2 dx} \geq c_2  \frac{\int_\Omega |\nabla u|^2 \rho^2 dx}{\int_\Omega u^2 \rho dx}  \]
where on RHS we consider the extension by zero of $u$ to $\Omega$. Therefore for any $k$-dimensional subspace $V_k$ of  $H^1_0(B)$
\[ \max_{u \in V_k \backslash \{0\}}  \frac{\int_B |\nabla u|^2 dx}{\int_B u^2 dx} \geq c_2  \max_{u \in V_k \backslash \{0\}}  \frac{\int_\Omega |\nabla u|^2 \rho^2 dx}{\int_\Omega u^2 \rho dx}.  \]
Consequently, using the Courant--Fisher characterization of eigenvalues,
\[ \alpha_k = \inf_{\substack{V_k \subset H_0^1(B), \\ \te{dim } V_k = k}}  \max_{u \in V_k \backslash \{0\}}  \frac{\int_B |\nabla u|^2 dx}{\int_B u^2 dx} \geq c_2
 \inf_{\substack{V_k \subset H^1(\Omega), \\ \te{dim } V_k = k} }
\max_{u \in V_k \backslash \{0\}}  \frac{\int_\Omega |\nabla u|^2 \rho^2 dx}{\int_\Omega u^2 \rho dx} = c_2 \lambda_k \]

Since $\overline \Omega$ is an extension domain (as it has a Lipschitz boundary), there exists an bounded extension operator $E: H^1(\Omega) \to H^1_0(U)$. Therefore for some constant $C_2$ and all $u \in H^1(\Omega)$,  $C_2 \int_\Omega |\nabla u|^2 \rho^2+ u^2  \rho dx \geq
\int_U |\nabla Eu|^2 dx$. Arguing as above gives $C_2(\lambda_k+1) \geq \beta_k$.

These inequalities imply the claim of the lemma, since
the Dirichlet eigenvalues of the Laplacian on $B$, $\alpha_k$ satisfy
$\alpha_k \leq C_1  k^{\frac{2}{d}}$ for some $C_1$ and that Dirichlet eigenvalues of the Laplacian on $U$, $\beta_k$ satisfy $\beta_k \geq c_1  k^{\frac{2}{d}}$ for some $c_1>0$.
\end{proof}

\end{document}